\newtheorem{theorem}{Theorem}
\newtheorem{cor}{Corollary}
\newtheorem{lemma}{Lemma}
\newtheorem{ass}{Assumption}
\newtheorem{definition}[cor]{Definition}
\newcommand{\Norm}[1]{\left\|#1\right\|}
\def \E {\mathbb{E}}
\def \R {\mathbb{R}}
\def \d {\mathbf{d}}
\def \h {\mathbf{h}}
\def \u {\mathbf{u}}
\def \v {\mathbf{v}}
\def \w {\mathbf{w}}
\def \x {\mathbf{x}}
\def \y {\mathbf{y}}
\def \A {\mathcal{A}}
\def \LL {\mathcal{L}}
\icmltitlerunning{Optimal Algorithms for Stochastic Multi-Level Compositional Optimization}
\begin{document}
\twocolumn[
\icmltitle{Optimal Algorithms for Stochastic Multi-Level Compositional Optimization}

% It is OKAY to include author information, even for blind
% submissions: the style file will automatically remove it for you
% unless you've provided the [accepted] option to the icml2022
% package.

% List of affiliations: The first argument should be a (short)
% identifier you will use later to specify author affiliations
% Academic affiliations should list Department, University, City, Region, Country
% Industry affiliations should list Company, City, Region, Country

% You can specify symbols, otherwise they are numbered in order.
% Ideally, you should not use this facility. Affiliations will be numbered
% in order of appearance and this is the preferred way.
%\icmlsetsymbol{equal}{*}

\begin{icmlauthorlist}
\icmlauthor{Wei Jiang}{yyy}
\icmlauthor{Bokun Wang }{xxx}
\icmlauthor{Yibo Wang}{yyy}
\icmlauthor{Lijun Zhang}{yyy}
\icmlauthor{Tianbao Yang}{xxx}
\end{icmlauthorlist}
\icmlaffiliation{yyy}{National Key Laboratory for Novel Software Technology, Nanjing University, Nanjing, China}
\icmlaffiliation{xxx}{Department of
Computer Science, The University of Iowa, Iowa City, USA}
\icmlcorrespondingauthor{Lijun Zhang}{zhanglj@lamda.nju.edu.cn}
\icmlcorrespondingauthor{Tianbao Yang}{tianbao-yang@uiowa.edu}

% You may provide any keywords that you
% find helpful for describing your paper; these are used to populate
% the "keywords" metadata in the PDF but will not be shown in the document
\icmlkeywords{Machine Learning, ICML}

\vskip 0.3in
]

% this must go after the closing bracket ] following \twocolumn[ ...

% This command actually creates the footnote in the first column
% listing the affiliations and the copyright notice.
% The command takes one argument, which is text to display at the start of the footnote.
% The \icmlEqualContribution command is standard text for equal contribution.
% Remove it (just {}) if you do not need this facility.

\printAffiliationsAndNotice{}  % leave blank if no need to mention equal contribution
%\printAffiliationsAndNotice{\icmlEqualContribution} % otherwise use the standard text.

\begin{abstract}
In this paper, we investigate the problem of stochastic multi-level compositional optimization, where the objective function is a composition of multiple smooth but possibly non-convex functions. Existing methods for solving this problem either suffer from sub-optimal sample complexities or need a huge batch size. To address these limitations, we propose a Stochastic Multi-level Variance Reduction method (SMVR), which achieves the optimal sample complexity of $\mathcal{O}\left(1 / \epsilon^{3}\right)$ to find an $\epsilon$-stationary point for non-convex objectives. Furthermore, when the objective function satisfies the convexity or Polyak-Łojasiewicz (PL) condition, we propose a stage-wise variant of SMVR and improve the sample complexity to $\mathcal{O}\left(1 / \epsilon^{2}\right)$ for convex functions or $\mathcal{O}\left(1 /\left(\mu\epsilon\right)\right)$ for non-convex functions satisfying the $\mu$-PL condition. The latter result implies the same complexity for $\mu$-strongly convex functions. To make use of adaptive learning rates, we also develop Adaptive SMVR, which achieves the same complexities but converges faster in practice. All our complexities match the lower bounds not only in terms of $\epsilon$ but also in terms of $\mu$ (for PL or strongly convex functions), without using a large batch size in each iteration.
\end{abstract}

\section{Introduction}
We consider the stochastic multi-level compositional optimization problem:
\begin{equation} \label{prob:1}
	\min_{\w\in \R^d} F(\w) = f_K\circ f_{K-1} \circ \cdots \circ f_1(\w),
\end{equation}
where $f_i:\R^{d_{i-1}}\mapsto \R^{d_i}$, $i=1,\dotsc,K$ (with $d_K = 1$ and $d_0 = d$). Only noisy evaluations of each layer function $f_i(\cdot;\xi)$ and its gradient $\nabla f_i(\cdot;\xi)$ can be accessed, where $\xi$ denotes a sample drawn from the oracle such that:
\begin{align*}
    \E_{\xi} \left[f_i(\cdot;\xi)\right] = f_i(\cdot), \quad \E_{\xi}\left[\nabla f_i(\cdot;\xi)\right] = \nabla f_i(\cdot).
\end{align*}
In machine learning, $\w$ often represents the parameter of a predictive model, and $F$ denotes the loss of that model, with $\xi$ representing a training sample (or a batch of samples). Problem~(\ref{prob:1}) finds its application in many tasks, such as reinforcement learning \citep{Dann2014PolicyEW}, robust learning~\cite{li2021tilted},  multi-step model-agnostic meta-learning~\citep{Ji2020MultiStepMM}, risk-averse portfolio optimization  \citep{Bruno2016RiskNA,Shapiro2009LecturesOS} and risk management~\citep{Cole2013HowDR, Dentcheva2015StatisticalEO}. 

Our goal is to solve problem~(\ref{prob:1}) with the optimal sample complexity, which is a commonly-used measure in stochastic optimization. Sample complexity characterizes the number of samples needed to find an $\epsilon$-stationary point for  non-convex functions, i.e., $\Norm{\nabla F(\w)} \leq \epsilon$\footnote{In some literature, the measure $\Norm{\nabla F(\w)}^{2} \leq \epsilon$ is used instead. Note that the complexity $\mathcal{O}\left(1 / \epsilon^{\alpha}\right)$ for $\Norm{\nabla F(\w)}^{2} \leq \epsilon$ implies a complexity $\mathcal{O}\left(1 / \epsilon^{2\alpha}\right)$ for $\Norm{\nabla F(\w)} \leq \epsilon$  \citep{Zhang2021MultiLevelCS}. For fair comparison, we would align the complexity measure of each method under the same criteria $\Norm{\nabla F(\w)} \leq \epsilon$, when discussing related algorithms.}, or an $\epsilon$-optimal point for convex functions, i.e., $F(\w)-\inf_{\w} F(\w) \leq \epsilon$. Problem~(\ref{prob:1}) reduces to the classic one-level stochastic optimization problem when $K=1$, and it is also known as the two-level compositional optimization for $K=2$.

\begin{table*}[t]
\caption{Summary of results for finding an $\epsilon$-stationary or $\epsilon$-optimal point. Here, CVX means convex, Mono. \& CVX means each layer function is monotone and convex, SC means $\mu$-strongly convex and PL means the $\mu$-PL condition (weaker than $\mu$-strongly convex).}
\label{sample}
\vskip 0.15in
\begin{center}
\begin{small}
\begin{sc}
\resizebox{0.95\textwidth}{!}{
\begin{tabular}{lccr}
\toprule
Method & Assumptions &   Complexity & Batch size  \\
\midrule
A-TSCGD \citep{Yang2019MultilevelSG}    & Smooth& $\mathcal{O}\left(1 / \epsilon^{(7+K)/2}\right)$ &  $\mathcal{O}\left(1 \right)$ \\
A-TSCGD \citep{Yang2019MultilevelSG}    & Smooth + SC & $\mathcal{O}\left(1 / \epsilon^{(3+K)/4}\right)$ &  $\mathcal{O}\left(1 \right)$ \\
NLASG \citep{balasubramanian2020stochastic} & Smooth&   $\mathcal{O}\left(1 / \epsilon^{4}\right)$& $\mathcal{O}\left(1 \right)$ \\
SCSC \citep{chen2021solving} & Smooth&  $\mathcal{O}\left(1 / \epsilon^{4}\right)$&$\mathcal{O}\left(1 \right) $ \\
Nested-SPIDER  \citep{Zhang2021MultiLevelCS}  & Smooth&   $\mathcal{O}\left(1 / \epsilon^{3}\right)$ & $\mathcal{O}\left(1 / \epsilon\right)$ \\
SSD \citep{Zhang2020OptimalAF}     & Smooth + Mono. \& CVX &  $\mathcal{O}\left(1 / \epsilon^{2}\right)$& $\mathcal{O}\left(1 \right)$  \\
SSD \citep{Zhang2020OptimalAF} & Smooth + SC &  $\mathcal{O}\left(1 / (\mu^2\epsilon) \right)$& $\mathcal{O}\left(1 \right)$ \\
\midrule
\textbf{SMVR (This work)}   & Smooth &  $\mathcal{O}\left(1 / \epsilon^{3}\right)$& $\mathcal{O}\left(1 \right)$ \\
\textbf{Stage-wise SMVR (This work)}   & Smooth + CVX &  $\mathcal{O}\left(1 / \epsilon^{2}\right)$& $\mathcal{O}\left(1 \right)$  \\
\textbf{Stage-wise SMVR (This work)}  & Smooth + PL &  $\mathcal{O}\left(1 / (\mu\epsilon) \right)$& $\mathcal{O}\left(1 \right)$ \\
\bottomrule
\end{tabular}}
\end{sc}
\end{small}
\end{center}
\vskip -0.1in
\end{table*}

For one-level and two-level non-convex problems, there exist single-loop algorithms, such as STORM \citep{Cutkosky2019MomentumBasedVR} and RECOVER \citep{qi2021online}, that can achieve the optimal $\mathcal{O}\left(1 / \epsilon^{3}\right)$ sample complexity for finding an $\epsilon$-stationary solution without using large batches. However, for multi-level problems, the errors of Jacobian and function value estimators accumulate with the level becoming deeper, making the problem much harder. Existing multi-level methods either suffer from sub-optimal complexities~\citep{Yang2019MultilevelSG, balasubramanian2020stochastic, chen2021solving} or require a huge and increasing batch size~\citep{Zhang2021MultiLevelCS}. When the objective function is convex or strongly convex, \citet{Zhang2020OptimalAF} prove a sample complexity of $\mathcal{O}\left(1 / \epsilon^{2}\right)$ or $\mathcal{O}(1/(\mu^2\epsilon))$. However, their analysis requires that each layer function $f_i$ is monotone and convex, and their complexity for $\mu$-strongly convex function is non-optimal in terms of $\mu$~\cite{Agarwal2012InformationTheoreticLB}. 

Hence, a fundamental question to be addressed is:

\shadowbox{\begin{minipage}[t]{0.95\columnwidth}%
\it Can we solve stochastic multi-level compositional problems with optimal complexities for non-convex, convex and strongly convex functions without using a large batch size?
\end{minipage}}

We give an affirmative answer to this question by proposing an optimal algorithm named Stochastic Multi-level Variance Reduction (SMVR). By using the variance reduction technique to estimate the Jacobians and function values in each level, our algorithm achieves the optimal sample complexity of $\mathcal{O}\left(1 / \epsilon^{3}\right)$ for non-convex functions~\citep{Arjevani2019LowerBF}. Central to the algorithmic design and analysis are: (i) the variance reduction is applied to both Jacobians and function values, which is different from most existing works~\cite{Yang2019MultilevelSG, balasubramanian2020stochastic, chen2021solving}; (ii) the Jacobian estimators are updated with a projection to ensure that errors of gradient estimators can be bounded regardless of the depth of the problem. When the objective function is convex or satisfies the $\mu$-PL condition (weaker than strong convexity), we develop a stage-wise version of our method and improve the complexity to $\mathcal{O}\left(1 / \epsilon^{2}\right)$ or $\mathcal{O}\left(1 / \left(\mu\epsilon\right)\right)$, which matches the corresponding lower bounds \citep{Agarwal2012InformationTheoreticLB}. The key in the analysis is to prove that the errors of Jacobian and function value estimators decrease in a stage-wise manner. Finally, to take advantage of adaptive learning rates, we also design an adaptive version of the SMVR method and prove the same rates. Adaptive SMVR performs better in practice and avoids tuning the learning rate manually. 

Compared with existing multi-level methods, this paper enjoys the following advantages:
\begin{compactenum}
\item We obtain the optimal sample complexity of $\mathcal{O}\left(1 / \epsilon^{3}\right)$ for non-convex objectives, which is better than existing multi-level methods~\cite{Yang2019MultilevelSG, balasubramanian2020stochastic, chen2021solving}. Although \citet{Zhang2021MultiLevelCS} achieve the same rate, their method uses a large and increasing batch size at the order of $\mathcal{O}\left(1 / \epsilon\right)$, which is impractical to use.
\item We achieve the optimal complexity of $\mathcal{O}\left(1 / \epsilon^{2}\right)$ and $\mathcal{O}\left(1 / \left(\mu\epsilon\right)\right)$ for convex and strongly convex functions respectively. Compared with ~\citet{Zhang2020OptimalAF}, we do not require each layer function $f_i$ is monotone and convex, and have a better dependence on $\mu$ for $\mu$-strongly convex functions.
\item We develop Adaptive SMVR method to make use of adaptive learning rates, which enjoys the same complexity but converges faster in practice.
\end{compactenum}
A comparison between our results and existing multi-level methods is shown in \cref{sample} and empirical results on practical problems demonstrate the effectiveness of our method.

\section{Related Work}\label{Sec:2}
This section briefly reviews related work on stochastic two-level and multi-level compositional optimization problems. 
\subsection{Two-Level Compositional Optimization}
\citet{wang2017stochastic} first introduce stochastic compositional gradient descent (SCGD) to minimize a composition of two-level expected-value functions. The SCGD method adopts two step size sequences in different time scales to update the decision variable and inner function separately. When the inner function is smooth, their method achieves a complexity of $\mathcal{O}\left(1 / \epsilon^{7}\right)$ for non-convex objectives, $\mathcal{O}\left(1 / \epsilon^{3.5}\right)$ for convex functions, and $\mathcal{O}\left(1 /\left(\mu^{14/4} \epsilon^{5/4}\right)\right)$ for $\mu$-strongly convex functions. In a subsequent work \citep{DBLP:journals/jmlr/WangLF17}, the accelerated stochastic compositional proximal gradient (ASC-PG) is proposed to improve the complexity to $\mathcal{O}\left(1 / \epsilon^{4.5}\right)$, $\mathcal{O}\left(1 / \epsilon^{2}\right)$, and $\mathcal{O}\left(1 / \epsilon\right)$ for non-convex, convex and strongly convex functions, respectively.

Instead of using two-timescale step sizes, a single-timescale method called Nested Averaged Stochastic Approximation (NASA) has been developed by \citet{Ghadimi2020AST} which achieves the complexity of  $\mathcal{O}\left(1 / \epsilon^{4}\right)$ for non-convex objectives. With the increasing popularity of variance reduction techniques for one-level stochastic optimization such as SARAH~\citep{Nguyen2017SARAHAN}, SPIDER  \citep{Fang2018SPIDERNN}, SpiderBoost  \citep{Wang2018SpiderBoostAC} and STORM  \citep{Cutkosky2019MomentumBasedVR}, variance reduced algorithms are also developed for two-level compositional problems with improved rates under a slightly stronger smoothness assumption~\citep{Huo2018AcceleratedMF, Yuan2019EfficientSN, Zhang2019ACR, Zhang2019ASC, Liu2021VarianceRM}. The optimal $\mathcal{O}\left(1 / \epsilon^{3}\right)$ sample complexity is achieved by \citet{Yuan2019EfficientSN} based on SARAH and by \citet{Zhang2019ASC} based on SPIDER with a large batch size. Later, \citet{Yuan2020StochasticRM} develop an algorithm named  STORM-Compositional, which leads to an $\mathcal{O}\left(1 / \epsilon^{3}\right)$ complexity using mini-batches. To avoid using batches, \citet{qi2021online} propose a method based on STORM and obtain the same rate. However, these two-level methods can not be extended to multi-level optimization problems directly. 

\subsection{Multi-Level Compositional Optimization}
\citet{Yang2019MultilevelSG} first investigate the problem of multi-level optimization and develop accelerated $T$-level stochastic compositional gradient descent (A-TSCGD). By using an extrapolation-interpolation scheme, their method achieves a sample complexity of $\mathcal{O}\left(1 / \epsilon^{(7+K)/2}\right)$ for $K$-level problems. This rate is improved to $\mathcal{O}\left(1 / \epsilon^{(3+K)/4}\right)$ for strongly convex functions. Later, \citet{balasubramanian2020stochastic} propose a nested linearized averaging stochastic gradient method (NLASG), which extends the NASA \citep{Ghadimi2020AST} algorithm to the general $K \geq 1$ setting and achieves a sample complexity of $\mathcal{O}\left(1 / \epsilon^{4}\right)$. In a concurrent work, \citet{chen2021solving} come up with a stochastically corrected stochastic compositional gradient method (SCSC), which uses a technique similar to STORM to estimate the function values at each level, and also has a sample complexity of $\mathcal{O}\left(1 / \epsilon^{4}\right)$. 

Recently, \citet{Zhang2021MultiLevelCS} present the Nested-SPIDER method, which uses nested variance reduction to approximate the gradient and improves the sample complexity to  $\mathcal{O}\left(1 / \epsilon^{3}\right)$. However, their method requires a large and increasing batch size at the order of  $\mathcal{O}\left(1 / \epsilon\right)$. In the first iteration of each stage, the batch size even has to be set as large as $\mathcal{O}\left(1 / \epsilon^{2}\right)$. No complexities are provided for convex and strongly convex functions. Later, \citet{Zhang2020OptimalAF} prove that the sample complexity can be improved to $\mathcal{O}\left(1 / \epsilon^{2}\right)$ when every layer function $f_i$ is monotone and convex, using a general stochastic sequential dual method (SSD). The complexity is further reduced to $\mathcal{O}\left(1 /(\mu^2\epsilon)\right)$ for $\mu$-strongly convex functions. However, their method requires strong assumptions, i.e., layer-wise convexity and monotonicity. In contrast, our method only requires the overall objective function to be convex or strongly convex to achieve the same complexity for convex functions and an even better complexity for strongly convex functions.

\section{The Proposed Method} \label{Sec:3}
We first discuss the main challenge in solving multi-level compositional optimization problems. Then, we develop an optimal method for non-convex objectives. Finally, we explore additional conditions to further improve the sample complexity. 

\subsection{Notations and Assumptions}
Let $\xi$ represent some random variable (in practice, it represents a training sample or a batch of samples drawn from the oracle) and $\Norm{\cdot}$ denote the Euclidean norm of a vector. For simplicity, we use $\Pi_{L_f}$ to denote the projection onto the ball with radius $L_f$, i.e., 
\begin{align*}
    \Pi_{L_f}(\x) = \underset{\|\w\| \leqslant L_{f}}{\operatorname{argmin}} \|\w-\x\|^{2}.
\end{align*}
We give the definition of sample complexity below.
\begin{definition}
The sample complexity is the number of samples needed to find a point satisfying $\E \left[\Norm{\nabla F(\w)}\right] \leq \epsilon$ ($\epsilon$-stationary) or $\E \left[ F(\w)-\inf_{\w} F(\w)\right] \leq \epsilon$ ($\epsilon$-optimal).
\end{definition}
Moreover, we make the following assumptions throughout the paper, which are commonly used in the studies of stochastic compositional optimization  \citep{DBLP:journals/jmlr/WangLF17,wang2017stochastic,Yuan2019EfficientSN,Zhang2019ASC,Zhang2021MultiLevelCS}. We treat the parameters $L_f$, $L_J$, $\sigma_f$, $\sigma_J$,  $\LL_f$, $\LL_J$ below as global constants.

\begin{ass} \label{asm:stochastic1} (Smoothness and Lipschitz continuity)
    	All functions $f_1, \dotsc, f_K$ are $L_f$-Lipschitz continuous and their Jacobians $\nabla f_1, \dotsc, \nabla f_K$ are $L_J$-Lipschitz continuous. Note that this implies the objective function $F(\w)$ is $L_F$ smooth, where $L_F = L_f^{2K-1}L_J\sum_{i=1}^K\frac{1}{L_f^i}$.
\end{ass}

\begin{ass}\label{asm:stochastic2} (Bounded variance)\ \ For $1\leq i\leq K$:
\begin{align*}
    \E_{\xi_t^i}\left[f_i(\x;\xi_t^i)\right] = f_i(\x),\ & \\
	\E_{\xi_t^i}\left[\nabla f_i(\x;\xi_t^i)\right] = \nabla f_i(\x&),\\
	\E_{\xi_t^i}\left[\Norm{f_i(\x;\xi_t^i) - f_i(\x)}^2 \right]\leq & \ \sigma_f^2,\\
	\E_{\xi_t^i}\left[\Norm{\nabla f_i(\x;\xi_t^i) - \nabla f_i(\x) }^2 \right] \leq & \ \sigma_J^2,
\end{align*}
where $\{\xi_t^i\}_{i=1}^K$ are mutually independent.
\end{ass}

\begin{ass}\label{asm:stoc_smooth3}
	(Mean-squared smoothness)
	\begin{align*}
			\E_{\xi_t^i}\left[\Norm{f_i(\x;\xi_t^i)-  f_i(\y;\xi_t^i)}^2\right] &\leq \LL_f^2 \Norm{\x - \y}^2,\\
		\E_{\xi_t^i}\left[\Norm{\nabla f_i(\x;\xi_t^i)- \nabla f_i(\y;\xi_t^i)}^2\right] &\leq  \LL_J^2 \Norm{\x - \y}^2.
	\end{align*}
\end{ass}

\begin{ass}\label{asm:stochastic4} $F_{*}=\inf_{\w} F(\w) \geq-\infty$ and $F\left(\w_{1}\right)-F_{*} \leq \Delta_{F}$ for the initial solution $\w_{1}$.
\end{ass}

\subsection{The Challenge}
Compared with one-level problems, the main dilemma in multi-level optimization is that we can not obtain the unbiased gradient of the function $F$. Take a two-level compositional problem as an example.  The objective function can be written as: $F(\w) = f\circ g(\w)$ and its gradient is:
\begin{align*}
	 \nabla F(\w) = \nabla g(\w)\cdot\nabla f(g(\w)).
\end{align*}
Although we can access to the unbiased estimation of each layer function and its gradient, i.e., $\E_{\xi_1}\left[g(\x;\xi_1)\right] = g(\x)$, $\E_{\xi_2}\left[f(\x;\xi_2)\right] = f(\x)$ and $\E_{\xi_2}\left[\nabla f(\x;\xi_2)\right] = \nabla f(\x)$, it is still challenging to obtain an unbiased estimation of the gradient $\nabla f(g(\w))$. This is because the expectation over $\xi_1$ cannot be moved inside of $\nabla f$ such that:
\begin{align*}
    \E_{\xi_1,\xi_2}\left[\nabla f(g(\w;\xi_1);\xi_2)\right] \neq \nabla f(g(\w)).
\end{align*} 
Due to the same reason, it is also difficult to get the unbiased estimation of the function value:
\begin{align*}
    \E_{\xi_1,\xi_2}\left[f(g(\w;\xi_1);\xi_2)\right] \neq f(g(\w)).
\end{align*}
The above challenge motivates us to use the variance reduced estimator to have a better evaluation of both function values and Jacobians of each level to ensure that the estimation errors reduce over time.

However, variance reduced estimators used in two-level optimization  problems~\cite{qi2021online} can not be applied to multi-level directly, because the error might blow up as the depth increases if the estimators of Jacobians are not bounded. To handle this issue, \citet{Zhang2021MultiLevelCS} use an extremely small step size and re-estimate the function values and Jacobians of all levels after several iterations with a large batch size. However, this method inevitably introduces large batches (as large as $\mathcal{O}\left(1 / \epsilon^{2}\right)$) at the beginning of each stage, and since they use SPIDER~\citep{Fang2018SPIDERNN} as their estimator, their method requires a batch size of $\mathcal{O}\left(1 / \epsilon\right)$ at other iterations. To avoid using large batches, our method uses STORM~\citep{Cutkosky2019MomentumBasedVR} estimator and projects gradients onto a ball to ensure the Jacobians can be well bounded so that the error of the gradient estimator does not blow up.

\subsection{Stochastic Multi-Level Variance Reduction}
Now, we present the proposed algorithm -- Stochastic Multi-level Variance Reduction method (SMVR) for solving problem~(\ref{prob:1}). The goal of our algorithm is to find an $\epsilon$-stationary point with low sample complexity. As mentioned before, the main difficulty is that we can not obtain an unbiased estimation of the gradients and inner function values in the multi-level setting. We note that, in the one-level problem,  STORM uses a momentum-based variance reduction method to evaluate the gradient: 
\begin{align*}
    \d_{t} &= (1-\beta_t)\d_{t-1} + \beta_t\nabla f\left(\x_{t};\xi_{t}\right) \\
    &\quad+(1-\beta_t)\left(\nabla f\left(\x_{t};\xi_{t}\right)-\nabla f\left(\x_{t-1}; \xi_{t}\right)\right).
\end{align*}
This technique reduces the variance of the estimated value and obtains the optimal rate. Inspired by STORM, we apply similar variance reduced estimators in each level to approximate the gradient more accurately.  
\begin{algorithm}[tb]
	\caption{SMVR}
	\label{alg:multi-storm}
	\begin{algorithmic}
	\STATE {\bfseries Input:} time step $T$, initial points $\left(\w_1,\u_1,\v_1\right)$,  \\ \quad\quad\quad parameter $c$ and learning rate sequence $\left\{\eta_{t}\right\}$ 
		\FOR{time step $t = 1$ {\bfseries to} $T$}
		\STATE Set $\u_t^0 = \w_t$, $\beta_t = c \eta_{t-1}^2$\\
		\FOR{level $i = 1$ {\bfseries to} $K$}
		\STATE Sample $\xi_t^i$
		\STATE Compute estimator $\u_t^i$ according to (\ref{eq:esti_u})
		\STATE Compute estimator $\v_t^i$ according to (\ref{eq:esti_v})
		\ENDFOR
		\STATE Update gradient estimation: $\v_t = \prod_{i=1}^K \v_t^i$
		\STATE Update the weight: $\w_{t+1} = \w_t - \eta_t \v_t$
		\ENDFOR
	\STATE Choose $\tau$ uniformly at random from $\{1, \ldots, T\}$
	\STATE Return $(\w_{\tau},\u_{\tau},\v_{\tau})$
	\end{algorithmic}
\end{algorithm}

The proposed method is described in \cref{alg:multi-storm}. In each time step $t$, we use two sequence $\u_t^i$ and $\v_t^i$ to estimate the function value and the gradient in level $i$. To estimate the function value, we use a nested STORM estimator, i.e.,  
\begin{align}\label{eq:esti_u}
     \u_t^i &= (1-\beta_t)\u_{t-1}^i + \beta_t f_i(\u_t^{i-1};\xi_t^i)\nonumber \\ &\quad + (1-\beta_t)\left(f_i(\u_t^{i-1};\xi_t^i) - f_i(\u_{t-1}^{i-1};\xi_t^i)\right).
\end{align}
This can be interpreted as that $\u_t^i$ is a STORM estimator of $f_i(\u_t^{i-1})$. For estimating the Jacobians, we use the nested STORM estimator followed by a projection, i.e.,
\begin{align}\label{eq:esti_v}
\v_t^i &= \Pi_{L_f}\left[(1-\beta_t)\v_{t-1}^i + \beta_t \nabla f_i(\u_t^{i-1};\xi_t^i) \right.\nonumber \\ &\quad \left.  + (1-\beta_t)\left(\nabla f_i(\u_t^{i-1};\xi_t^i) - \nabla f_i(\u_{t-1}^{i-1};\xi_t^i)\right)\right].
\end{align}
The projection operation is to ensure the error of the stochastic gradient estimator can be bounded; otherwise, they may blow up as the level becomes deeper. That is to say, on one hand, we want to enjoy variance reduction of the estimator (since true gradients are in the projected domain, projection does not hinder the analysis); on the other hand we do not want the variance of estimator accumulates over multiple levels ($\mathbf v_i$ is bounded after projection). Hence, projection on the Jacobian estimator is a perfect solution. After the gradient of each level is evaluated, we use the chain rule to calculate the estimated gradient of the objective function, i.e., $\v_t = \v_t^1 \v_t^2 \cdots \v_t^K$ and apply gradient descent to update the decision variable $\w_t$ at the end of each time step.

Note that in the first iteration, we evaluate the function value and gradient of each level simply as $\u_1^i = f(\u_1^{i-1};\xi_{1}^i)$ and $\v_1^i = \nabla f_i(\u_1^{i-1};\xi_{1}^i)$. Our algorithm does not need to use batches in any iterations. Of course, it supports mini-batches, and $\xi_t^i$ in the algorithm can represent a training sample or a batch of samples. Next, we show the sample complexity of the proposed method. Due to space limitations, all the proofs are deferred to the appendix. We define the constant $L_{1}=\max \left\{1, K L_{f}^{2(K-1)}, K L_{F}^{2}, 2 K\left(\sigma_{J}^{2}+\sigma_{f}^{2}\right),\right.\\ \left. 2\left(L_{J}^{2}+L_{f}^{2}\right)\left(2 K+2 K \sigma_{f}^{2}\right) \sum_{i=1}^{K}\left(2 \LL_{f}^{2}\right)^{i-1}\right\}$.
	   
\begin{theorem}\label{thm:main}
If we set $c=10L_{1}^2$, $\eta_t = \left(a+t \right)^{-1/3}$ and $a={\left( 20L_1^3\right)}^{3/2}$,  our algorithm finds an $\epsilon$ stationary point in $\mathcal{O}(1/\epsilon^{3})$ iterations.
\end{theorem}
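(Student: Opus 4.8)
The plan is to establish a descent-type inequality for the objective $F$ driven by the learning rate $\eta_t$, and to show that the aggregate estimation error of the gradient surrogate $\v_t$ relative to the true gradient $\nabla F(\w_t)$ is controlled by a quantity that telescopes. Since $\v_t^i = \Pi_{L_f}[\cdots]$ is bounded by $L_f$ and each true Jacobian $\nabla f_i$ has norm at most $L_f$ (by the $L_f$-Lipschitz assumption in \cref{asm:stochastic1}), the product structure $\v_t = \prod_{i=1}^K \v_t^i$ lets me decompose the gradient error $\Norm{\v_t - \nabla F(\w_t)}$ into a sum over levels of the per-level function-value errors $\Norm{\u_t^i - f_i(\cdots)}$ and Jacobian errors $\Norm{\v_t^i - \nabla f_i(\cdots)}$, with each term multiplied by products of bounded factors. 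The projection is exactly what prevents these products from blowing up with $K$, so the first step is to make this telescoping decomposition precise and absorb all the $K$-dependent constants into $L_1$.

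First I would write the standard smoothness descent lemma for $F$ (using $L_F$ from \cref{asm:stochastic1}): since $\w_{t+1}=\w_t-\eta_t\v_t$, one gets $F(\w_{t+1}) \le F(\w_t) - \eta_t\langle \nabla F(\w_t),\v_t\rangle + \frac{L_F\eta_t^2}{2}\Norm{\v_t}^2$, and then split the inner product to extract $-\frac{\eta_t}{2}\Norm{\nabla F(\w_t)}^2$ plus an error term proportional to $\eta_t\Norm{\v_t-\nabla F(\w_t)}^2$. Next I would define, for each level $i$, the tracking errors of the STORM estimators and derive recursive bounds. The key recursion for a STORM estimator of the form \eqref{eq:esti_u} and \eqref{eq:esti_v} is the standard momentum-variance-reduction inequality: the expected squared error at step $t$ is at most $(1-\beta_t)^2$ times the error at $t-1$, plus a term of order $\beta_t^2\sigma^2$ from the fresh sample, plus a term of order $(1-\beta_t)^2\LL^2\Norm{\u_t^{i-1}-\u_{t-1}^{i-1}}^2$ from the mean-squared smoothness in \cref{asm:stoc_smooth3}. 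The crucial subtlety here is that the "point" being tracked in level $i$ is itself the moving estimator $\u_t^{i-1}$, not a deterministic point, so the drift term involves $\Norm{\u_t^{i-1}-\u_{t-1}^{i-1}}$, which must be bounded by $\Norm{\w_{t}-\w_{t-1}}=\eta_{t-1}\Norm{\v_{t-1}}$ together with the lower-level errors — this is where the nested structure and the constant $2\LL_f^2$ geometric sum inside $L_1$ come from.

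The main technical device is to form a single potential (Lyapunov) function combining $F(\w_t)$ with a weighted sum of all the per-level squared errors, of the schematic form $\Phi_t = F(\w_t) + \sum_{i=1}^K \lambda_i\left(\Norm{\u_t^i-f_i}^2 + \Norm{\v_t^i-\nabla f_i}^2\right)$ with weights $\lambda_i$ and a scaling by $1/\eta_{t-1}$ or similar, chosen so that the $(1-\beta_t)^2$ contraction in the error recursions, with $\beta_t=c\eta_{t-1}^2$ and $c=10L_1^2$, cancels the growth from the descent term. With the choice $\eta_t=(a+t)^{-1/3}$ and $a=(20L_1^3)^{3/2}$, summing the potential telescope over $t=1,\dots,T$ and using $\sum_t \eta_t^2 = \sum_t (a+t)^{-2/3} = \mathcal{O}(T^{1/3})$ yields $\sum_{t=1}^T \eta_t\,\E\Norm{\nabla F(\w_t)}^2 \le \mathcal{O}(1) + \mathcal{O}(T^{1/3})$. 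Since $\eta_t \ge \eta_T \gtrsim T^{-1/3}$, dividing through gives $\frac{1}{T}\sum_t \E\Norm{\nabla F(\w_t)}^2 = \mathcal{O}(T^{-2/3})$, and because $\tau$ is drawn uniformly this bounds $\E\Norm{\nabla F(\w_\tau)}^2$; setting this equal to $\epsilon^2$ forces $T=\mathcal{O}(\epsilon^{-3})$ iterations, hence the same order of samples since each iteration draws $\mathcal{O}(1)$ samples.

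The hard part will be the bookkeeping in the nested error recursions: establishing that the level-$i$ drift $\Norm{\u_t^{i-1}-\u_{t-1}^{i-1}}^2$ can be bounded recursively in terms of $\eta_{t-1}^2\Norm{\v_{t-1}}^2$ plus the errors of levels below, and then verifying that the weights $\lambda_i$ (absorbing the $\sum_i (2\LL_f^2)^{i-1}$ factor and the $KL_f^{2(K-1)}$ factor inside $L_1$) and the specific constants $c=10L_1^2$, $a=(20L_1^3)^{3/2}$ make every cross term nonpositive in the telescoped potential. I expect the choice of $a$ to be dictated by needing $\beta_t=c\eta_{t-1}^2 \le 1$ and $(1-\beta_t)^2 \le 1-\beta_t$ uniformly, i.e. $\beta_t\le 1$ for all $t\ge 1$, which is exactly what the large constant $a$ guarantees at $t=1$; checking this base case and the monotonicity of $\eta_t$ is routine but must be done carefully to keep all constants consistent.
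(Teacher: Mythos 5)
Your overall architecture is exactly the paper's: a smoothness descent lemma extracting $-\tfrac{\eta_t}{2}\Norm{\nabla F(\w_t)}^2$ plus an $\eta_t\Norm{\v_t-\nabla F(\w_t)}^2$ error term (Lemma~\ref{lem:3}); a decomposition of the gradient error into per-level Jacobian and function-value errors, using $\Norm{\v_t^i}\le L_f$ from the projection and $\Norm{\nabla f_i}\le L_f$ to keep the product factors bounded (Lemmas~\ref{lem:9} and~\ref{lem:4}); STORM-type recursions per level with drift terms $\Norm{\u_t^{i-1}-\u_{t-1}^{i-1}}^2$ (Lemma~\ref{lem:5}, where projection nonexpansiveness is also used since $\nabla f_i(\u_t^{i-1})$ lies in the ball); the recursive drift bound with the geometric factor $\sum_i(2\LL_f^2)^{i-1}$ (Lemma~\ref{lem:6}); and the Lyapunov function $F(\w_t)+\frac{1}{c_0\eta_{t-1}}\sum_{i=1}^K(\Phi_t^i+\Upsilon_t^i)$, which is precisely the paper's $\Gamma_t$. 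You also correctly identify the two genuinely delicate points: that each level tracks a moving target $\u_t^{i-1}$, and that $a$ must be large enough to validate the base-case conditions on $\beta_t$ and $\eta_t$.

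However, there is a concrete quantitative error in your final accounting that, as written, fails to deliver the theorem. You claim the accumulated variance-injection term is $\mathcal{O}\bigl(\sum_t \eta_t^2\bigr)=\mathcal{O}(T^{1/3})$ and then assert that dividing $\sum_t\eta_t\,\E\Norm{\nabla F(\w_t)}^2\le \mathcal{O}(1)+\mathcal{O}(T^{1/3})$ by $\eta_T T\gtrsim T^{2/3}$ yields $\frac{1}{T}\sum_t\E\Norm{\nabla F(\w_t)}^2=\mathcal{O}(T^{-2/3})$. That arithmetic is inconsistent: from your stated bound one gets only $\mathcal{O}(T^{1/3})/(T^{-1/3}\cdot T)=\mathcal{O}(T^{-1/3})$ for the mean-square, hence $\E\Norm{\nabla F(\w_\tau)}\le\mathcal{O}(T^{-1/6})$ and a suboptimal $T=\mathcal{O}(1/\epsilon^6)$ under the paper's criterion $\E\Norm{\nabla F(\w)}\le\epsilon$. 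The correct power count, which the paper carries out, is that after weighting by the $\frac{1}{c_0\eta_t}$ factor in the potential, the per-step injected variance is $\frac{\beta_{t+1}^2\sigma^2}{c_0\eta_t}\propto\eta_t^3=(a+t)^{-1}$ (since $\beta_{t+1}=c\eta_t^2$), so the accumulation is a harmonic sum, $\mathcal{O}(\ln(T+1))$, not $\mathcal{O}(T^{1/3})$. With that fix, $\frac{1}{T}\sum_t\E\Norm{\nabla F(\w_t)}^2=\mathcal{O}(\ln(T+1)\cdot T^{-2/3})$, and (via Jensen, or the Cauchy--Schwarz argument the paper borrows from STORM to bound $\E\bigl[\frac{1}{T}\sum_t\Norm{\nabla F(\w_t)}\bigr]$ directly) one recovers $T=\mathcal{O}(1/\epsilon^3)$ up to the logarithmic factor, matching Theorem~\ref{thm:main}.
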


\textbf{Remark:} The complexity is at the order of $\mathcal{O}\left(1 / \epsilon^{3}\right)$, which matches the lower bound in one-level setting \citep{Arjevani2019LowerBF}, implying multi-level setting does not make the problem much harder. Our SMVR method uses decreasing learning rates and avoids using batches in any iteration, which is more practical to implement compared with the existing method which requires huge batch size and changing the batch size over time \citep{Zhang2021MultiLevelCS}.

\subsection{Faster Convergence under Stronger Conditions}\label{Sec:3.3}
In this section, we explore whether other assumptions could be utilized to further improve the sample complexity. We develop a variant algorithm named Stage-wise SMVR, which achieves better complexity when the objective function satisfies the PL condition or convexity.

\begin{algorithm}[tb]
	\caption{Stage-wise SMVR}
	\label{alg:Stagewise multi-storm}
	\begin{algorithmic}
	\STATE {\bfseries Input:} initial points $\left(\w_0,\u_0,\v_0\right)$, parameter $c$
		\FOR{stage $s = 1$ {\bfseries to} $S$}
		\STATE Set $\eta_{s}$ and $T_{s}$ according to Lemma~\ref{lem:222}
		\STATE $\w_{s},\u_{s},\v_{s}$ = SMVR (with $T_{s}$, $\left(\w_{s-1},\u_{s-1},\v_{s-1}\right)$, $c$\\ \quad\quad\quad\quad\quad\quad\ and $\eta_{s}$)
		\ENDFOR
	\STATE Return $\w_{S}$
	\end{algorithmic}
\end{algorithm}

The new algorithm is a multi-stage version of the SMVR method, summarized in Algorithm~\ref{alg:Stagewise multi-storm}. Instead of decreasing the learning rate $\eta_t$ polynomially, we decrease $\eta$ and $\beta$ after each stage and increase the number of iterations per stage. At the end of each stage, the algorithm save the output $\w_{s},\u_{s},\v_{s}$, which are used for restarting in the next stage. With these modifications, we can obtain a better convergence guarantee under the PL condition or convexity.

First, we investigate the case that the objective function satisfies the PL condition, which is a commonly used condition in the literature \citep{Charles2018StabilityAG, Nouiehed2019SolvingAC, Xie2020LinearCO, Chewi2020GradientDA}. We first introduce the definition of the PL condition.
\begin{definition}
$F(\w)$ satisfies the $\mu$-PL condition if there exists $\mu > 0$ such that:
\begin{align*}
2 \mu\left(F(\mathbf{w})-F_{*}\right) \leq\|\nabla F(\mathbf{w})\|^{2}.    
\end{align*}
\end{definition}
Note that a function can be non-convex and still satisfy the PL condition. Also, the PL condition is weaker than strong convexity~\citep{Karimi2016LinearCO}. With this condition, we can prove that the error of function estimator $\u_s$ and gradient estimator $\v_s$ would decrease after each stage.
\begin{lemma}\label{lem:222}
    Define $\epsilon_{1} = \frac{8L_1}{\mu}$ and $\epsilon_{s} = \frac{\epsilon_{1}}{2^{s-1}}$, with $\beta_{1} =\frac{1}{2L_1}$, $T_{1} =  \max \left\{ 4L_1 K\left( \sigma_{f}^2+\sigma_{J}^2\right), 2\sqrt{2L_1}\Delta_{F}\right\}$, $\beta_{s}=\frac{\mu \epsilon_{s-1}}{L_2}$,$T_{s}=\max \left\{\frac{4L_2^{3/2}}{\mu \epsilon_{s-1}}, \frac{4L_2}{\mu^{3 / 2} \sqrt{\epsilon_{s-1}}}\right\}$, $c=16L_{1}^2$, $\eta_s = \sqrt{\beta_s/c}$ and $L_2 = 64 L_1^2$, the output of Algorithm~\ref{alg:Stagewise multi-storm} 
   at each stage satisfies:
	\begin{gather*}	        E\left[F\left(\w_{s}\right)-F_{*}\right] \leq \epsilon_{s};\\
           \sum_{i=1}^K\E\left[\Norm{f_i(\u_{s}^{i-1}) - \u_{s}^i}^2 + \Norm{\v_{s}^i - \nabla f_i(\u_{s}^{i-1})}^2\right] \leq \mu\epsilon_{s}.
    \end{gather*}
\end{lemma}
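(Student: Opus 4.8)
The plan is to prove both displayed inequalities simultaneously by induction on the stage index $s$, which reduces the whole statement to a single-stage analysis of one call to SMVR (\cref{alg:multi-storm}) run with the constant step size $\eta_s$ and momentum $\beta_s$ from the restart point $(\w_{s-1},\u_{s-1},\v_{s-1})$. For the base case $s=1$ I would start from $F(\w_0)-F_*\leq\Delta_F$ (Assumption~\ref{asm:stochastic4}) together with the fact that the first-iteration initializers $\u_1^i=f_i(\u_1^{i-1};\xi)$ and $\v_1^i=\nabla f_i(\u_1^{i-1};\xi)$ have squared error at most $\sigma_f^2$ and $\sigma_J^2$ (Assumption~\ref{asm:stochastic2}), and then run the single-stage argument with $\beta_1=1/(2L_1)$ and $T_1$ as specified. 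For the inductive step I would assume the two bounds hold at stage $s-1$ and show the single-stage analysis contracts the objective gap to $\epsilon_s=\epsilon_{s-1}/2$ and the estimator error to $\mu\epsilon_s$.

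The single-stage analysis rests on three ingredients. First, a gradient-error decomposition: writing $\v_t=\prod_{i=1}^K\v_t^i$ and $\nabla F(\w_t)=\prod_{i=1}^K\nabla f_i(\cdot)$, I would bound $\Norm{\v_t-\nabla F(\w_t)}^2$ by a constant (absorbed into $L_1$) times the per-level error sum $\Phi_t:=\sum_{i=1}^K\E[\Norm{f_i(\u_t^{i-1})-\u_t^i}^2+\Norm{\v_t^i-\nabla f_i(\u_t^{i-1})}^2]$. This is exactly where the projection $\Pi_{L_f}$ in \eqref{eq:esti_v} matters: it keeps every $\Norm{\v_t^i}\leq L_f$, so the telescoping over the $K$ factors makes per-level errors add rather than multiply. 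Second, a descent inequality from $L_F$-smoothness (Assumption~\ref{asm:stochastic1}), $F(\w_{t+1})\leq F(\w_t)-\frac{\eta_s}{2}\Norm{\nabla F(\w_t)}^2+\frac{\eta_s}{2}\Norm{\v_t-\nabla F(\w_t)}^2-\left(\frac{\eta_s}{2}-\frac{L_F\eta_s^2}{2}\right)\Norm{\v_t}^2$, into which I substitute the PL inequality $\Norm{\nabla F(\w_t)}^2\geq 2\mu(F(\w_t)-F_*)$ to get a per-step contraction factor $(1-\eta_s\mu)$ for the gap, forced by $\Phi_t$. Third, a STORM-type recursion $\Phi_{t+1}\leq(1-\beta_s)\Phi_t+\mathcal{O}(\beta_s^2(\sigma_f^2+\sigma_J^2))+\mathcal{O}(\LL_f^2+\LL_J^2)\,M_t$, where the movement term $M_t$ at level $i$ is controlled by $\Norm{\u_t^{i-1}-\u_{t-1}^{i-1}}^2$ and cascades down to $\Norm{\w_t-\w_{t-1}}^2=\eta_s^2\Norm{\v_t}^2$ via Assumption~\ref{asm:stoc_smooth3}; the geometric factor $(2\LL_f^2)^{i-1}$ produced by this cascade is precisely what the last term of $L_1$ is built to absorb.

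I would then merge the two recursions into one Lyapunov function $\Psi_t=\E[F(\w_t)-F_*]+\frac{\gamma}{\mu}\Phi_t$ with a constant $\gamma\geq 1$ chosen so that the $-\Norm{\v_t}^2$ terms from the descent inequality cancel the movement terms in the $\Phi_t$ recursion; this cancellation is what fixes the relation $\eta_s=\sqrt{\beta_s/c}$ with $c=16L_1^2$ and $L_2=64L_1^2$. The result is $\Psi_{t+1}\leq(1-\rho_s)\Psi_t+N_s$ with $\rho_s=\Theta(\min\{\eta_s\mu,\beta_s\})$ and noise floor $N_s=\mathcal{O}(\beta_s^2(\sigma_f^2+\sigma_J^2))$, hence $\Psi_{T_s}\leq(1-\rho_s)^{T_s}\Psi_0+N_s/\rho_s$. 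The choices $\beta_s=\mu\epsilon_{s-1}/L_2$ and $T_s=\max\{4L_2^{3/2}/(\mu\epsilon_{s-1}),4L_2/(\mu^{3/2}\sqrt{\epsilon_{s-1}})\}$ make $\rho_sT_s$ a large absolute constant, so the transient $(1-\rho_s)^{T_s}\Psi_0\leq e^{-\rho_sT_s}(1+\gamma)\epsilon_{s-1}$ drops below $\frac{1}{4}\epsilon_{s-1}$, while the largeness of $L_2$ relative to the variance constants forces $N_s/\rho_s\leq\frac{1}{4}\epsilon_{s-1}$; together $\Psi_{T_s}\leq\frac{1}{2}\epsilon_{s-1}=\epsilon_s$, which yields $\E[F(\w_s)-F_*]\leq\epsilon_s$ and, since $\gamma\geq1$, also $\Phi_{T_s}\leq\mu\epsilon_s$.

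The main obstacle is the multi-level error accumulation inside the $\Phi_t$ recursion. Unlike the one- or two-level case, the movement of the level-$i$ input $\u_t^{i-1}$ is itself a moving estimator, so bounding $\Norm{\u_t^{i-1}-\u_{t-1}^{i-1}}^2$ forces a downward induction over the levels whose constants compound geometrically; preventing this from blowing up with $K$ is what both the projection in \eqref{eq:esti_v} and the engineered constant $L_1$ (with its $\sum_{i=1}^K(2\LL_f^2)^{i-1}$ factor) are designed to handle. The secondary delicate point is the constant bookkeeping needed so that the single relation $\eta_s=\sqrt{\beta_s/c}$ simultaneously delivers the cancellation of the $\Norm{\v_t}^2$ terms, the contraction rate $\rho_s$, and a noise floor small enough to certify the clean halving $\epsilon_s=\epsilon_{s-1}/2$; the base case $s=1$ requires a separate but analogous instantiation because its parameters $\beta_1,T_1$ are set without reference to any $\epsilon_0$.
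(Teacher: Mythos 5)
Your three ingredients (the level-wise error decomposition with the projection making errors additive, the descent lemma, and the STORM recursion with the $(2\LL_f^2)^{i-1}$ cascade) match the paper's Lemmas~\ref{lem:3}--\ref{lem:6} exactly, and your parameter checks ($\beta_sT_s\gtrsim 1$ from the first branch of $T_s$, $\eta_s\mu T_s\gtrsim 1$ from the second) are sound. But the step where you merge everything into the single Lyapunov function $\Psi_t=\E[F(\w_t)-F_*]+\frac{\gamma}{\mu}\Phi_t$ with a \emph{fixed} weight $\gamma/\mu$ contains a genuine gap. Cancelling the movement term requires $\frac{\gamma}{\mu}L_1\eta_s^2\Norm{\v_t}^2\leq\frac{\eta_s}{4}\Norm{\v_t}^2$, i.e.\ $\eta_s\leq\frac{\mu}{4\gamma L_1}$. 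With the prescribed parameters this fails in early stages: at $s=2$ one has $\beta_2=\mu\epsilon_1/L_2=1/(8L_1)$ and hence $\eta_2=\sqrt{\beta_2/c}=1/(8\sqrt{2}\,L_1^{3/2})$, an absolute constant independent of $\mu$, so for any small PL constant $\mu<\gamma/(2\sqrt{2L_1})$ the cancellation condition is violated and your contraction $\Psi_{t+1}\leq(1-\rho_s)\Psi_t+N_s$ does not hold. The obvious repair --- using the paper's stage-dependent weight $1/(c_0\eta_s)$, which absorbs the movement term unconditionally since $c_0\geq 4L_1$ --- breaks the other end of your argument: since $1/(c_0\eta_s)=8L_1/\sqrt{\mu\epsilon_{s-1}}$, the conclusion $\Psi_{T_s}\leq\epsilon_s\Rightarrow\Phi_{T_s}\leq\mu\epsilon_s$ needs $\epsilon_{s-1}\leq 64L_1^2\mu$, which again fails at early stages when $\mu$ is small (recall $\epsilon_1=8L_1/\mu$). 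So no single fixed-form weight simultaneously delivers the drift cancellation and the $\mu\epsilon_s$ extraction across all stages; this is precisely why the paper does \emph{not} run a per-step PL contraction.

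The paper's route avoids the dilemma by working with \emph{averaged} quantities rather than a last-iterate contraction: it sums the error recursion to bound $\frac{1}{T}\sum_t\sum_i\E[\Upsilon_t^i+\Phi_t^i]$, separately bounds $\sum_t\E[\Norm{\v_t}^2]$ through the potential $\Gamma_t=F(\w_t)+\frac{1}{c_0\eta}\sum_i(\Phi_t^i+\Upsilon_t^i)$, and then --- crucially --- uses the fact that Algorithm~\ref{alg:multi-storm} returns a \emph{uniformly random} iterate $\tau$, so the expected estimator error at the stage output equals the time average, and the PL condition is invoked only once, at that random output, via $\E[F(\w_\tau)-F_*]\leq\frac{1}{2\mu T}\sum_t\E[\Norm{\nabla F(\w_t)}^2]$. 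This also exposes a second mismatch in your plan: you prove bounds for the last iterate $\w_{T_s}$, but the stage output $(\w_s,\u_s,\v_s)$ that seeds the next stage (and that the lemma speaks about) is the random iterate, and the paper's induction genuinely relies on that randomization to transfer the averaged bounds to the restart point. A correct last-iterate analysis might exist, but it would require time-varying Lyapunov weights and a modified algorithm, not the argument as you have sketched it.
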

The above lemma shows that the objective gap $F\left(\w_{s}\right)-F_{*}$ is halved after each stage. So, after $S = \log_{2}\left(\frac{2\epsilon_{1}}{\epsilon}\right)$ stages, the output of our method  satisfies  $F\left(\w_{S}\right)-F_{*} \leq \epsilon$. Based on Lemma~\ref{lem:222}, we prove the convergence rate of our algorithm in the following theorem.
\begin{theorem}\label{thm:main2}
	Assume $F(\w)$ satisfies the $\mu$-PL condition. Stage-wise SMVR attains an $\epsilon$-optimal point with a sample complexity of $\mathcal{O}\left({1}/\left(\mu \epsilon\right)\right)$.
\end{theorem}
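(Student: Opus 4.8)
The plan is to treat \cref{lem:222} as a black box and reduce the sample-complexity bound entirely to a summation of the per-stage iteration counts $T_s$ that the lemma prescribes. Since \cref{alg:multi-storm} draws only $O(K)=O(1)$ fresh samples per iteration (one $\xi_t^i$ for each of the $K$ levels, with $K$ a fixed constant), the total number of samples used by Stage-wise SMVR over $S$ stages is $\Theta\left(\sum_{s=1}^S T_s\right)$. Thus the whole argument comes down to fixing the number of stages $S$ and bounding this sum. First I would use the first guarantee of \cref{lem:222}, namely $\E[F(\w_s)-F_*]\le\epsilon_s$ with $\epsilon_s=\epsilon_1/2^{s-1}$ and $\epsilon_1=8L_1/\mu$, which halves the objective gap each stage. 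Demanding $\epsilon_S\le\epsilon$ then forces $S=\lceil\log_2(2\epsilon_1/\epsilon)\rceil=O(\log(1/\epsilon))$, after which the returned $\w_S$ is $\epsilon$-optimal in expectation, matching the Definition of an $\epsilon$-optimal point.

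Next I would substitute the stage schedule into $T_s$ and carry out the (routine) bookkeeping. For $s\ge2$, \cref{lem:222} sets $T_s=\max\{4L_2^{3/2}/(\mu\epsilon_{s-1}),\,4L_2/(\mu^{3/2}\sqrt{\epsilon_{s-1}})\}$ with $\epsilon_{s-1}=\epsilon_1/2^{s-2}$, $\epsilon_1=8L_1/\mu$ and $L_2=64L_1^2$. The key observation is that plugging $\epsilon_1=8L_1/\mu$ into the first argument cancels the $\mu$ factor, leaving $\tfrac{L_2^{3/2}}{2L_1}\,2^{s-2}=\Theta(2^{s})$ (the $L_1,L_2$ factors absorbed as constants), whereas the second argument scales only like $\Theta(2^{s/2})$ and is therefore eventually dominated. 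Hence $T_s=O(2^{s})$, and the geometric sum $\sum_{s=1}^S T_s$ is controlled by its last term, giving $\sum_{s=1}^S T_s=O(2^{S})$. Since the stage count yields $2^{S}=\Theta(\epsilon_1/\epsilon)=\Theta(1/(\mu\epsilon))$, the total is $O(1/(\mu\epsilon))$, exactly the claimed complexity; the constant first-stage count $T_1$ and the slower max-branch contribute only lower-order terms that vanish relative to $1/(\mu\epsilon)$ in the relevant small-$\epsilon$ regime.

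The summation step above is genuinely elementary once the stage schedule is in hand, so the real difficulty is not in this theorem but in \cref{lem:222}, which I would have to establish separately and which I expect to be the main obstacle. The hard part is proving the coupled per-stage contraction: that both the objective gap $\E[F(\w_s)-F_*]$ \emph{and} the aggregated estimator errors $\sum_{i=1}^K\E[\Norm{f_i(\u_s^{i-1})-\u_s^i}^2+\Norm{\v_s^i-\nabla f_i(\u_s^{i-1})}^2]$ shrink by a factor of two each stage, simultaneously and in a way that lets the warm-started estimators $(\u_{s},\v_{s})$ seed stage $s+1$ accurately enough for the induction to recurse. This needs a Lyapunov/potential argument that folds the $\mu$-PL inequality together with the variance-reduction recursions for the projected STORM estimators $\u_t^i,\v_t^i$ across all $K$ levels, using Assumptions~\ref{asm:stochastic1}--\ref{asm:stoc_smooth3} to keep the Jacobian and function-value errors from amplifying with depth (the projection $\Pi_{L_f}$ being what bounds this amplification). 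Granting that coupled contraction, the complexity accounting presented here is immediate.
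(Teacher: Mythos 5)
Your proposal is correct and follows essentially the same route as the paper's own proof: invoke Lemma~\ref{lem:222} as a black box, take $S=O(\log_2(\epsilon_1/\epsilon))$ stages, and bound $\sum_s T_s$ by the geometrically dominant last stage, yielding $O(1/(\mu\epsilon))$. One small bookkeeping note: the second branch of $T_s$ actually scales like $\Theta(2^{s/2}/\mu)$ (you dropped the $1/\mu$ prefactor), summing to $O(1/(\mu^{3/2}\sqrt{\epsilon}))$; your appeal to the ``small-$\epsilon$ regime'' is exactly the paper's explicit condition $\mu\ge\epsilon$, under which this term is dominated by $1/(\mu\epsilon)$.
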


If the objective function satisfies the convexity rather than PL condition, our method can still utilize this property to improve the sample complexity, as stated below.
\begin{theorem}\label{thm:main3}
	Assume $F(\w)$ is convex and $\left\|x^{*}\right\| \leq D$, where $x^{*}$ denote an optimal solution. Our algorithm attains an $\epsilon$-optimal point with a complexity of $\mathcal{O}\left({1}/{\epsilon^2}\right)$.
\end{theorem}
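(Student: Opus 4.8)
The plan is to reduce the convex case to the PL case already settled in \cref{thm:main2}, using the bound $\Norm{\w^*}\leq D$ (with $\w^*$ the minimizer of $F$) to control a regularization bias. Concretely, I would introduce the regularized objective
$$F_\lambda(\w) = F(\w) + \frac{\lambda}{2}\Norm{\w}^2,$$
which is $\lambda$-strongly convex and hence satisfies the $\lambda$-PL condition. Because the regularizer is deterministic, it contributes nothing to the variance bounds in Assumption~\ref{asm:stochastic2} and only shifts the objective's smoothness constant to $L_F+\lambda$; thus for $\lambda\leq 1$ all the global constants (and hence $L_1,L_2$) entering \cref{lem:222} remain valid. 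I would then run \cref{alg:Stagewise multi-storm} on $F_\lambda$, where the gradient estimate is the compositional estimator $\v_t$ augmented by the exact regularizer gradient $\lambda\w_t$. The estimator-error guarantee of \cref{lem:222} concerns only the compositional part and so is untouched, while the descent argument now invokes $\lambda$-strong convexity (equivalently $\mu=\lambda$ in the PL analysis).

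Applying \cref{thm:main2} with $\mu=\lambda$ then produces a point $\w$ with $\E[F_\lambda(\w)-\min_{\w'}F_\lambda(\w')]\leq \epsilon/2$ using $\mathcal{O}(1/(\lambda\epsilon))$ samples. The next step is the standard bias decomposition. Writing $\w_\lambda^*$ for the minimizer of $F_\lambda$ and $F_*=F(\w^*)$, I would bound
$$F(\w)-F_* \leq F_\lambda(\w)-F_* = \big(F_\lambda(\w)-F_\lambda(\w_\lambda^*)\big) + \big(F_\lambda(\w_\lambda^*)-F_*\big),$$
and control the second term via $F_\lambda(\w_\lambda^*)\leq F_\lambda(\w^*) = F_* + \frac{\lambda}{2}\Norm{\w^*}^2 \leq F_* + \frac{\lambda}{2}D^2$. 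Hence $F(\w)-F_*\leq \epsilon/2 + \frac{\lambda}{2}D^2$, and choosing $\lambda=\epsilon/D^2$ makes the bias exactly $\epsilon/2$, giving $\E[F(\w)-F_*]\leq \epsilon$. For the complexity, substituting $\lambda=\epsilon/D^2$ into $\mathcal{O}(1/(\lambda\epsilon))$ yields $\mathcal{O}(D^2/\epsilon^2)=\mathcal{O}(1/\epsilon^2)$, with the logarithmic number of stages $S$ from the PL analysis absorbed into the $\mathcal{O}(\cdot)$.

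The main obstacle I expect is the bookkeeping in the first paragraph, namely verifying rigorously that running the stage-wise method on $F_\lambda$ satisfies every hypothesis of \cref{lem:222} with constants that do not degrade as $\lambda\to 0$. In particular one must check that: (i) the base case of the stage-wise induction holds, since $\epsilon_1=8L_1/\lambda$ grows as $\lambda$ shrinks but must still dominate the initial gap $F_\lambda(\w_0)-\min F_\lambda \leq \Delta_F + \frac{\lambda}{2}\Norm{\w_0}^2$; (ii) the extra $\lambda\w_t$ term in the update disturbs neither the boundedness of the Jacobian estimators $\v_t^i$ (secured by the projection $\Pi_{L_f}$) nor the variance-reduction recursion; and (iii) the $\lambda$-strong convexity is correctly threaded through the per-stage descent inequality that produces the halving $\epsilon_s=\epsilon_{s-1}/2$. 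Once these are in place, the geometric sum over stages collapses to the last-stage cost $\mathcal{O}(1/(\lambda\epsilon))$ exactly as in the PL proof, and the balancing choice $\lambda=\epsilon/D^2$ delivers the claimed $\mathcal{O}(1/\epsilon^2)$ complexity.
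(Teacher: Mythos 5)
Your proposal follows essentially the same route as the paper's own proof: regularize $F$ by $\frac{\lambda}{2}\Norm{\w}^2$ to obtain a $\lambda$-strongly convex (hence $\lambda$-PL) objective, invoke \cref{thm:main2}, bound the regularization bias by $\frac{\lambda}{2}\Norm{\w^*}^2\leq \frac{\lambda}{2}D^2$, and balance with $\lambda=\Theta(\epsilon/D^2)$ to get $\mathcal{O}(1/\epsilon^2)$. If anything, your write-up is slightly more careful than the paper's (which writes $\frac{\mu}{2}D$ where $\frac{\mu}{2}D^2$ is meant, and does not discuss the bookkeeping of running the stage-wise method on the regularized objective), so no correction is needed.
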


\textbf{Remark:} 
Our Stage-wise SMVR method behaves optimally when the objective function enjoys the PL condition or convexity. For smooth and convex functions, our method matches the $\mathcal{O}\left({1}/{\epsilon^2}\right)$ lower bound for this problem \citep{Agarwal2012InformationTheoreticLB}. When it comes to the PL condition, we note that there exists $\mathcal{O}\left({1}/\left(\mu \epsilon\right)\right)$ lower bound for the $\mu$-strongly convex setting  \citep{Agarwal2012InformationTheoreticLB}, which is a special case of the PL condition, thus proving our method is optimal. Compared with existing results \citep{Zhang2020OptimalAF}, our analysis requires weaker assumptions and enjoys a better and optimal dependence in terms of $\mu$.

\section{SMVR with Adaptive Learning Rates}\label{Sec:4}
In this section, we show that the proposed method can be extended to adaptive learning rates and obtains the same sample complexity. Adaptive learning rates are widely used in stochastic optimization problems, and many successful methods have been proposed, such as AdaGrad \citep{COLT:Adaptive:Subgradient}, Adam \citep{kingma:adam}, AMSGrad \citep{j.2018on}, AdaBound \citep{luo2018adaptive}, etc. However, it remains less investigated in the stochastic multi-level literature.  Inspired by the above methods, we develop an adaptive version of our method, named Adaptive SMVR. To use adaptive learning rates, we revise the weight update step in \cref{alg:multi-storm} as follows:
\begin{align}\label{rule1}
    \w_{t+1} = \w_t - \frac{\eta_t}{\sqrt{\h_{t}}+\delta} \v_t,
\end{align}
where $\delta > 0$ is a parameter to avoid dividing zero and the parameter $\h_{t}$ can take following forms:
\begin{align}\label{rule2}
    \begin{aligned}
    &\text{AdaGrad-type:} \quad \,\;\mathbf{h}_{t}=\frac{1}{t} \sum_{i=1}^{t} {\v}_{i}^{2}\\
    &\text{Adam-type:} \quad\quad\;\;\, \mathbf{h}_{t}=\left(1-\beta_{t}^{\prime}\right) \mathbf{h}_{t-1}+\beta_{t}^{\prime} {\v}_{t}^{2}\\
    &\text{AMSGrad-type:} \quad \mathbf{h}_{t}^{\prime}=\left(1-\beta_{t}^{\prime}\right) \mathbf{h}_{t-1}^{\prime}+\beta_{t}^{\prime} \v_{t}^{2},\\ &\quad\quad\quad\quad\quad\quad\quad \ \ \mathbf{h}_{t}=\max \left(\mathbf{h}_{t-1}, \mathbf{h}_{t}^{\prime}\right) \\
    &\text{AdaBound-type:} \ \ \ \, \mathbf{h}_{t}^{\prime}=\left(1-\beta_{t}^{\prime}\right) \mathbf{h}_{t-1}^{\prime}+\beta_{t}^{\prime} \v_{t}^{2}, \\ &\quad\quad\quad\quad\quad\quad\quad \ \ \, \mathbf{h}_{t}=\Pi_{\left[1 / c_{u}^{2}, 1 / c_{l}^{2}\right]}\left[\mathbf{h}_{t}^{\prime}\right]
    \end{aligned}
\end{align}
where $c_{l} \leq c_{u}$ and $\Pi_{[a, b]}$ projects the input into the range $[a, b]$. Inspired by the recent study of Adam-style methods~\cite{guo2022stochastic}, we can give the sample complexity of the Adaptive SMVR in Theorem~\ref{thm:T3} using similar analysis. 
\begin{theorem}\label{thm:T3}
	If we choose $c=10L_{3}^2$, $\eta_t = \left(a+t \right)^{-1/3}$ and  $a={\left( 20L_3^3\right)}^{3/2}$, Adaptive SMVR with learning rate defined in (\ref{rule1}) and (\ref{rule2}), can obtain a stationary point in $\mathcal{O}(1/\epsilon^{3})$ iterations, where $L_3$ is a constant indicated in the proof.
\end{theorem}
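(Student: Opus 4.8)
The plan is to mirror the proof of \cref{thm:main} for the non-adaptive SMVR, since the function-value and Jacobian estimators $\u_t^i$ and $\v_t^i$ are generated by the \emph{same} recursions (\ref{eq:esti_u}) and (\ref{eq:esti_v}); the only change is the preconditioned update (\ref{rule1}). The central new ingredient is to show that the coordinate-wise adaptive step $\eta_t/(\sqrt{\h_t}+\delta)$ is sandwiched between two constant multiples of $\eta_t$, so that the entire error-propagation machinery of the non-adaptive analysis survives with the extra factors absorbed into $L_3$.

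First I would establish boundedness. The projection $\Pi_{L_f}$ in (\ref{eq:esti_v}) guarantees $\Norm{\v_t^i}\le L_f$ at every level, so the assembled estimator $\v_t=\prod_{i=1}^K\v_t^i$ satisfies $\Norm{\v_t}\le L_f^K$. Consequently every coordinate of $\h_t$ is uniformly bounded for all four rules in (\ref{rule2}): the AdaGrad-, Adam-, and AMSGrad-type updates form an average, convex combination, or coordinate-wise maximum of past entries $(\v_i)_j^2\le L_f^{2K}$, while the AdaBound-type update clips $\h_t$ into a fixed interval. Hence there is a constant $G$, depending only on $L_f$, $K$ and the clipping bounds, with $0\le(\h_t)_j\le G^2$, which yields the uniform two-sided bound
\begin{equation*}
\frac{\eta_t}{G+\delta}\;\le\;\frac{\eta_t}{\sqrt{(\h_t)_j}+\delta}\;\le\;\frac{\eta_t}{\delta}
\end{equation*}
for every coordinate $j$ and every $t$. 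This sandwich is the key enabling fact.

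Next I would run the descent argument. Writing the update as $\w_{t+1}=\w_t-\mathbf{P}_t\v_t$ with the diagonal preconditioner $\mathbf{P}_t=\mathrm{diag}(\eta_t/(\sqrt{\h_t}+\delta))$ and invoking $L_F$-smoothness (\cref{asm:stochastic1}) gives
\begin{equation*}
F(\w_{t+1})\le F(\w_t)-\langle\nabla F(\w_t),\mathbf{P}_t\v_t\rangle+\tfrac{L_F}{2}\Norm{\mathbf{P}_t\v_t}^2.
\end{equation*}
Splitting $\v_t=\nabla F(\w_t)+(\v_t-\nabla F(\w_t))$ inside the inner product, the diagonal part over $\nabla F(\w_t)$ is lower-bounded using $\eta_t/(G+\delta)$, while the cross term involving the gradient error is controlled by a per-coordinate Young's inequality, leaving a net $\tfrac{\eta_t}{2(G+\delta)}\Norm{\nabla F(\w_t)}^2$ and an error proportional to $\Norm{\v_t-\nabla F(\w_t)}^2$; the quadratic remainder is bounded via $\Norm{\mathbf{P}_t\v_t}^2\le(\eta_t/\delta)^2 L_f^{2K}$. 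What remains is exactly $\sum_t\E\Norm{\v_t-\nabla F(\w_t)}^2$, and here I would reuse the estimator-error recursions of \cref{thm:main}: they bound the per-level errors $\Norm{f_i(\u_t^{i-1})-\u_t^i}^2$ and $\Norm{\v_t^i-\nabla f_i(\u_t^{i-1})}^2$ in terms of the iterate movement, cascading (through \cref{asm:stoc_smooth3} and $\u_t^0=\w_t$) down to $\Norm{\w_{t+1}-\w_t}\le(\eta_t/\delta)\Norm{\v_t}\le(\eta_t/\delta)L_f^K$, i.e.\ still $\mathcal{O}(\eta_t)$. Thus every constant is merely rescaled by powers of $1/\delta$ and $L_f^K$, all folded into $L_3$. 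Telescoping the Lyapunov function with $\eta_t=(a+t)^{-1/3}$ then delivers $\tfrac1T\sum_t\E\Norm{\nabla F(\w_t)}^2=\mathcal{O}(T^{-2/3})$, and the $\mathcal{O}(1/\epsilon^3)$ iteration bound follows after choosing $\tau$ uniformly and applying Jensen's inequality.

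The hard part will be the inner product $\langle\nabla F(\w_t),\mathbf{P}_t\v_t\rangle$: for the Adam-, AMSGrad-, and AdaBound-type rules $\h_t$ (hence $\mathbf{P}_t$) depends on $\v_t$ itself, so the coordinate-wise weights are correlated with the gradient error $\v_t-\nabla F(\w_t)$ and cannot be pulled out of expectations or treated as constant. Following \citet{guo2022stochastic}, I would circumvent this not by decoupling but by using the deterministic sandwich of the previous paragraph to bound each weighted coordinate by its worst-case constant weight---the positive part lower-bounded by $\eta_t/(G+\delta)$ and the error part upper-bounded by $\eta_t/\delta$. This reduces the adaptive descent to the non-adaptive one up to the factor $(G+\delta)/\delta$, so no genuinely new estimation-error analysis is required: the boundedness granted by the projection $\Pi_{L_f}$ is precisely what makes both the original multi-level control and the adaptive lower bound possible. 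The remaining work is the bookkeeping that keeps all absorbed constants in $L_3$ finite, $\delta$-dependent, and crucially $\epsilon$-independent.
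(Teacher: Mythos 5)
Your proposal is correct and follows essentially the same route as the paper: the paper's proof likewise observes that the projection-induced bound on $\Norm{\v_t}$ makes the scaling factor $1/(\sqrt{\h_t}+\delta)$ bounded above and below, i.e.\ your sandwich $\eta_t/(G+\delta)\le\tilde\eta_t\le\eta_t/\delta$ is exactly the hypothesis $\eta_t c_l\le\tilde\eta_t\le\eta_t c_u$ of its adaptive descent lemma (a variant of \cref{lem:3}, following \citet{guo2022stochastic}), after which the constants $c_l,c_u$ are absorbed into $L_3$ and the Lyapunov/telescoping argument of \cref{thm:main} is rerun verbatim to get $\mathcal{O}(1/\epsilon^3)$. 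Your coordinate-wise Young's-inequality treatment of the preconditioner is just a slightly more explicit rendering of the same step, so no substantive difference remains.
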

\textbf{Remark:} The sample complexity is still at the order of $\mathcal{O}\left(1 / \epsilon^{3}\right)$, when the adaptive learning rate is used. Our adaptive SMVR changes the learning rate automatically, which reduces the need to tune hyper-parameters manually. When the objective function satisfies the convexity or PL condition, Theorem~\ref{thm:main2} and Theorem~\ref{thm:main3} can be easily extended to the adaptive version with the same sample complexity.
\begin{figure*}[ht]
\vskip 0.2in
	\begin{center}
		\subfigure{
			%\rotatebox{90}{\scriptsize{~~~~~~~~~~~Loss vs $$\#$$ of samples}}
			\includegraphics[width=0.24\textwidth]{./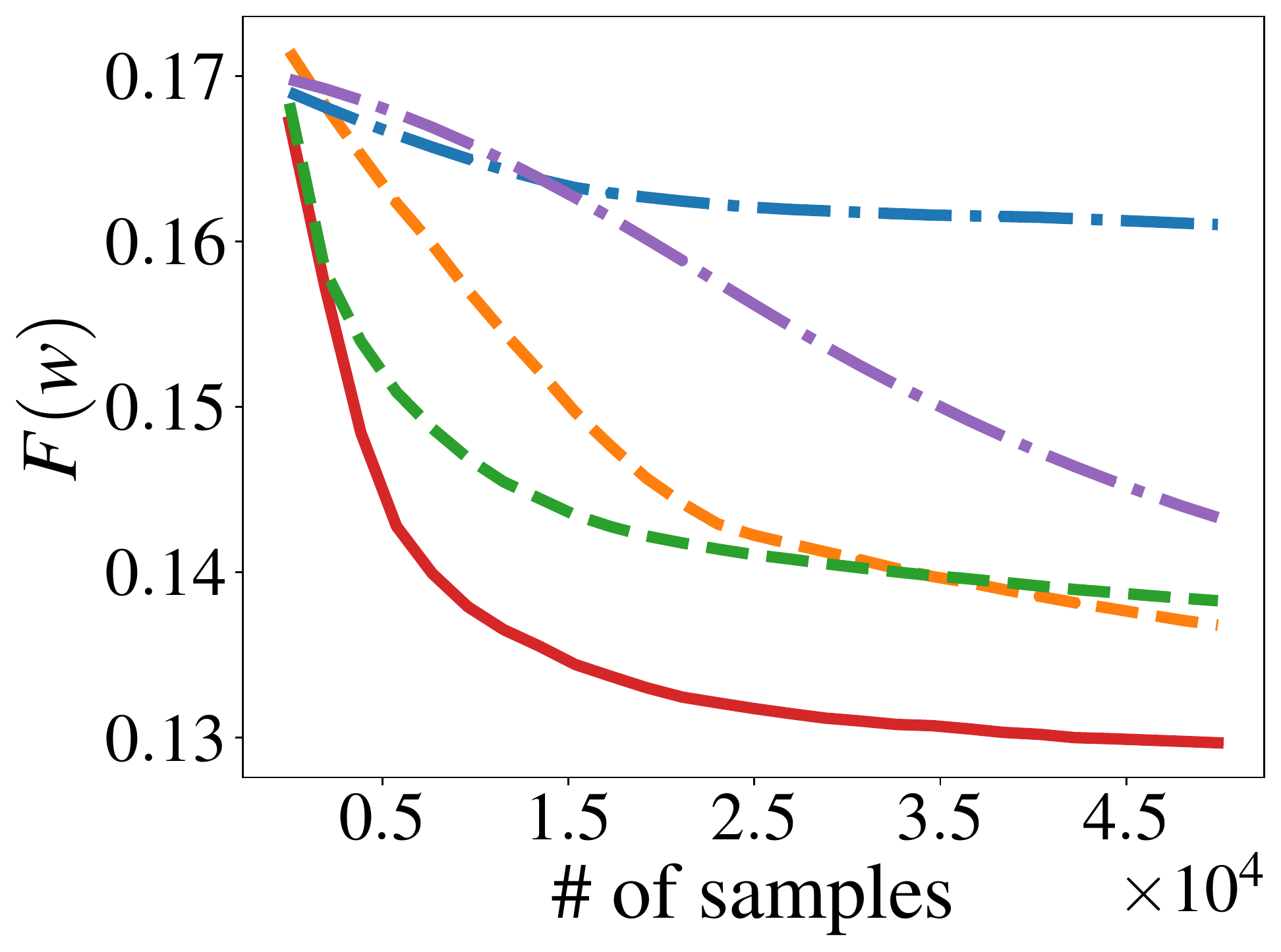}
			\includegraphics[width=0.24\textwidth]{./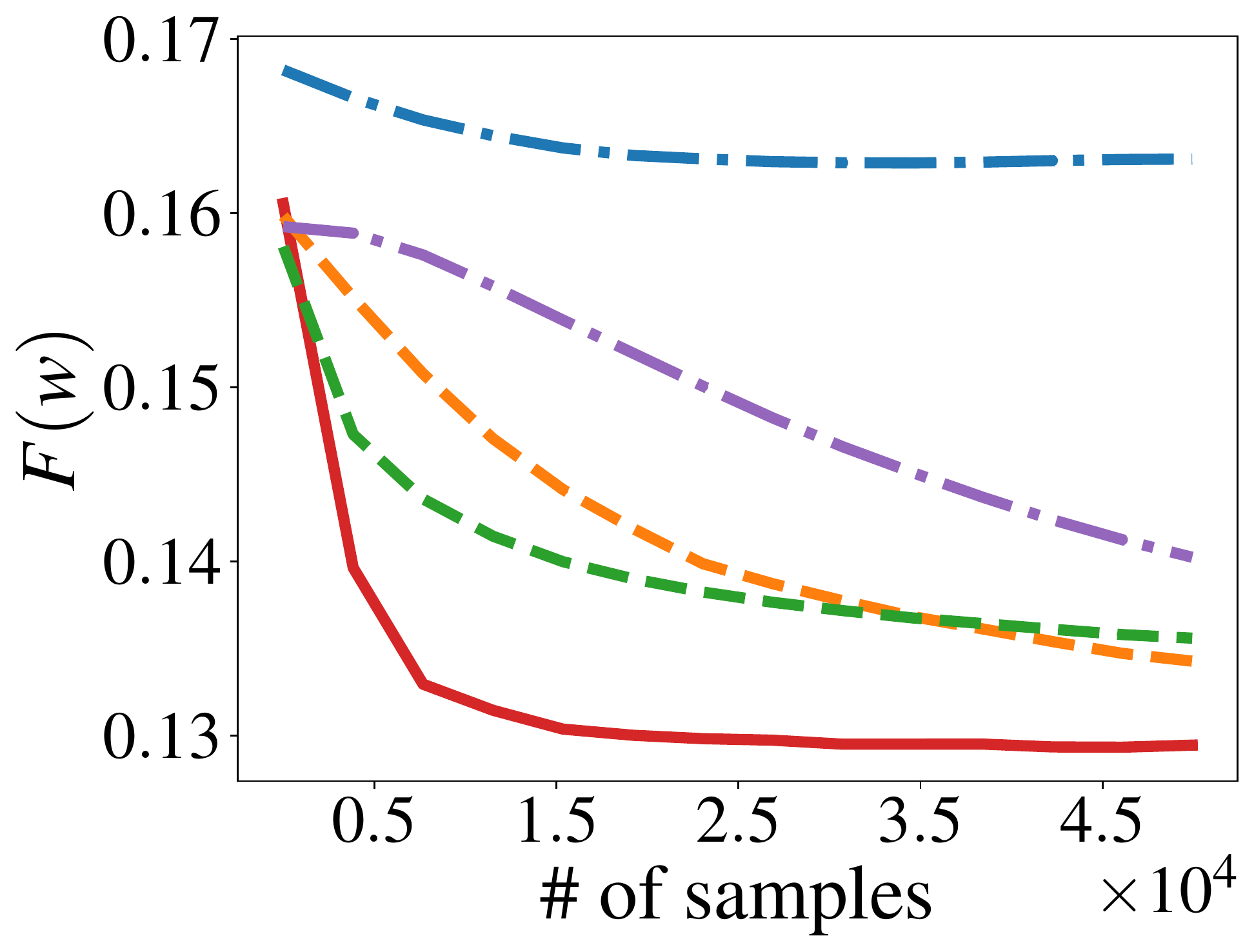}
			\includegraphics[width=0.24\textwidth]{./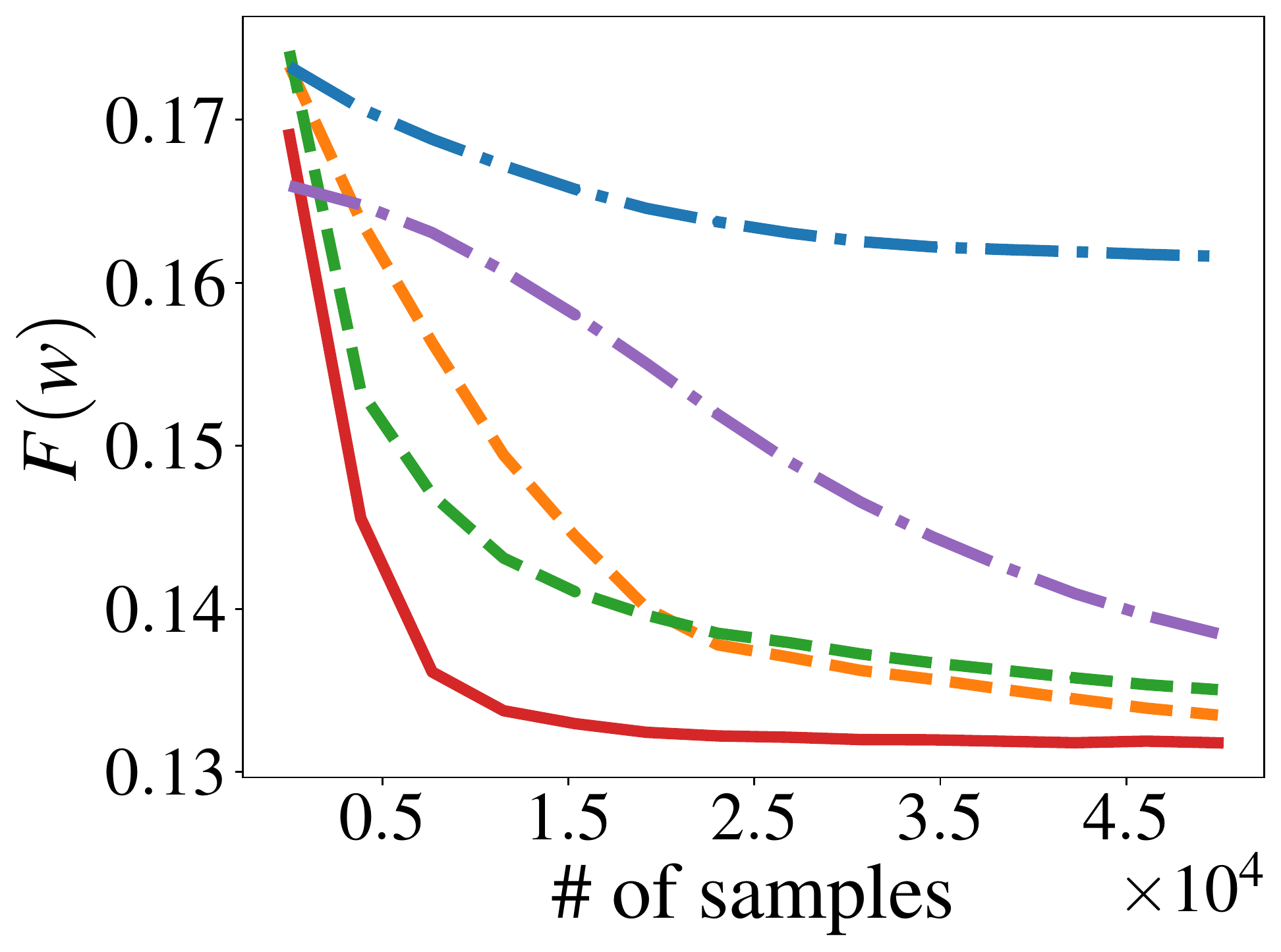}
			\includegraphics[width=0.24\textwidth]{./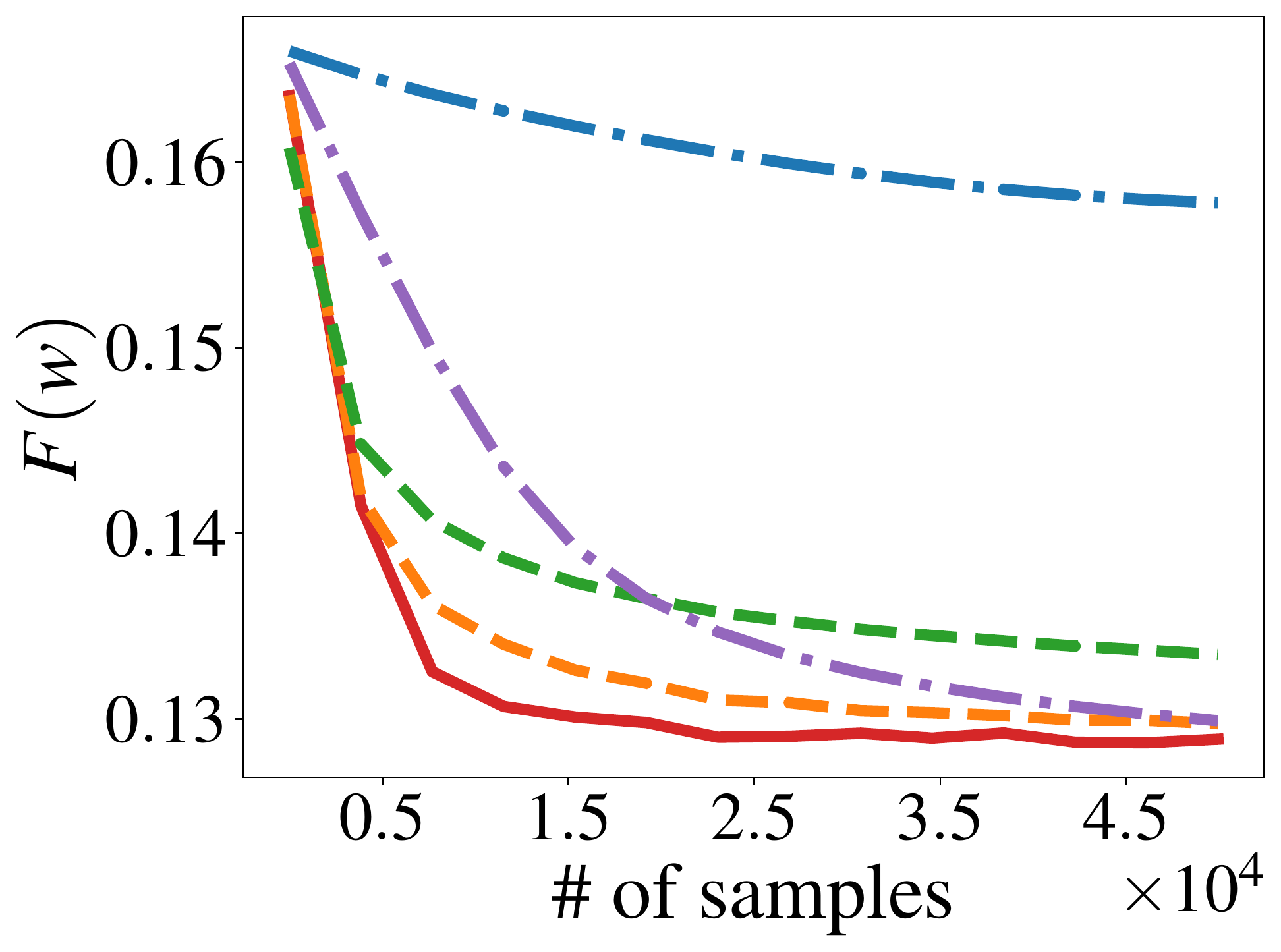}
		}%
		%\vspace{-4mm}
		\setcounter{subfigure}{0}
		\subfigure[Industry-10]{
			%\rotatebox{90}{\scriptsize{~~~~~~Gradient vs $$\#$$ of samples}}
			\includegraphics[width=0.24\textwidth]{./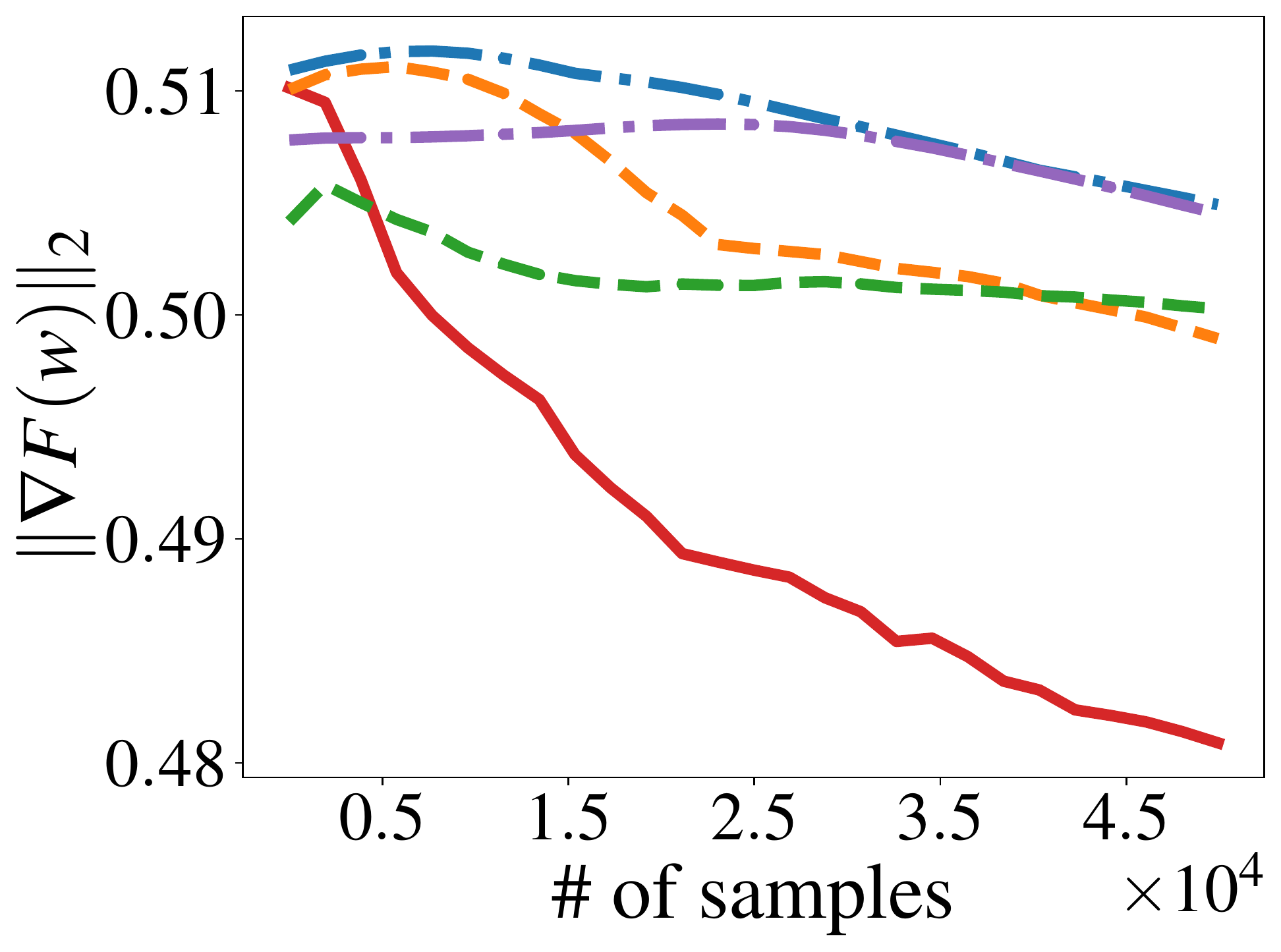}
		}
		\hspace{-3mm}
		\subfigure[Industry-12]{
			%\vspace{+3mm}
			\includegraphics[width=0.24\textwidth]{./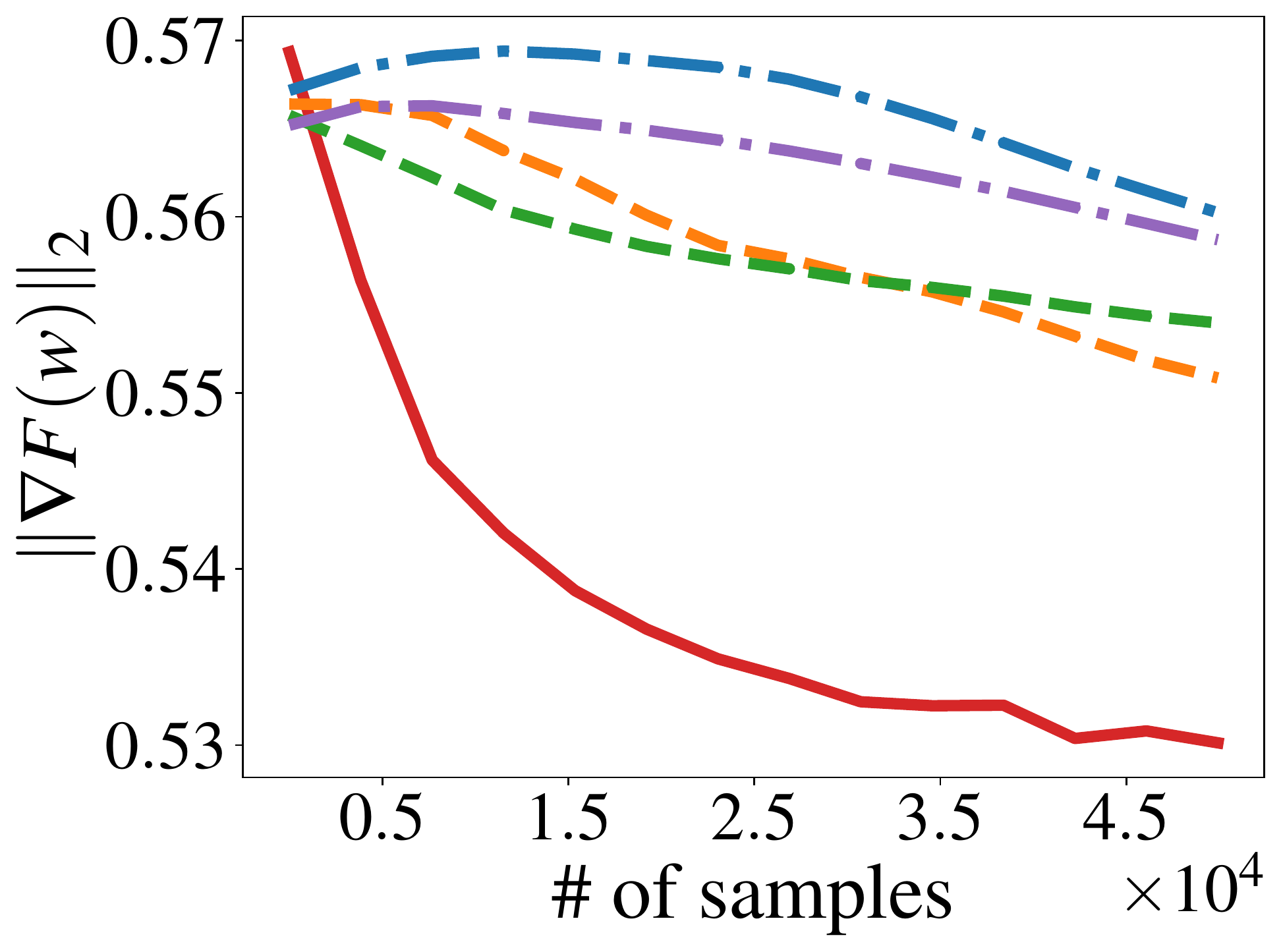}
		}
		\hspace{-3mm}
		\subfigure[Industry-17]{
			%\vspace{+3mm}
			\includegraphics[width=0.24\textwidth]{./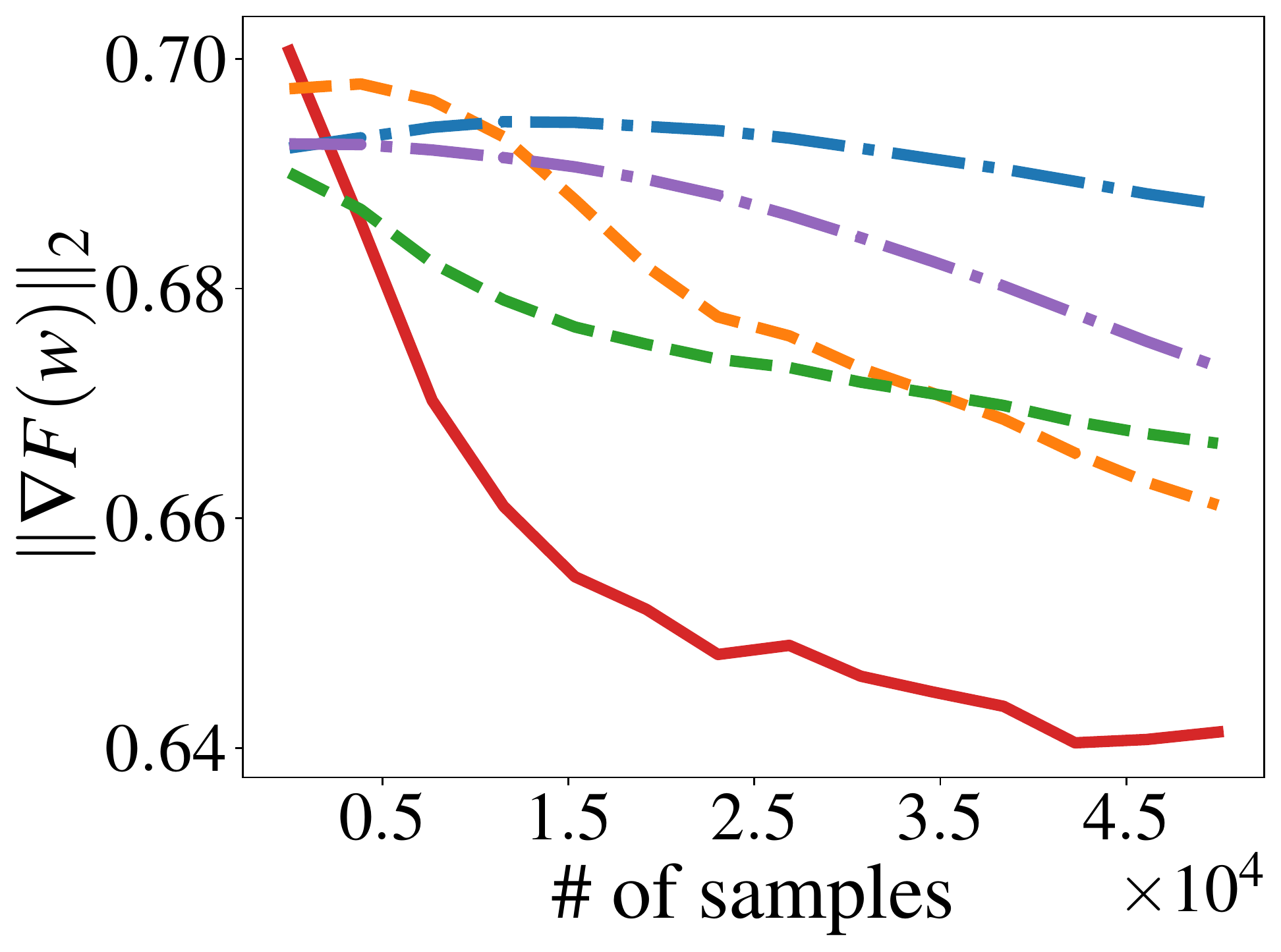}
		}
		\hspace{-3mm}
		\subfigure[Industry-30]{
			\includegraphics[width=0.24\textwidth]{./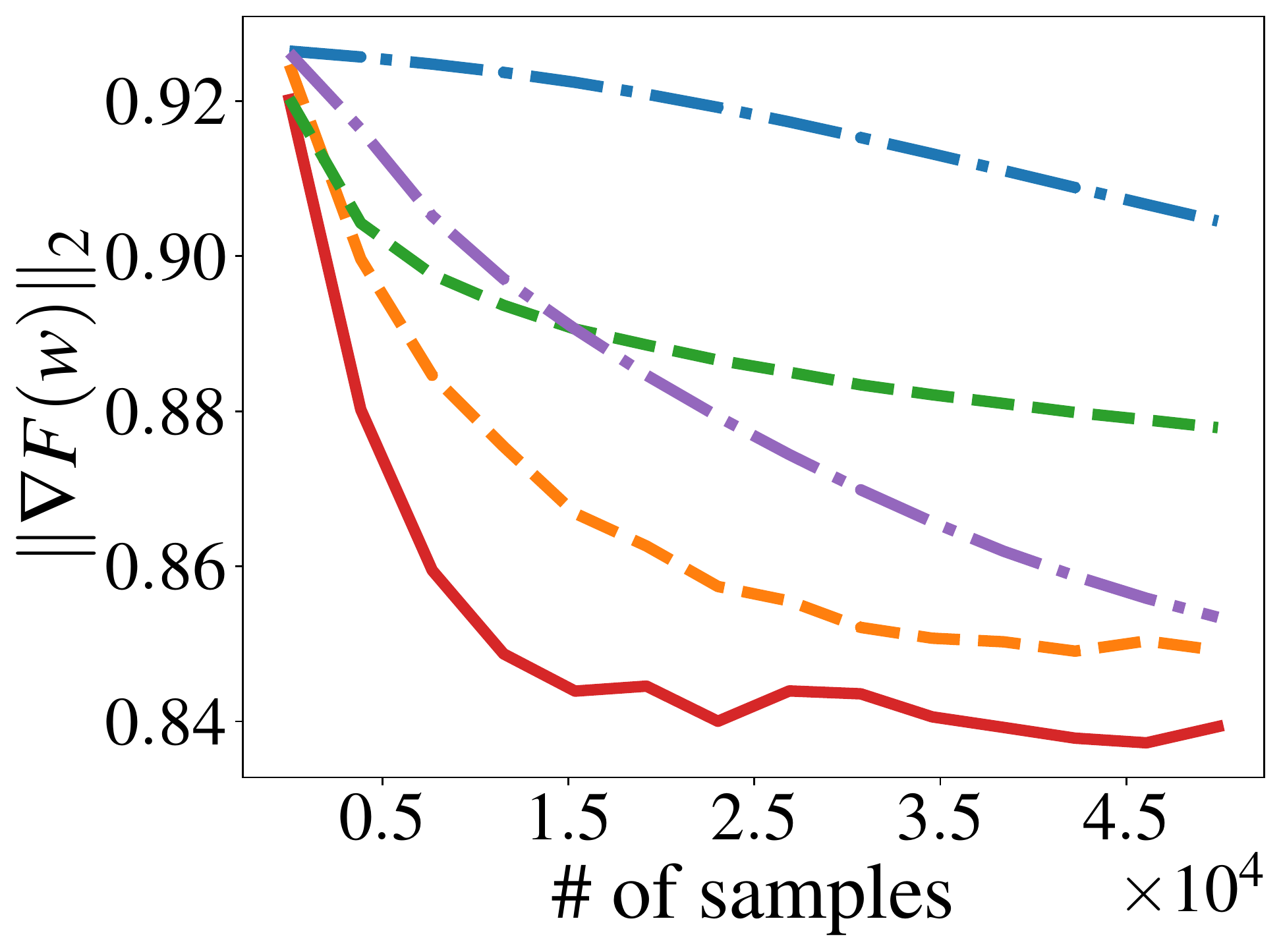}
		}%
		\vskip -0.05in
		\setcounter{subfigure}{1}
		\subfigure{
			\includegraphics[width=0.8\textwidth]{./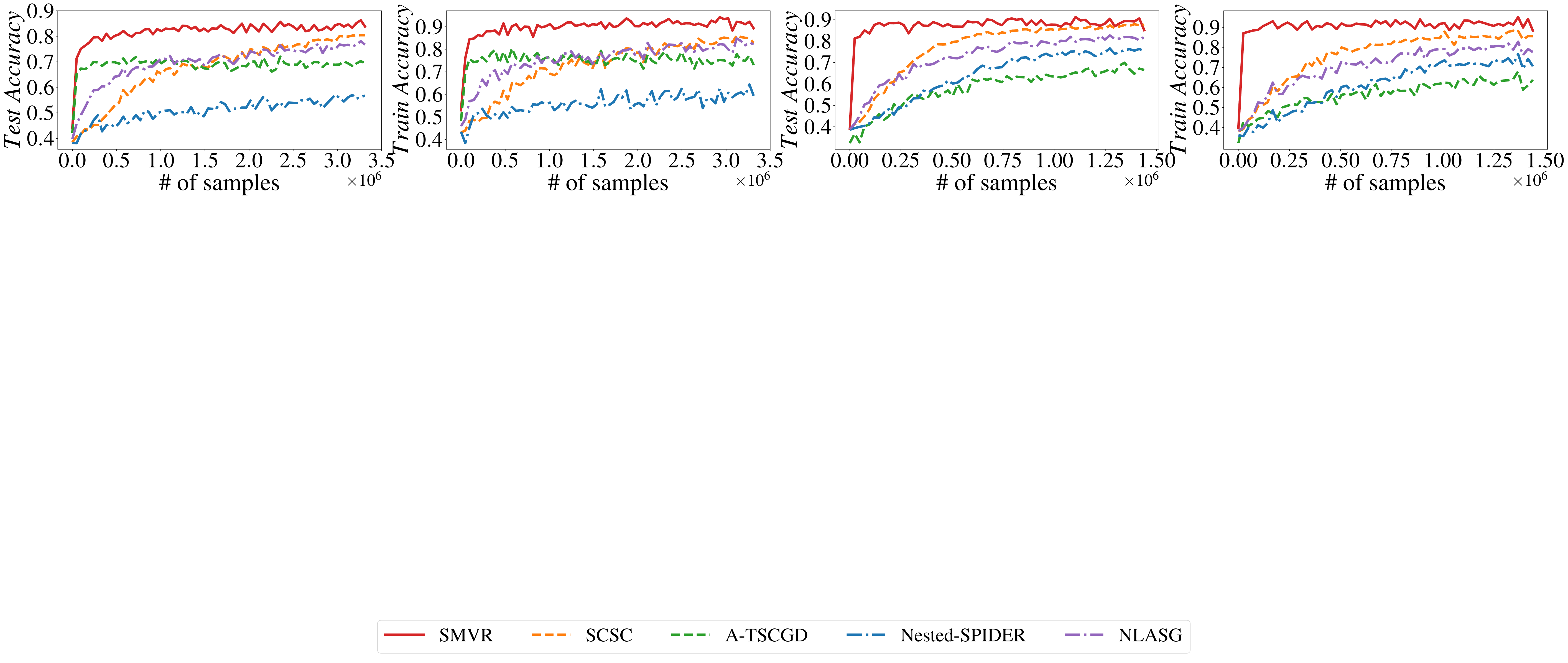}
		}%
		\vskip -0.05in
		\caption{Results for Risk-Averse Portfolio Optimization.}
		\label{fig:1}
	\end{center}
\vskip -0.2in
\end{figure*}
\begin{figure*}[!ht]
\vskip 0.2in
	\begin{center}
		\subfigure{
			%\rotatebox{90}{\scriptsize{~~~~~~~~~~~Loss vs $$\#$$ of samples}}
			\includegraphics[width=0.24\textwidth]{./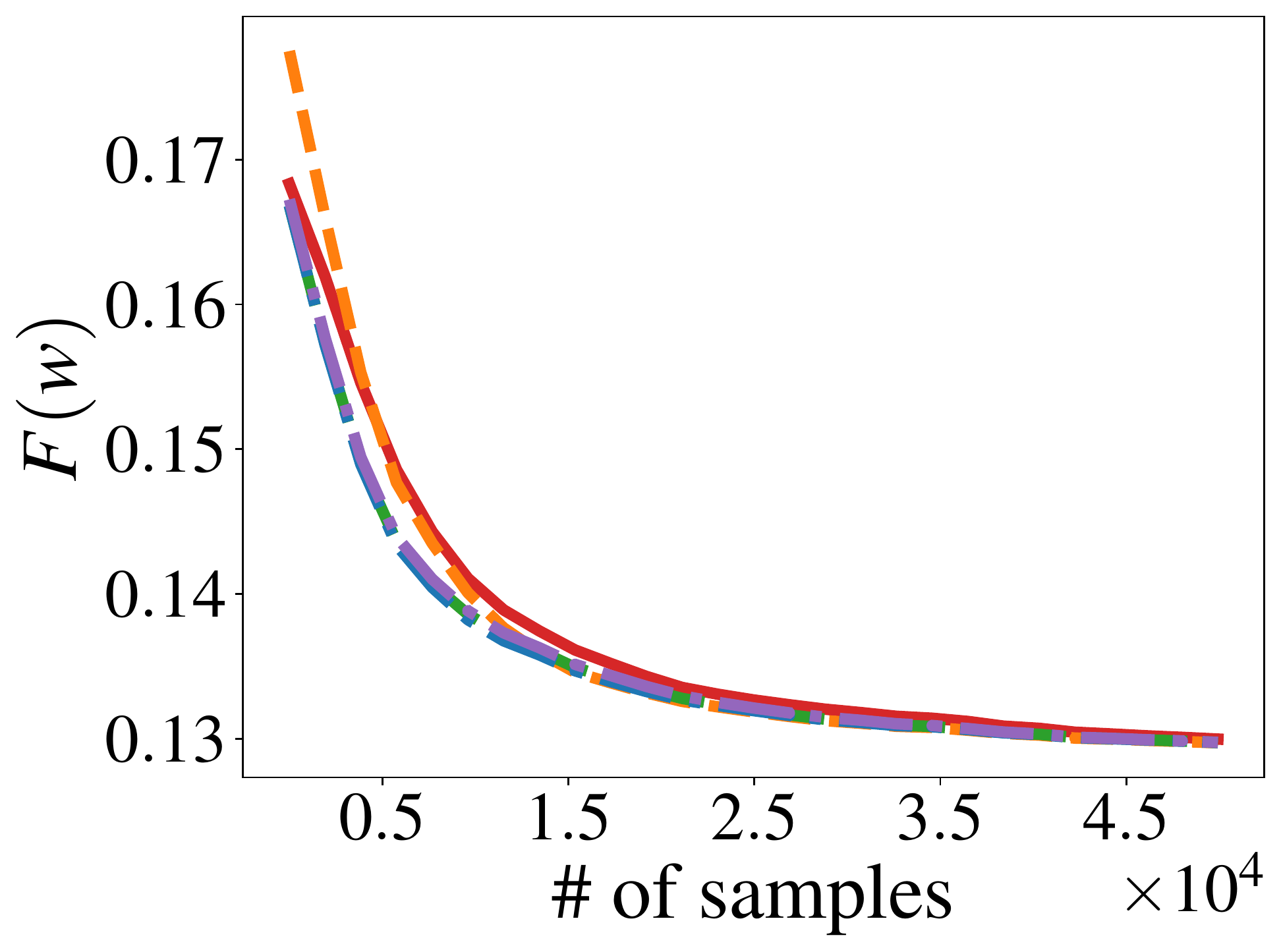}
			\includegraphics[width=0.24\textwidth]{./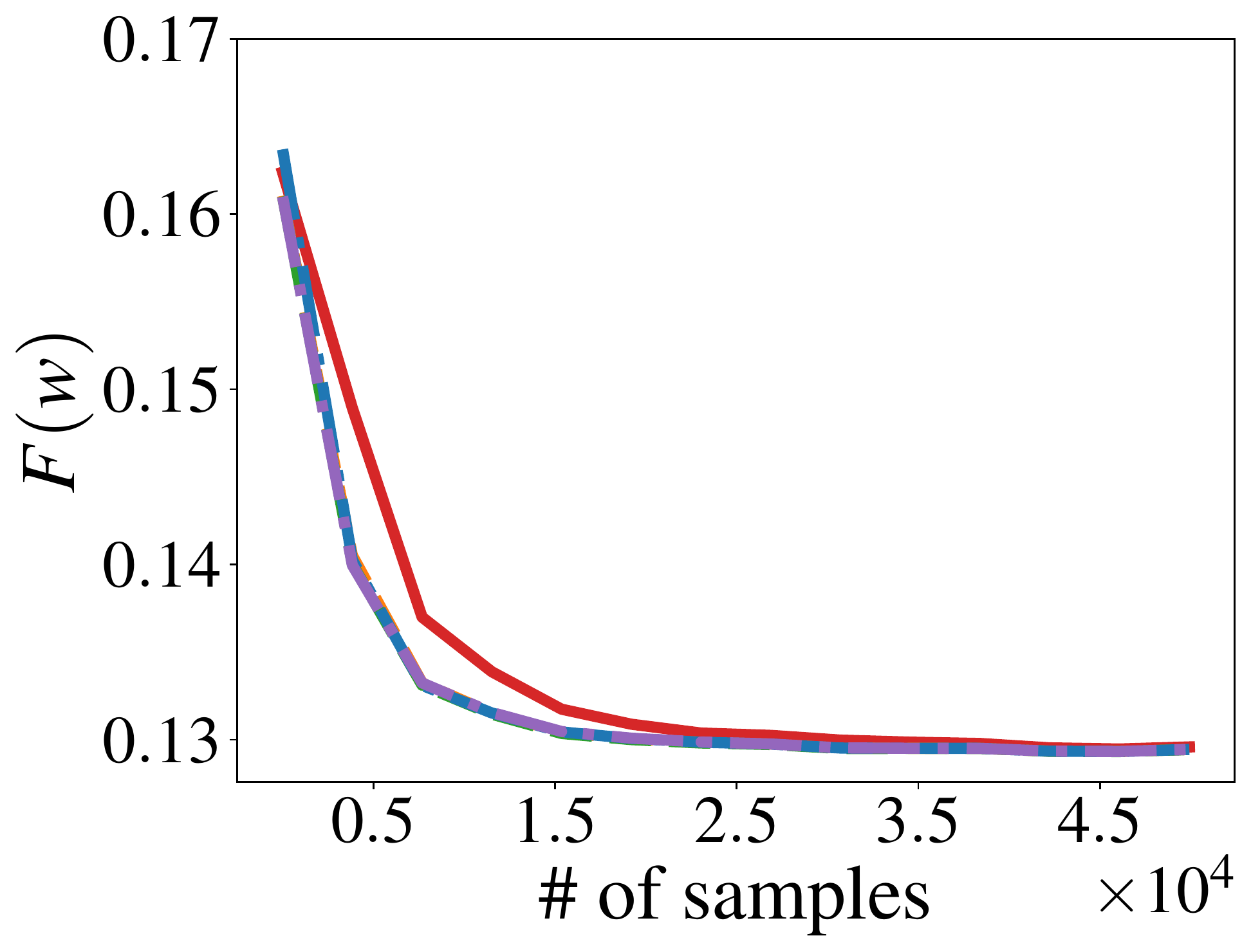}
			\includegraphics[width=0.24\textwidth]{./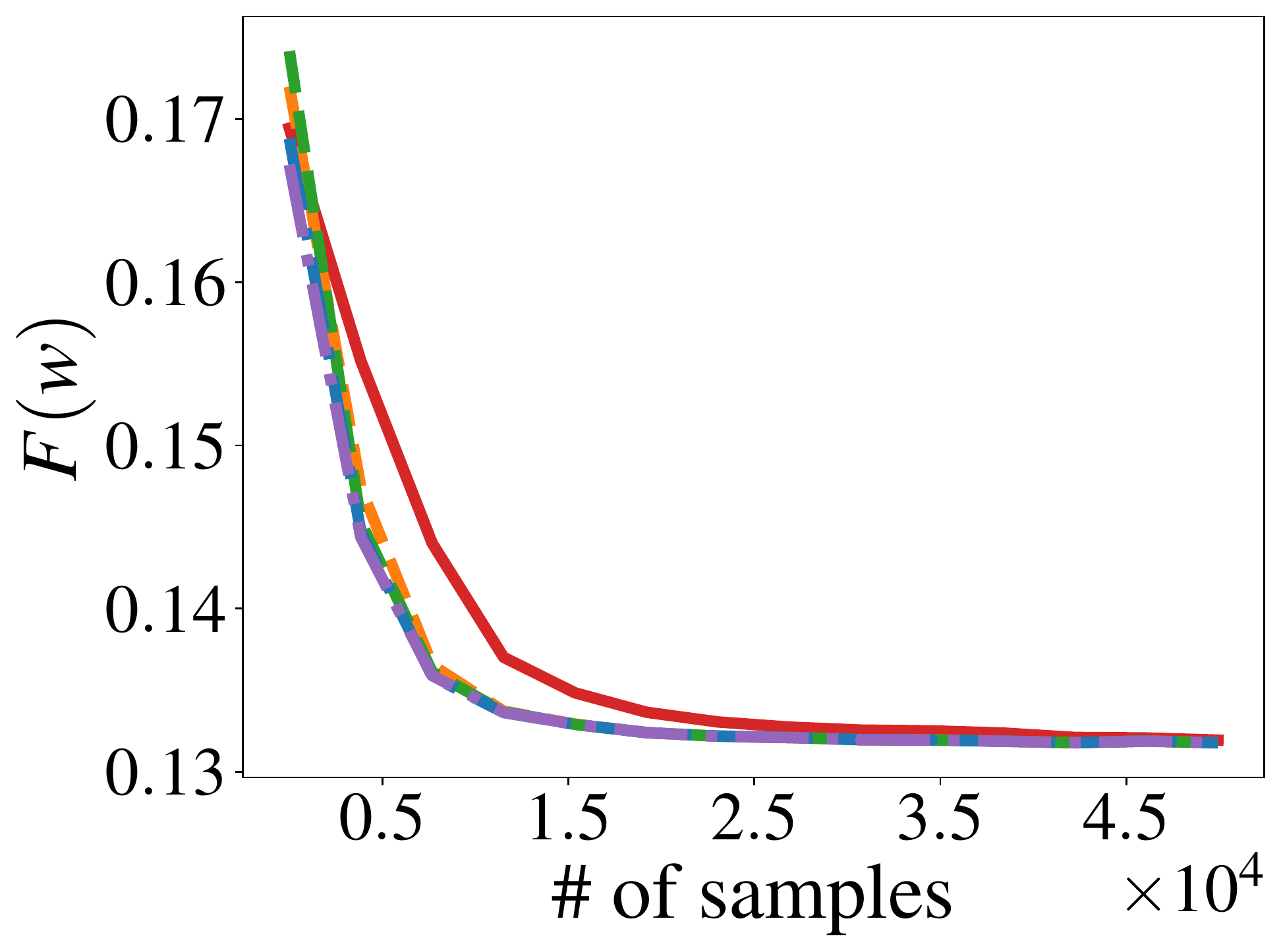}
			\includegraphics[width=0.24\textwidth]{./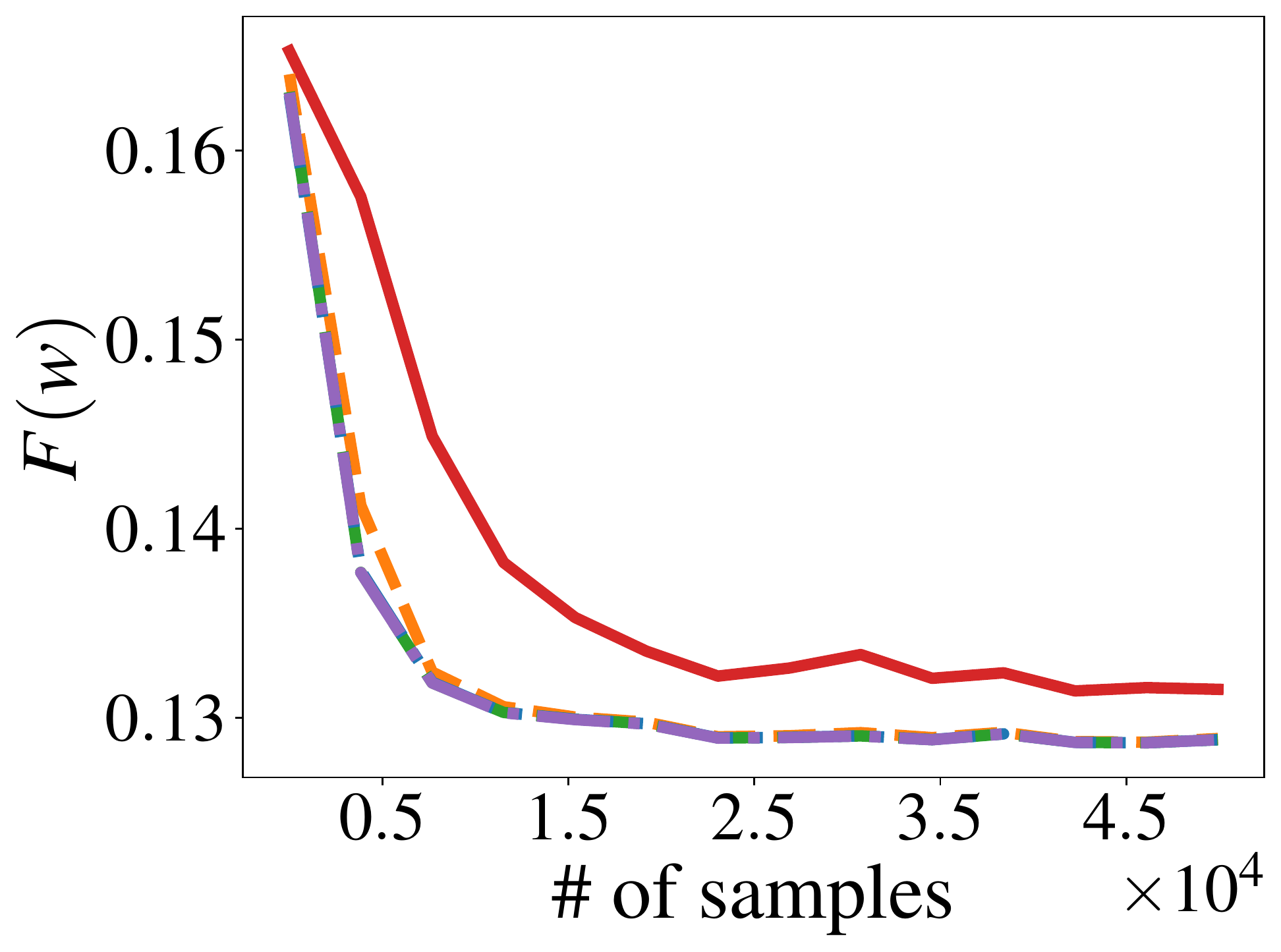}
		}%
		%\vspace{-4mm}
		\setcounter{subfigure}{0}
		\subfigure[Industry-10]{
			%\rotatebox{90}{\scriptsize{~~~~~~Gradient vs $$\#$$ of samples}}
			\includegraphics[width=0.24\textwidth]{./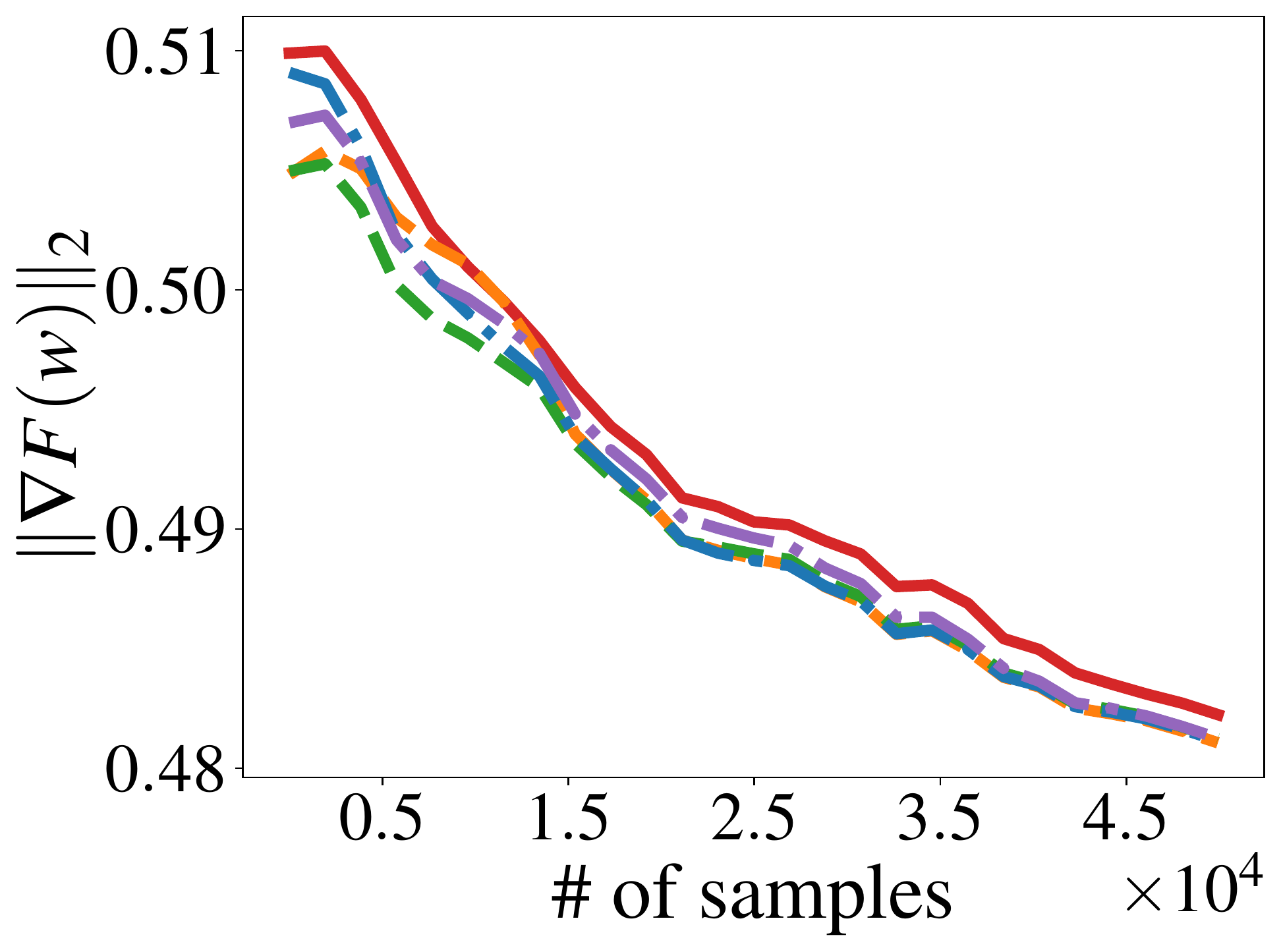}
		}
		\hspace{-3mm}
		\subfigure[Industry-12]{
			%\vspace{+3mm}
			\includegraphics[width=0.24\textwidth]{./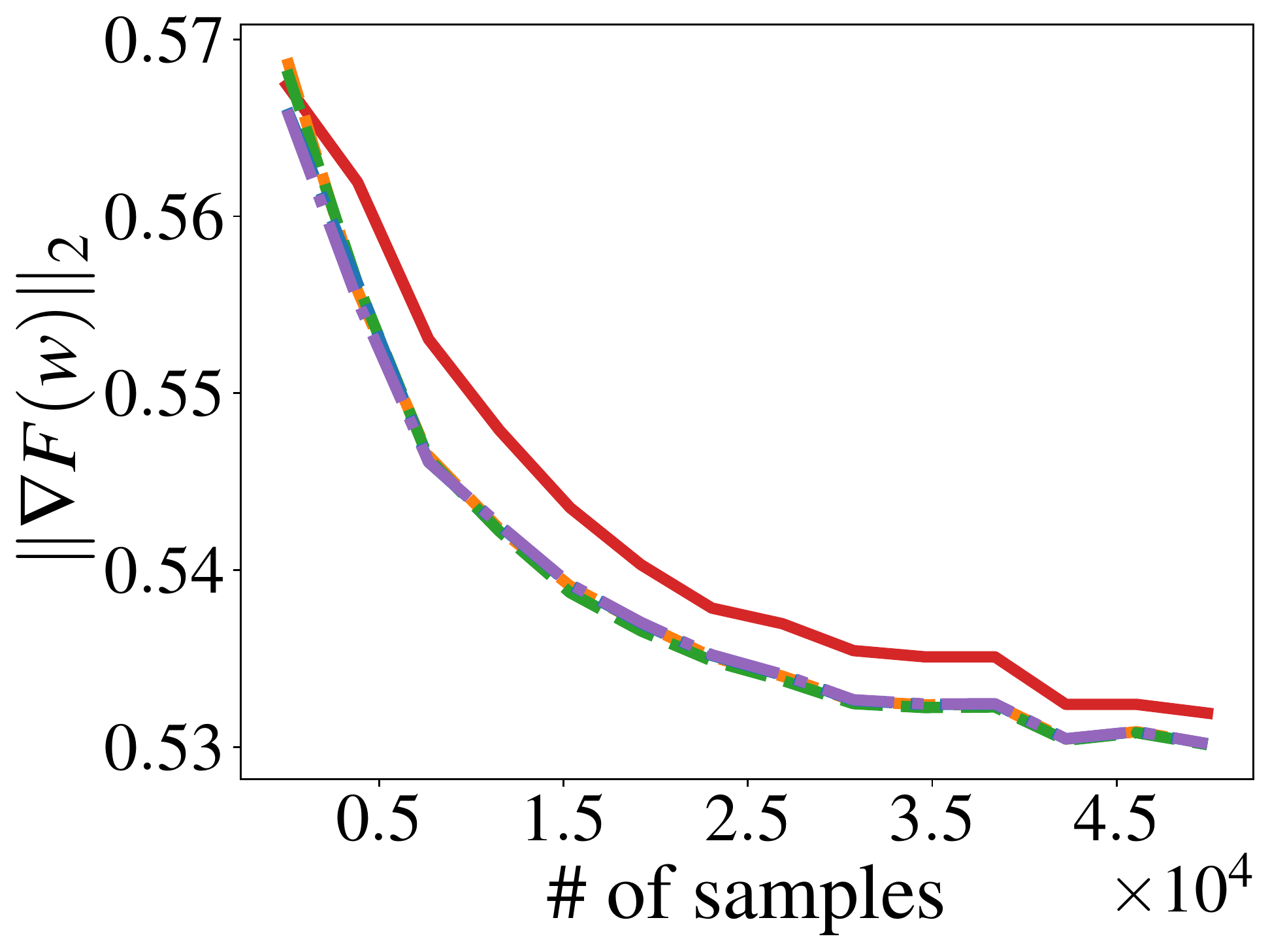}
		}
		\hspace{-3mm}
		\subfigure[Industry-17]{
			%\vspace{+3mm}
			\includegraphics[width=0.24\textwidth]{./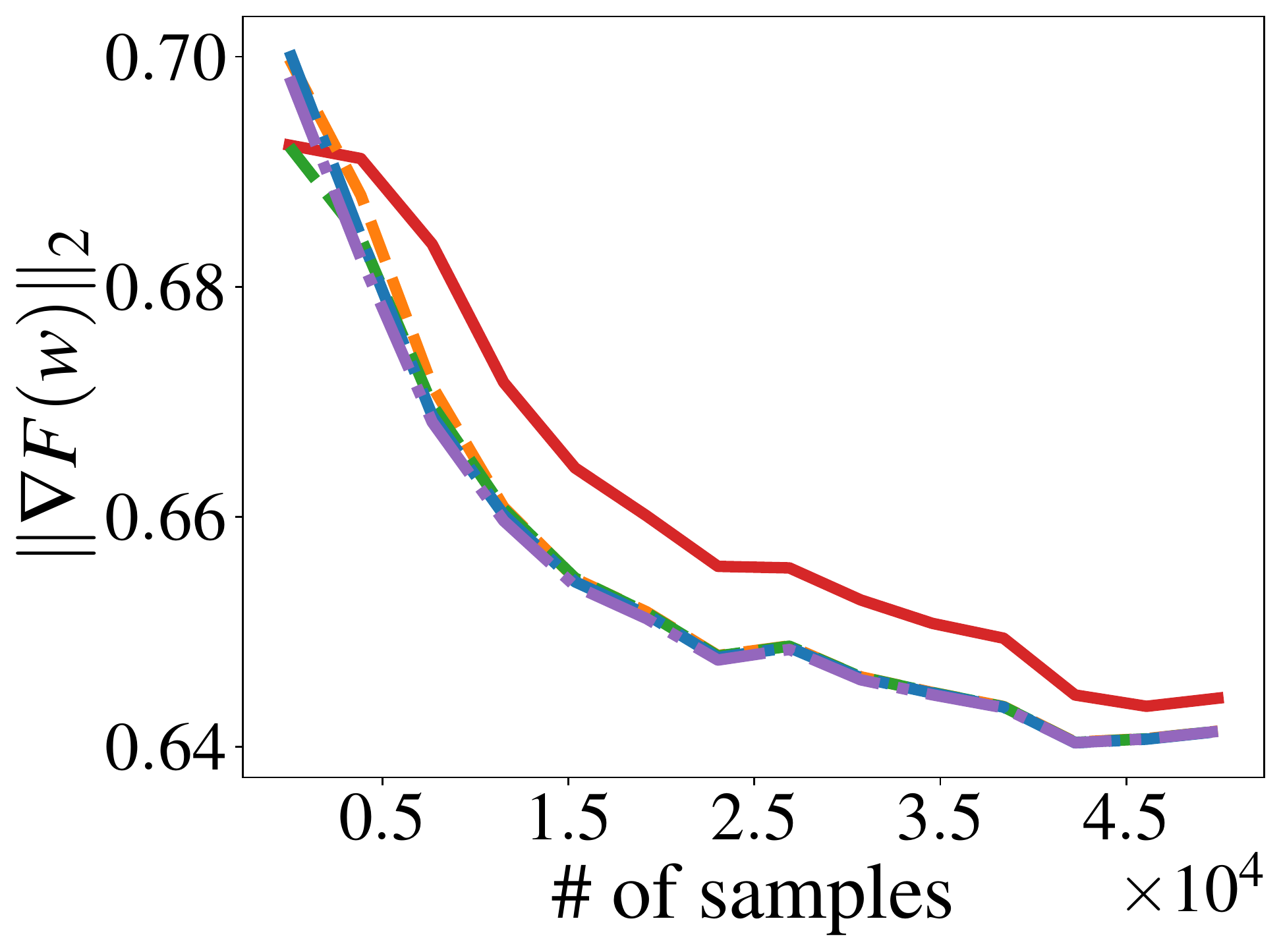}
		}
		\hspace{-3mm}
		\subfigure[Industry-30]{
			\includegraphics[width=0.24\textwidth]{./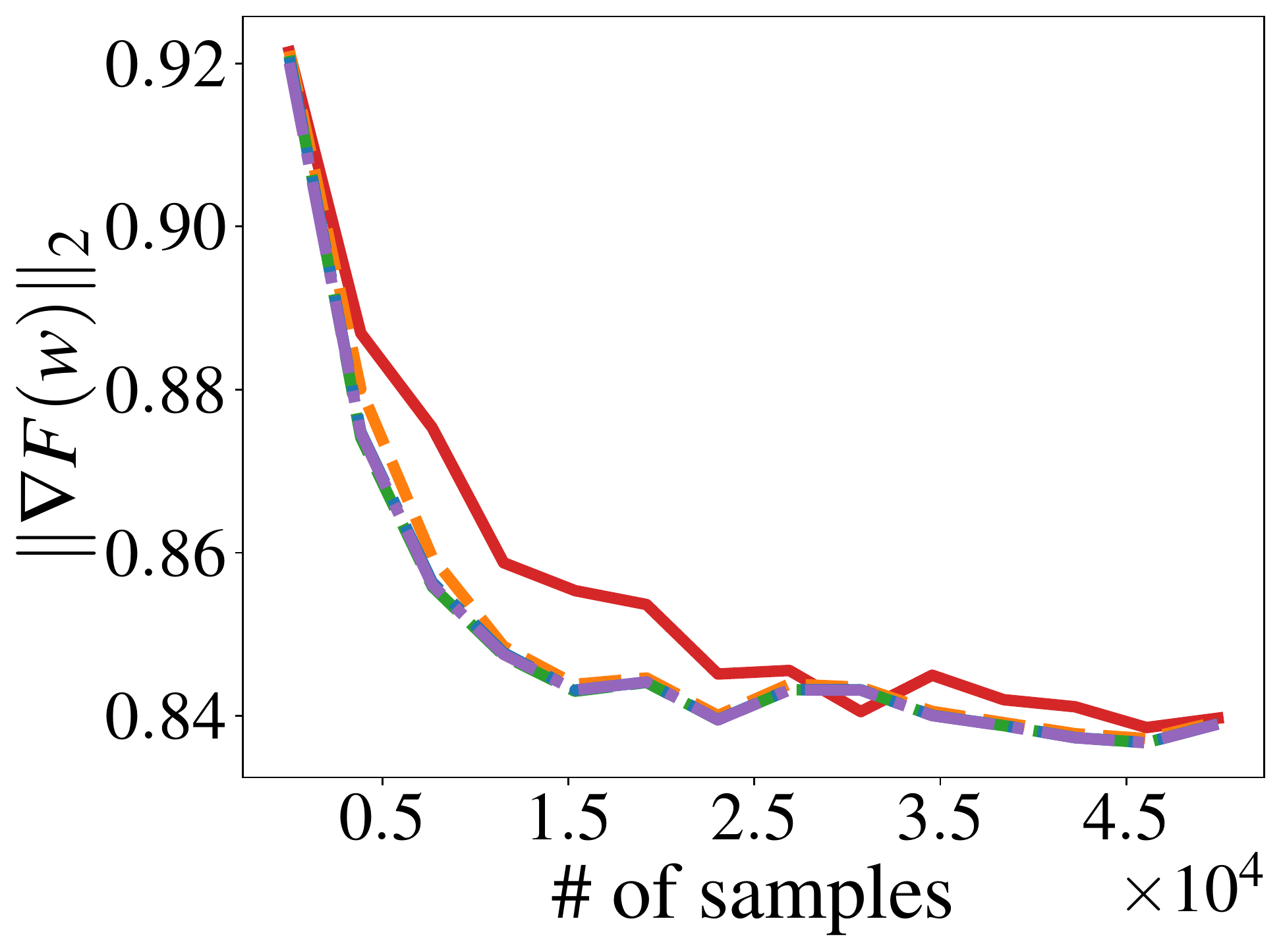}
		}%
		\vskip -0.05in
		\setcounter{subfigure}{1}
		\subfigure{
			\includegraphics[width=0.8\textwidth]{./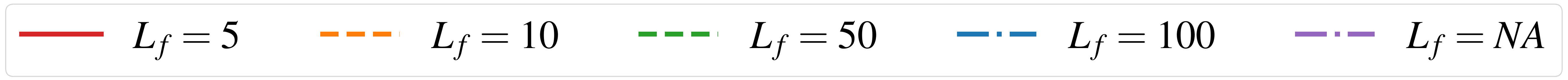}
		}%
		\vskip -0.05in
		\caption{Results for Risk-Averse Portfolio Optimization.}
		\label{fig:2}
	\end{center}
\vskip -0.2in
\end{figure*}	
\section{Experiments}\label{Sec:5}
In this section, we conduct numerical experiments to evaluate the performance of the proposed method over three different tasks. We compare our method with existing multi-level algorithms, including A-TSCGD \citep{Yang2019MultilevelSG}, NLASG \citep{balasubramanian2020stochastic}, Nested-SPIDER \citep{Zhang2021MultiLevelCS} and SCSC \citep{chen2021solving}. For our method, the parameter $\beta$ is searched from the set $\{0.1, 0.5, 0.9\}$. For other methods, we choose the hyper-parameters suggested in the original papers or use grid search to select the best hyper-parameters. When it comes to the learning rate, we tune it from the range $\{0.01,0.02,0.05,0.1,0.2,0.5,1\}$. As for the projection operation $\Pi_{L_f}$, we can simply set $L_f$ as a large value and we provide a sensitivity analysis in terms of tuning $L_f$ in the 
first experiment.

\subsection{Risk-Averse Portfolio Optimization}
We first consider the risk-averse portfolio optimization problem. Suppose we have $d$ assets to invest during each time step $\{1, \ldots, T\}$, and $r_{t} \in \mathbb{R}^{d}$ denotes the payoff of $d$ assets in the time step $t$. Our goal is to maximize the return of the investment and minimize the risk at the same time. One useful formulation is the mean-deviation risk-averse optimization problem~\citep{Shapiro2009LecturesOS}, where the risk is defined as the standard deviation. This mean–deviation model is widely used in practice and often used to conduct experiments in multi-level optimization~\citep{Yang2019MultilevelSG, Zhang2020OptimalAF}. The problem can be formulated as:
\begin{align*}
\max _{x \in \mathcal{X}} \frac{1}{T} \sum_{t=1}^{T}\left\langle r_{t}, x\right\rangle-\lambda \sqrt{\frac{1}{T} \sum_{t=1}^{T}\left(\left\langle r_{t}, x\right\rangle-\left\langle \bar{r}, x\right\rangle\right)^{2}},
\end{align*}
where $\bar{r}= \sum_{t=1}^{T} r_{t}$ and decision variable $x$ denotes the investment quantity vector in $d$ assets. This problem is a three-level stochastic compositional optimization problem, and each layer can be represented as:
\begin{align*}
     &f_{1}(x)=\left(\frac{1}{T} \sum_{t=1}^{T}\langle r_{t}, x\rangle, x\right), \\
     &f_{2}(y, x)=\left(y, \frac{1}{T} \sum_{t=1}^{T}\left(\left\langle r_{t}, x\right\rangle-y\right)^{2}\right), \\ 
     &f_{3}\left(z_{1}, z_{2}\right)=-z_{1}+\lambda \sqrt{z_{2}}.
\end{align*}
In the experiment, we test different methods on real-world datasets Industry-10, Industry-12, Industry-17 and Industry-30 from Keneth R. French Data Library\footnote{https://mba.tuck.dartmouth.edu/pages/faculty/ken.french/}. These datasets contain 10, 12, 17 and 30 industrial assets payoff over 25105 consecutive periods, respectively. Following \citet{Zhang2021MultiLevelCS}, we set parameter $\lambda=0.2$.

\cref{fig:1} shows the loss value and the norm of the gradient against the number of samples drawn by each method, and all curves are averaged over 20 runs. We can find that our method converges much faster than other algorithms in all tasks. Specifically, both the loss and the gradient of SMVR decrease more quickly, demonstrating the low sample complexity of the proposed method.

We also conduct experiments on tuning the parameter $L_f$ for the projection operation $\Pi_{L_f}$. For theoretical analysis, if $L_f$ is set bigger than the actual upper bound of the gradient, the convergence rate remains the same order, just with a bigger constant. Here, we tune the $L_f$ form the set $\{5, 10, 50, 100\}$, and the results are shown in Figure~\ref{fig:2}, where $L_f=$ NA  means the projection is not used (it equals to setting $L_f$ as an extremely large number, such as $1e7$). We find that the method performs very closely as long as $L_f$ is set as a large number and would perform worse when $L_f$ is small. In practical use, we can simply set $L_f$ as a large number.

\begin{figure*}[ht]
\vskip 0.2in
\begin{center}
	\subfigure{
		%\rotatebox{90}{\scriptsize{~~~~~~~~~~~Loss vs $$\#$$ of samples}}
    	\includegraphics[width=0.24\textwidth]{./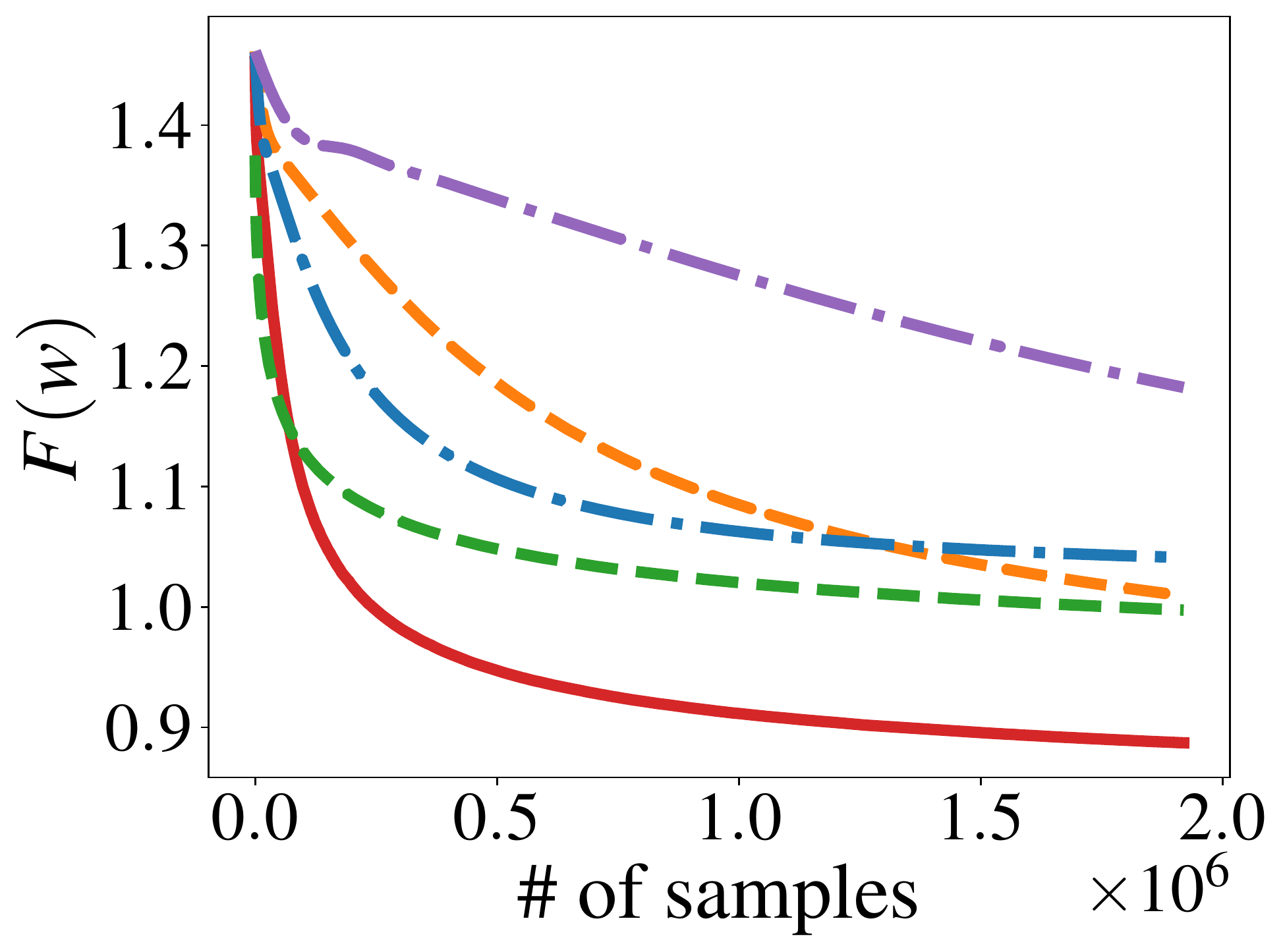}
		\includegraphics[width=0.24\textwidth]{./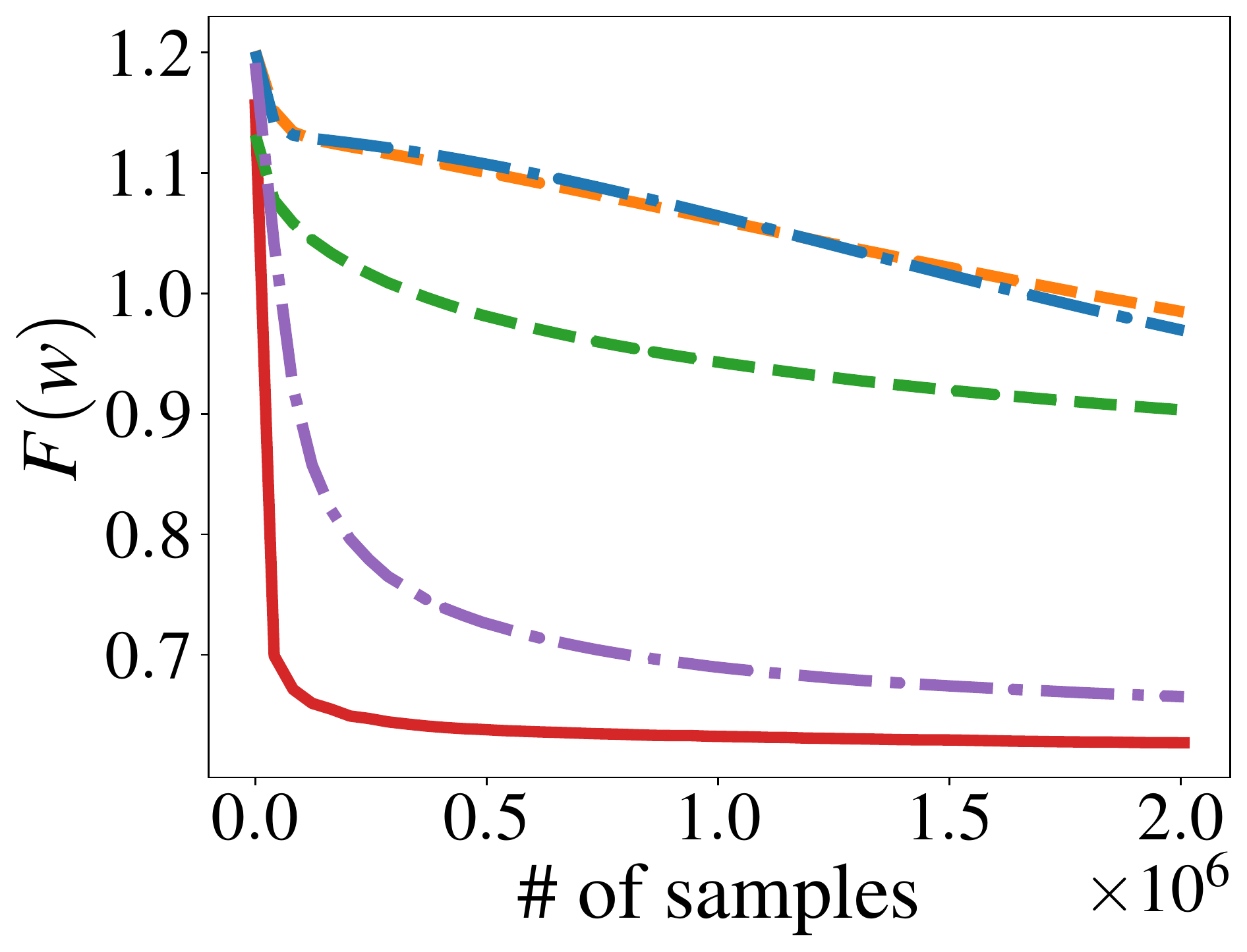}
		\includegraphics[width=0.24\textwidth]{./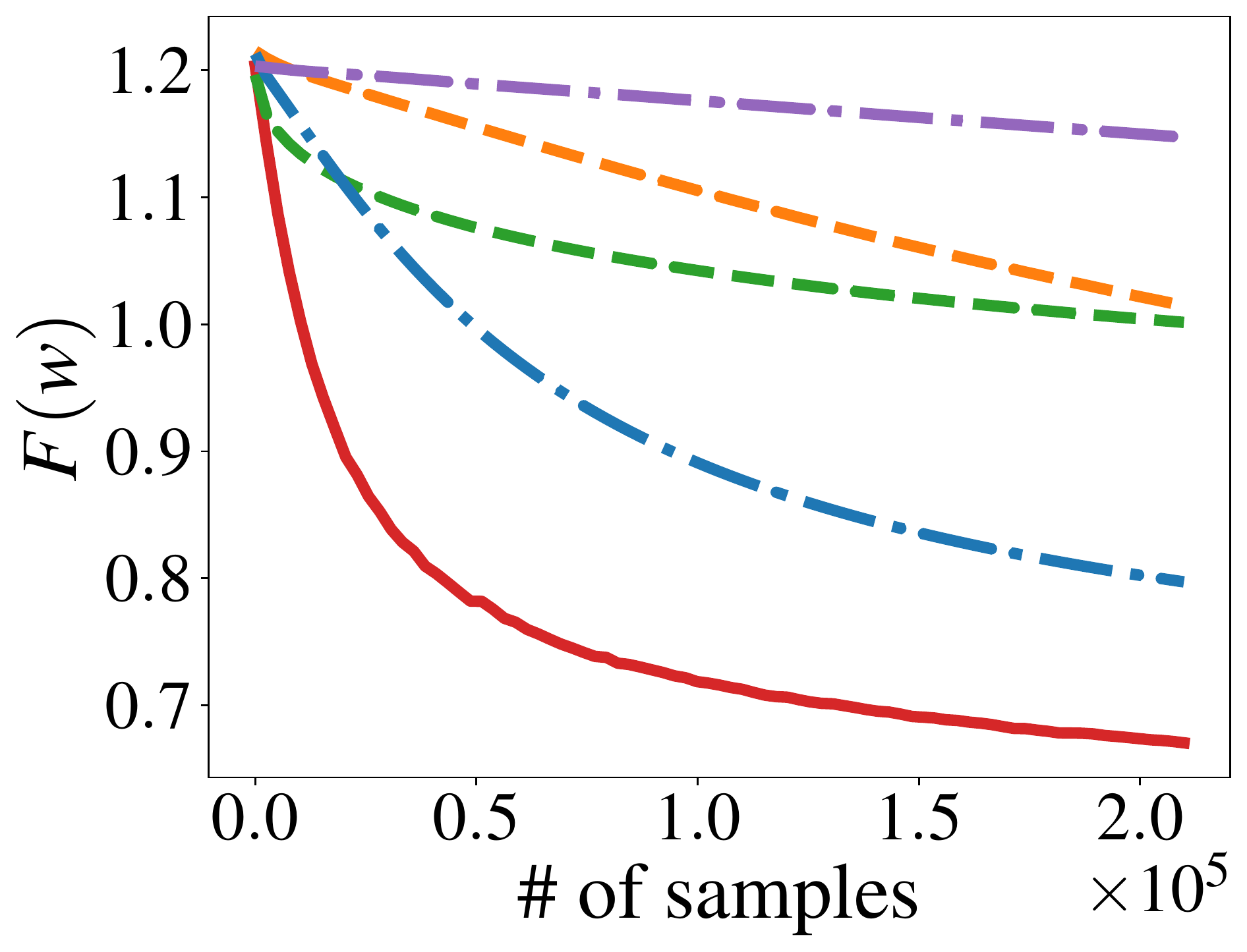}
		\includegraphics[width=0.24\textwidth]{./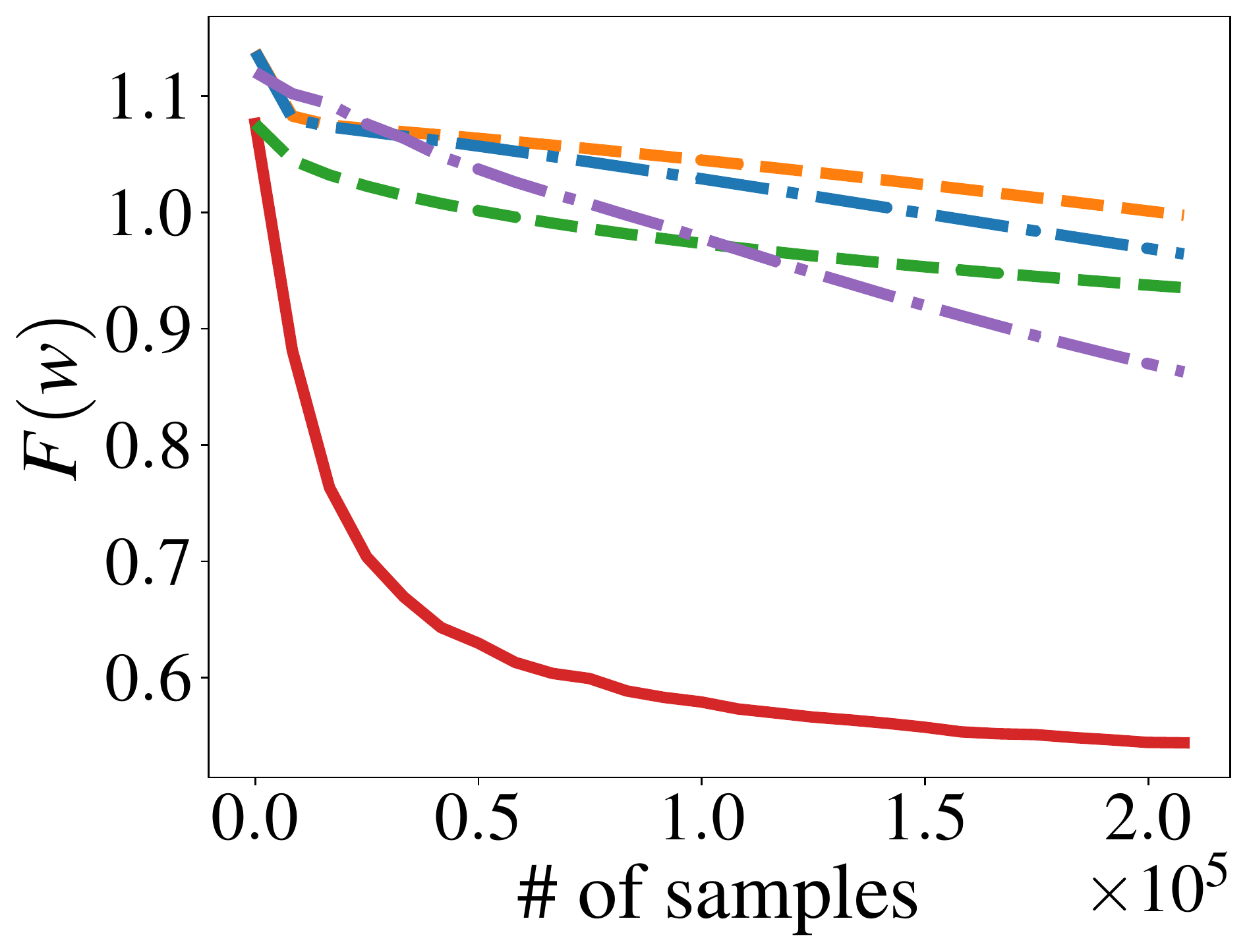}
	}%
	%\vspace{-4mm}
	\setcounter{subfigure}{0}
	\subfigure[HIV-1]{
		%\rotatebox{90}{\scriptsize{~~~~~~Gradient vs $$\#$$ of samples}}
		\includegraphics[width=0.24\textwidth]{./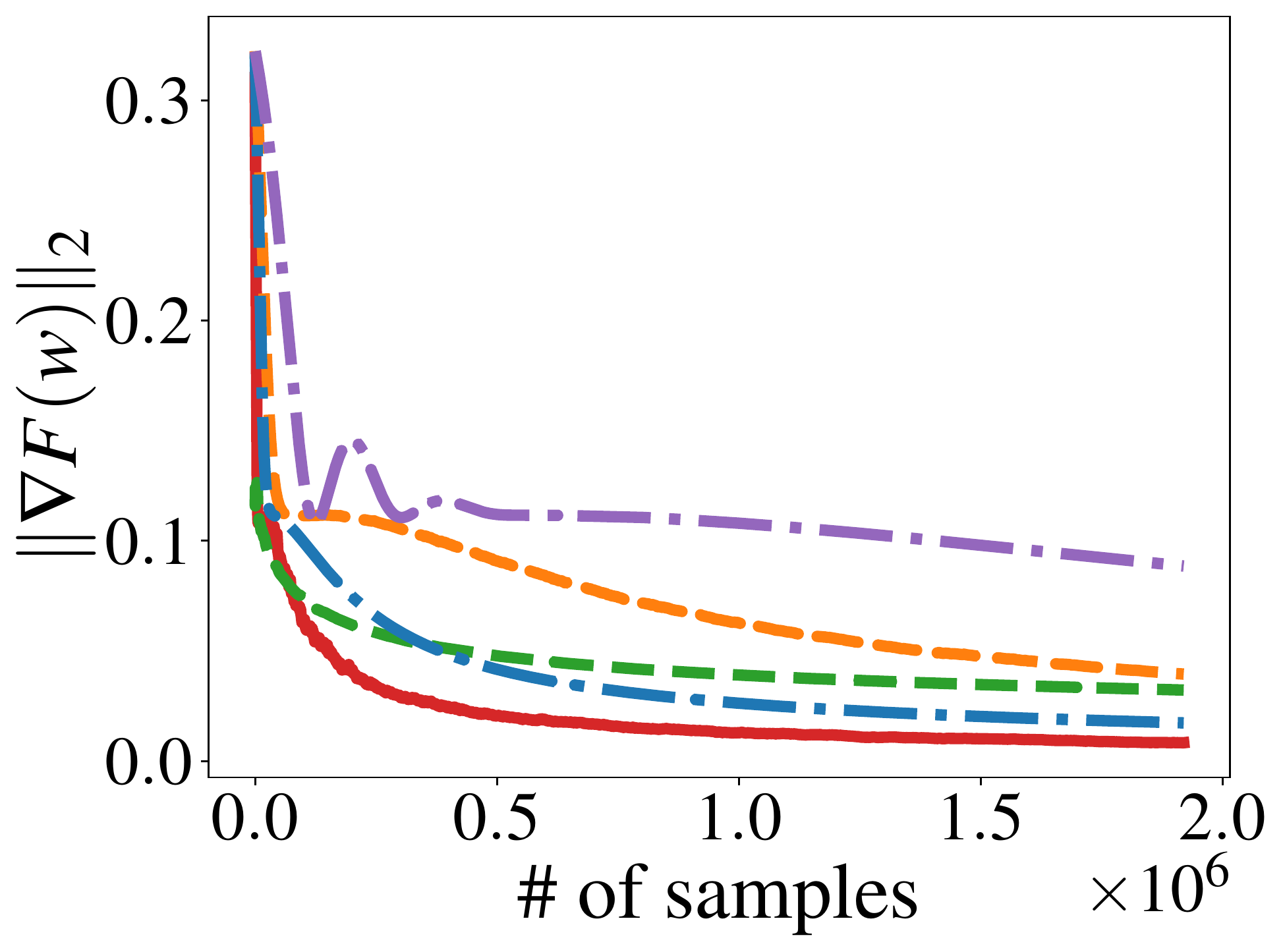}
	}
	\hspace{-3mm}
	\subfigure[Australian]{
		%\vspace{+3mm}
		\includegraphics[width=0.24\textwidth]{./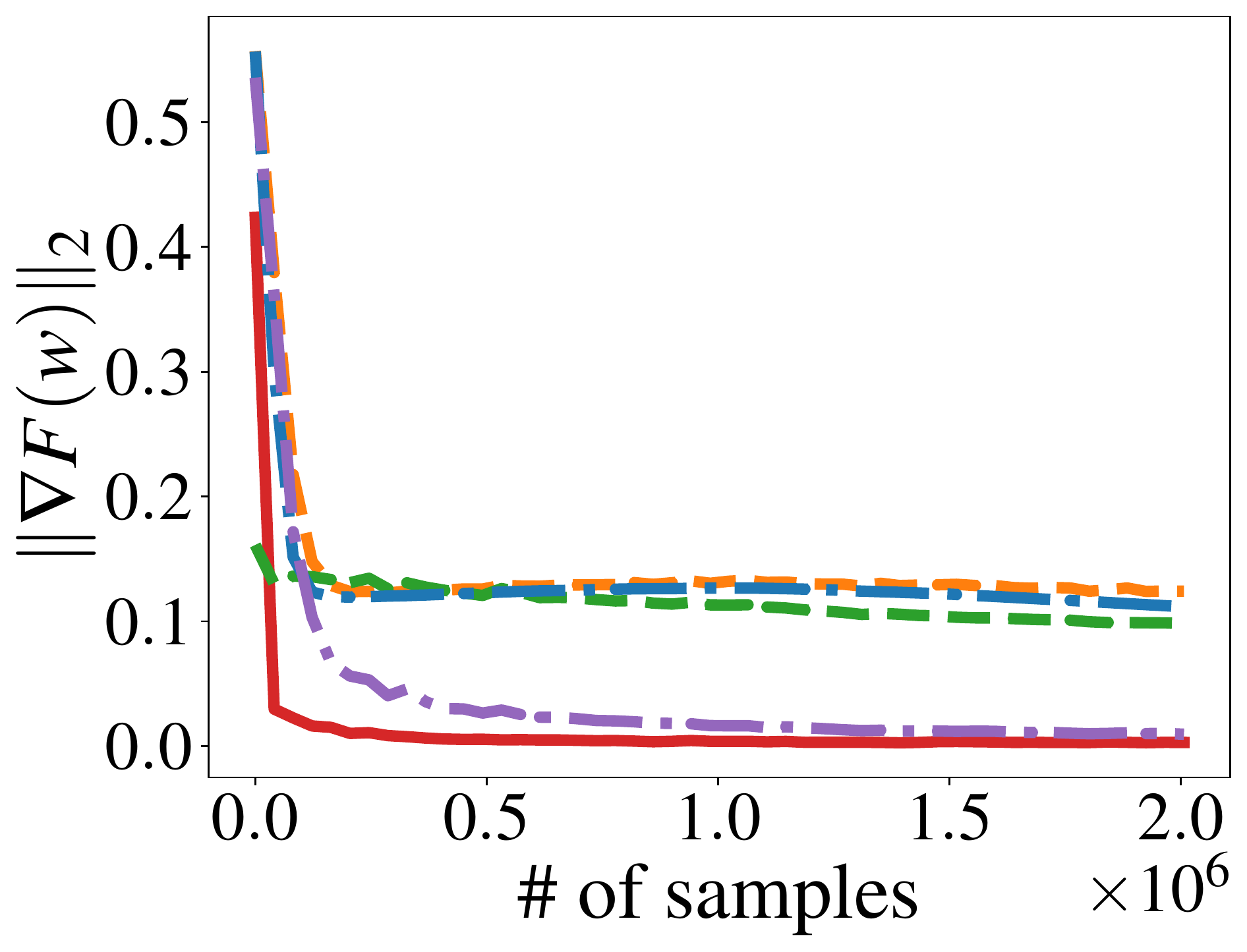}
	}
	\hspace{-3mm}
	\subfigure[Breast-cancer]{
		\includegraphics[width=0.24\textwidth]{./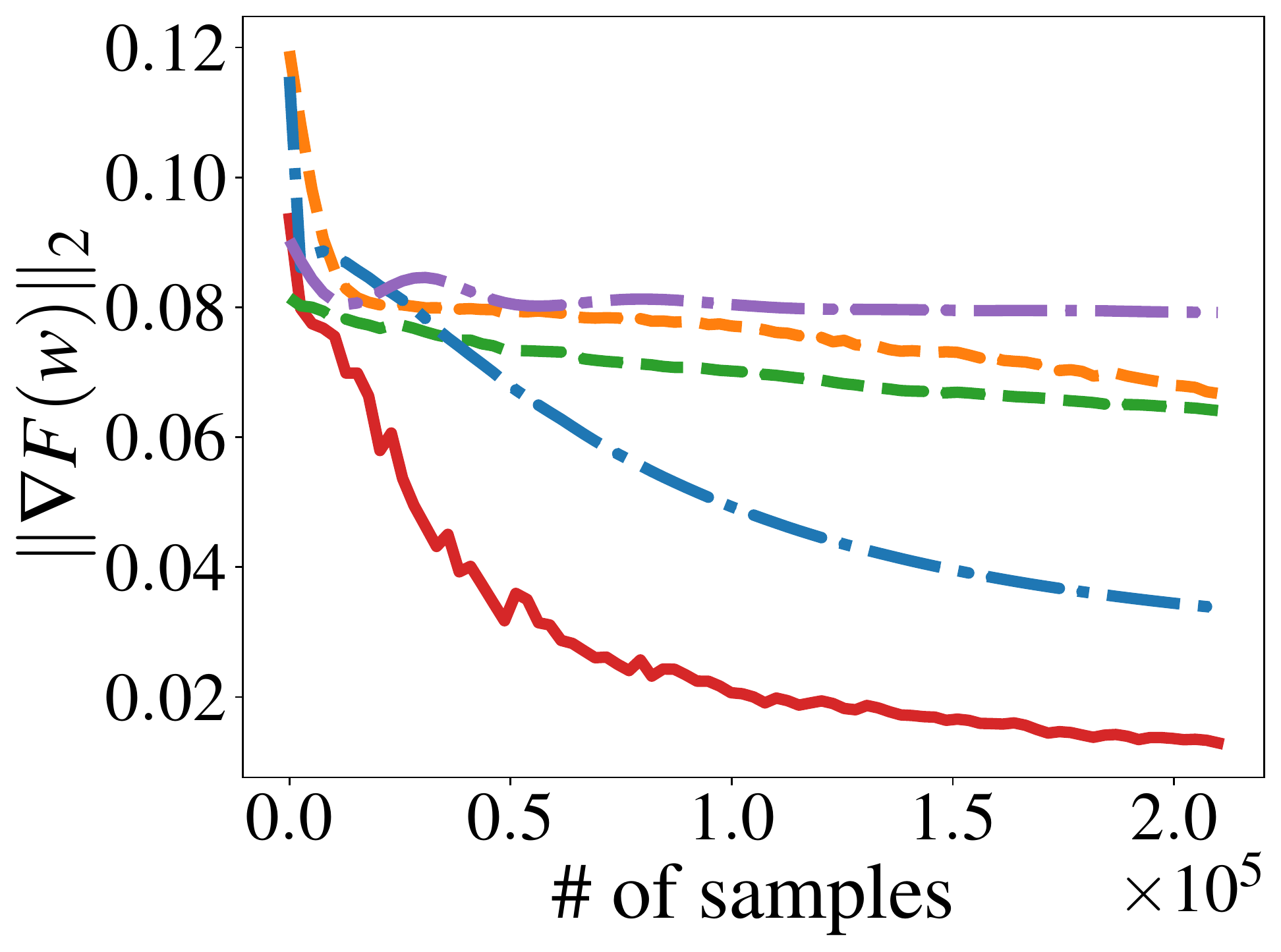}
	}
	\hspace{-3mm}
	\subfigure[Svmguide1]{
		\includegraphics[width=0.24\textwidth]{./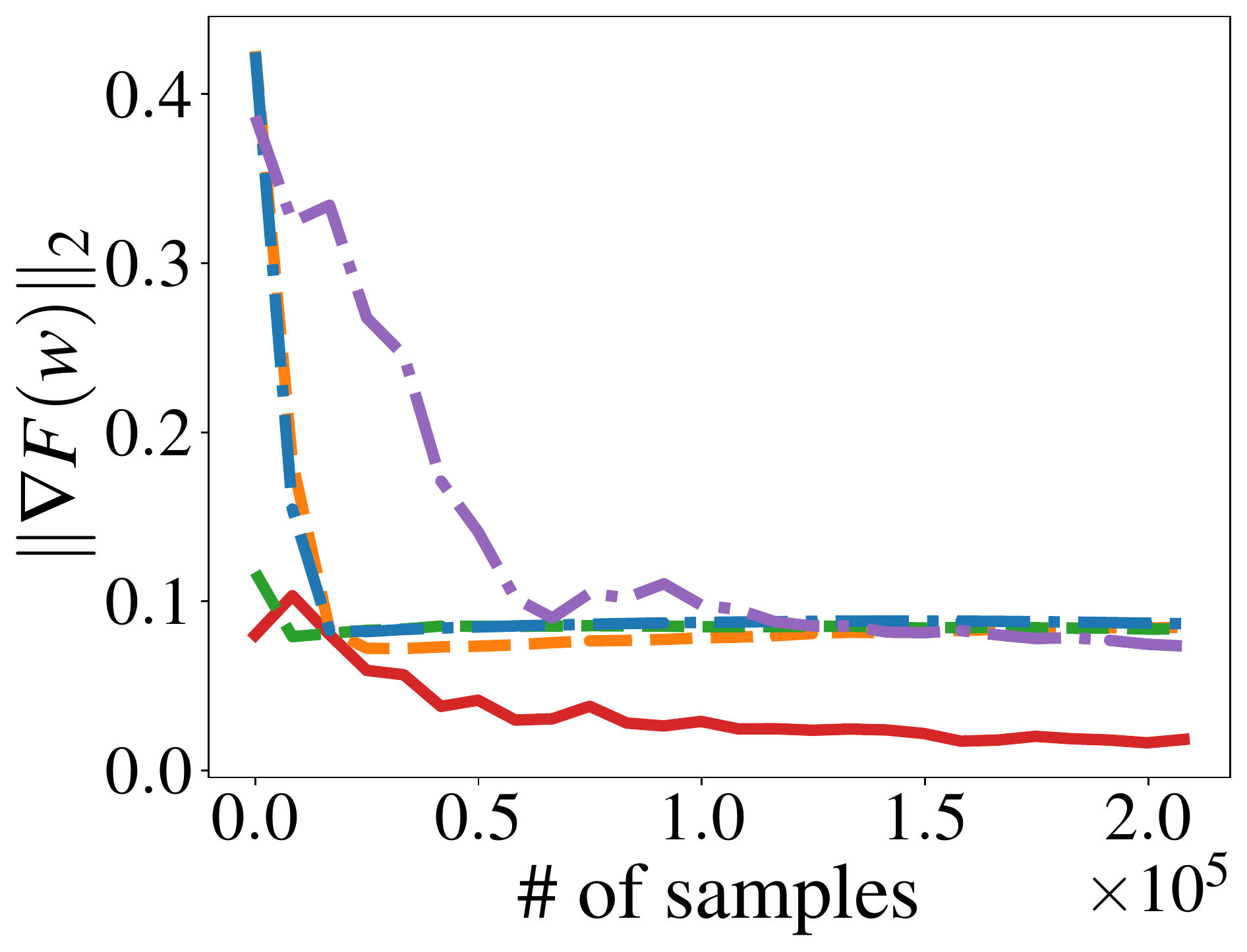}
	}
	\vskip -0.05in
	\setcounter{subfigure}{1}
	\subfigure{
		\includegraphics[width=0.8\textwidth]{./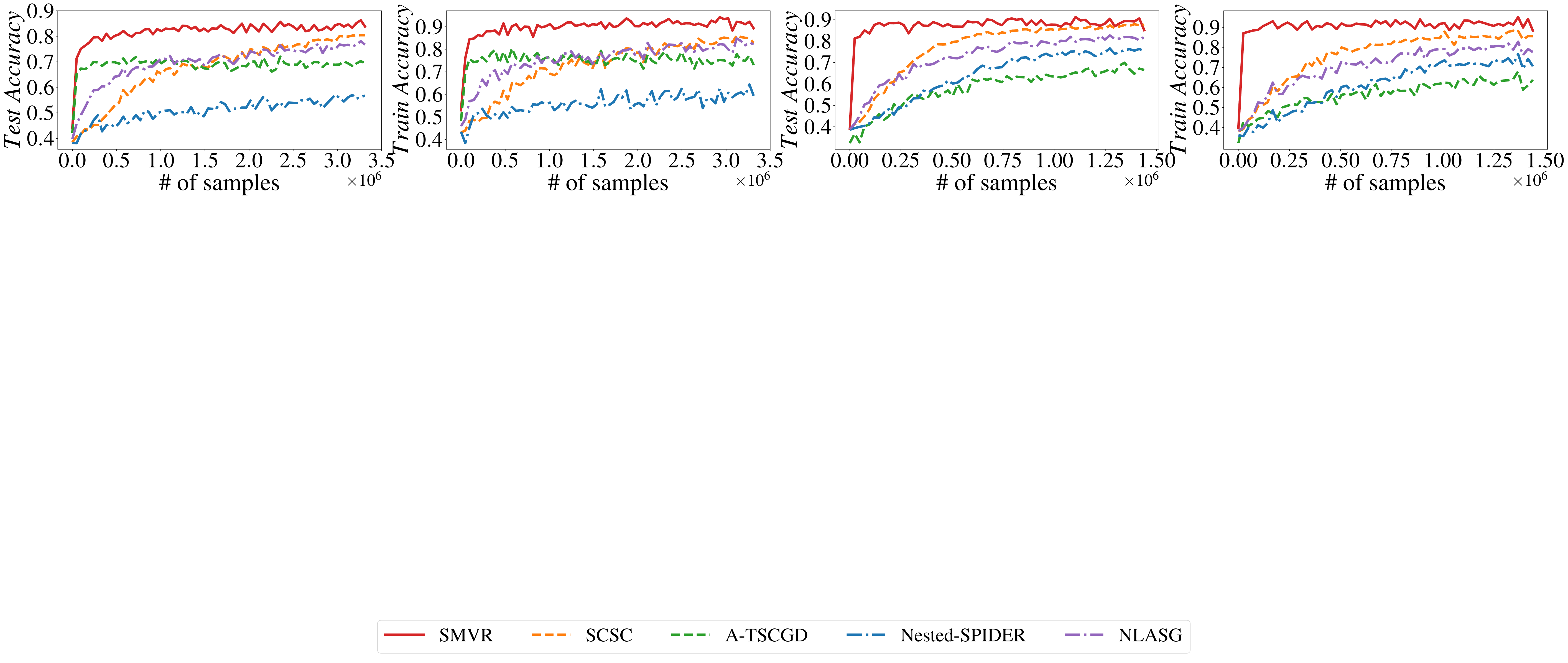}
	}%
	\vskip -0.05in
	\caption{Results for Hierarchical Tilted Empirical Risk Minimization.}
  	\label{fig:3}
\end{center}
\vskip -0.2in
\end{figure*}
\begin{table*}[t]
	\caption{Classification accuracies (\%) for Hierarchical Tilted Empirical Risk Minimization.}
	\label{sample-table}
	\vskip 0.15in
	\begin{center}
			\begin{sc}
			\resizebox{\textwidth}{!}{
				\begin{tabular}{l|cc|cc|cc|cr}
					\toprule
					\multirow{2}*{Method} & \multicolumn{2}{c|}{Hiv-1} & \multicolumn{2}{c|}{Australian scale} &\multicolumn{2}{c|}{Breast-cancer}  &\multicolumn{2}{c}{Svmguide1}\\
					\cline{2-9}				 &\text{rare}&\text{overall}&\text{rare}&\text{overall}&\text{rare}&\text{overall}&\text{rare}&\text{overall}\\
					\hline
					A-TSCGD    	& 71.3$\pm$ 2.9& 88.1$\pm$ 1.2& 62.7$\pm$ 8.9&  72.5$\pm$ 5.2 & 55.1$\pm$ 8.5 & 80.6$\pm$ 4.4 & 76.5$\pm$ 4.6 & 85.8$\pm$ 2.1\\
					SCSC 		& 76.7$\pm$ 2.4& 88.7$\pm$ 1.2& 61.8$\pm$ 8.4&  72.5$\pm$ 5.2 & 55.6$\pm$ 8.0 & 80.7$\pm$ 4.1 & 75.8$\pm$ 4.7 & 85.5$\pm$ 2.1\\
					Nested-SPIDER    	& 47.3$\pm$ 9.2& 63.0$\pm$ 6.7& 68.5$\pm$ 9.7&  72.9$\pm$ 4.8 & 61.8$\pm$ 8.6 & 83.3$\pm$ 6.6 & 74.3$\pm$ 5.4 & 84.8$\pm$ 2.4\\
					NLASG    	& 69.9$\pm$ 3.0& 88.1$\pm$ 1.3& 79.4$\pm$ 8.4&  78.7$\pm$ 5.8 & 42.0$\pm$ 8.5 & 76.7$\pm$ 5.2 & 73.3$\pm$ 5.2 & 85.0$\pm$ 2.2\\
					\hline
					\textbf{SMVR}    & \textbf{79.0 $\pm$ 2.5}& \textbf{90.0 $\pm$ 1.2}& \textbf{82.9 $\pm$ 8.2}& \textbf{82.6 $\pm$ 4.6}& \textbf{74.0 $\pm$ 8.5}& \textbf{87.0 $\pm$ 4.3} & \textbf{80.9 $\pm$ 3.0} & \textbf{87.3 $\pm$ 2.7}\\
					\bottomrule
				\end{tabular}}
			\end{sc}
	\end{center}
	\vskip -0.1in
\end{table*}

\subsection{Hierarchical Tilted Empirical Risk Minimization}
Hierarchical Tilted Empirical Risk Minimization (TERM) is a method proposed by \citet{li2020tilted, li2021tilted}, which can deal with noisy and imbalanced machine learning problems simultaneously. TERM objective is given by $\widetilde{R}(w):=\frac{1}{t} \log \left(\frac{1}{N} \sum_{i \in[N]} e^{t l\left(w;z_{i} \right)}\right)$, where $l\left(w;z_{i} \right)$ is the loss on the sample $z_{i}$ from data $\left\{z_{1}, \ldots, z_{N}\right\}$. It can mitigate outliers with parameter $t<0$ and handle class imbalance when $t>0$. When the task involves outliers and class imbalance at the same time, Hierarchical TERM can be used:
\begin{align*}
\tilde{J}(w):=\frac{1}{t} \log \left(\frac{1}{|\mathcal{D}|} \sum_{\mathcal{G} \subseteq \mathcal{D}}|\mathcal{G}| e^{t \tilde{R}_{\mathcal{G}}(w)}\right),\\
\text{with}  \ \  \widetilde{R}_{\mathcal{G}}(w):=\frac{1}{\tau} \log \left(\frac{1}{|\mathcal{G}|} \sum_{z \in \mathcal{G}} e^{\tau \ell(w ; z)}\right),
\end{align*}
where $\mathcal{D}$ represents all training samples and $\mathcal{G}$ denotes samples for one specific class. Parameter $t$ and $\tau$ are constants dealing with different goals (i.e., outliers and class imbalance). It is a four-level stochastic compositional optimization problem, with each layer represented as:
\begin{align*}
     f_{1}(w)=\frac{1}{|\mathcal{G}|} \sum_{z \in \mathcal{G}} e^{\tau \ell(w ; z)},\  f_{2}\left(x\right)=\frac{1}{\tau}\log(x), \\ \
     f_{3}\left(y\right)=\frac{1}{|\mathcal{D}|} \sum_{\mathcal{G}  \subseteq\mathcal{D}}|\mathcal{G}| e^{t y}, \ f_{4}\left(z\right)=\frac{1}{t}\log(z).
\end{align*}
In the experiment, we use the "HIV-1"\footnote{https://archive.ics.uci.edu/ml/datasets.php},  "Australian"\footnote{https://www.csie.ntu.edu.tw/\textasciitilde cjlin/libsvmtools/datasets/ \label{libsvm_dataset}}, "Breast-cancer"\textsuperscript{\ref {libsvm_dataset}} and "svmguide1"\textsuperscript{\ref {libsvm_dataset}} dataset, and make the training data noisy and imbalanced, where nearly $30\%$ of the labels are reshuffled and the number of rare class versus common class is 1:20. We set $\tau=-2$, $t=10$ according the origin paper and repeat each experiment 20 times. 
\begin{figure*}[!ht]
    \vskip 0.2in
  	\centering
	\subfigure[5-way 1-shot]{
			\includegraphics[width=0.24\textwidth]{./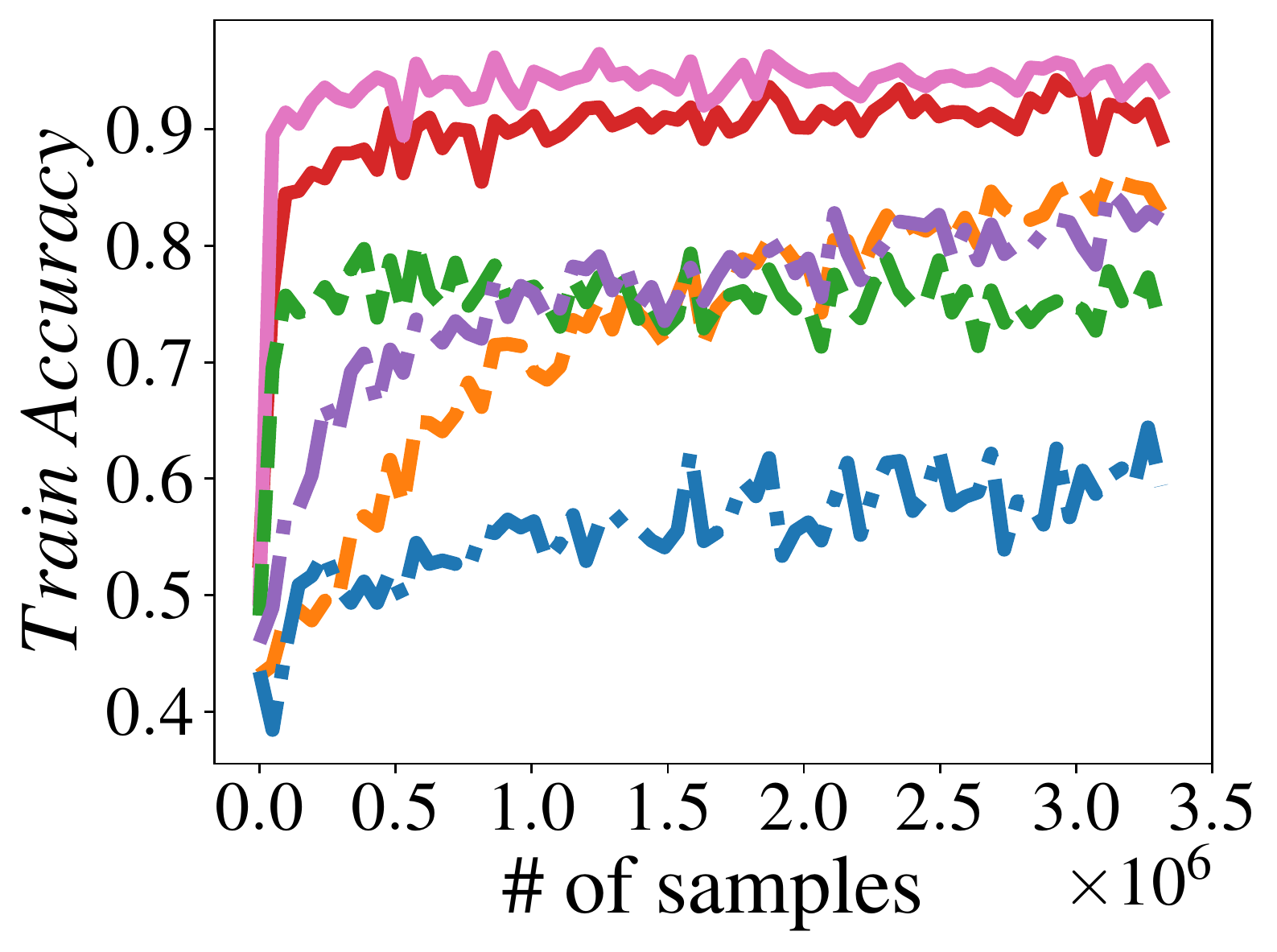}
			\includegraphics[width=0.24\textwidth]{./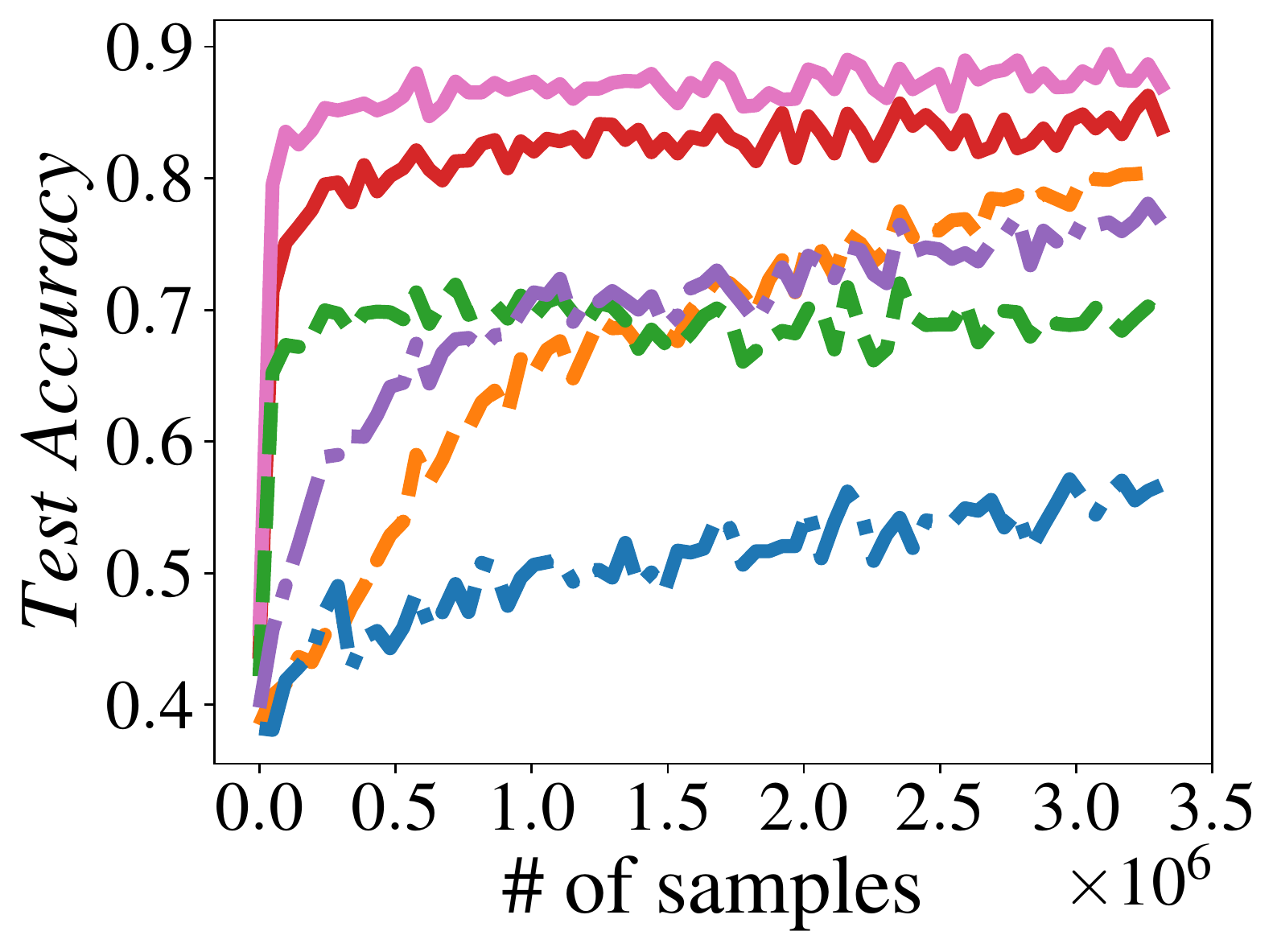}
	}%
	%\vspace{-4mm}
	\subfigure[5-way 5-shot]{
		 	\includegraphics[width=0.24\textwidth]{./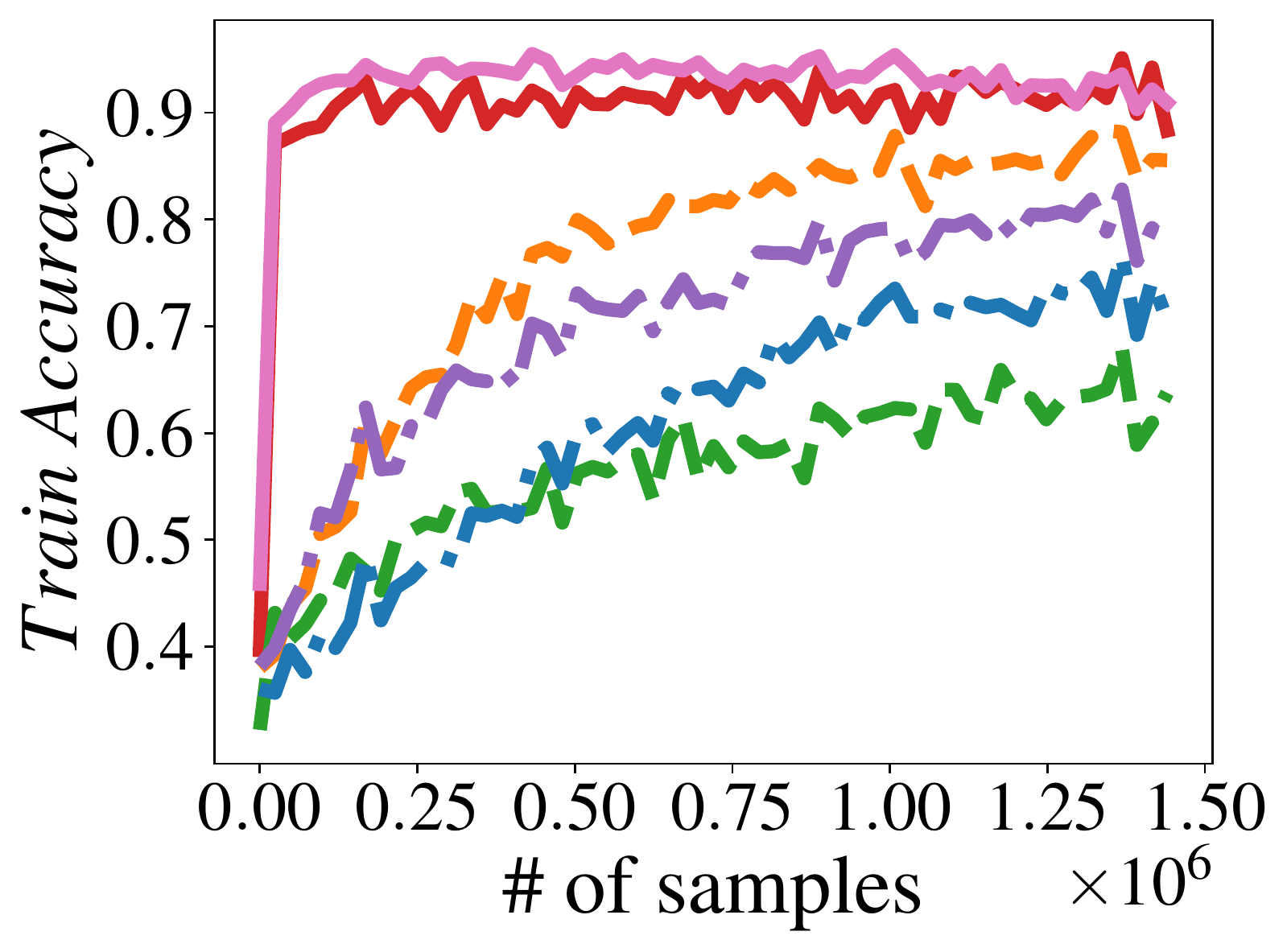}
		 	\includegraphics[width=0.24\textwidth]{./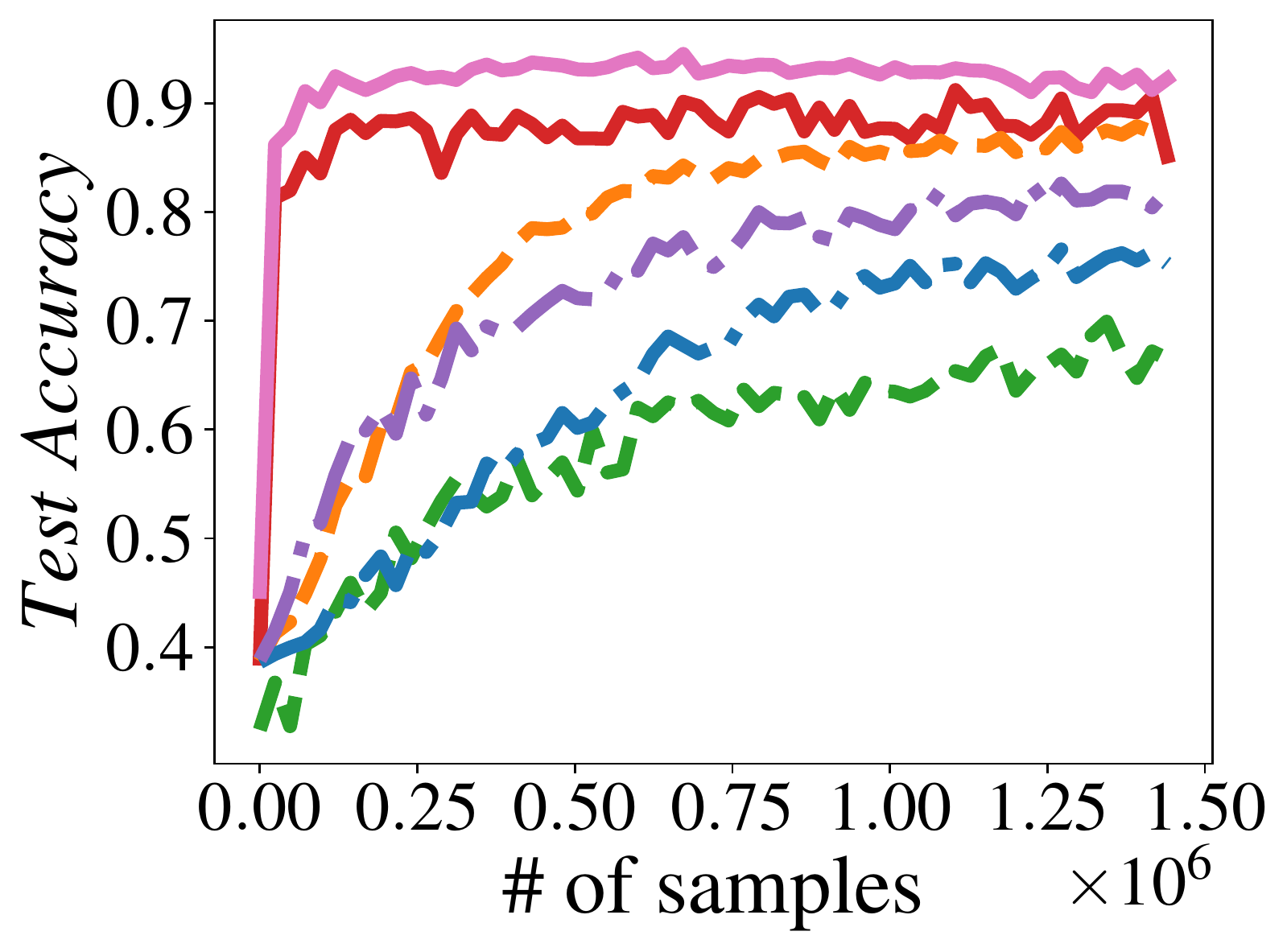}
 	}%
 	\vskip -0.05in
 	\setcounter{subfigure}{1}
 	\subfigure{
 			\includegraphics[width=0.95\textwidth]{./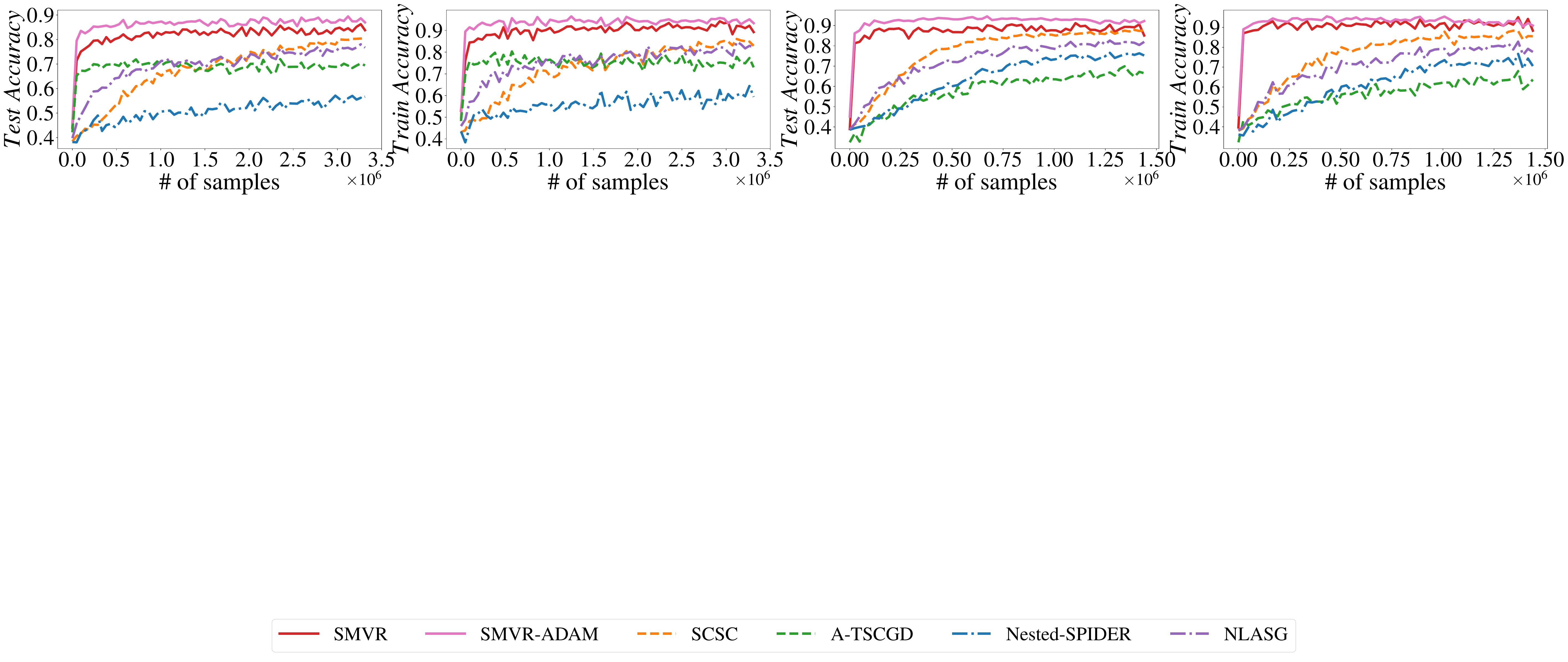}
 	}%
 	\vskip -0.05in
	\caption{Results for Multi-step Model-Agnostic Meta-Learning.}
	\label{fig:MAML}
	\vskip -0.2in
\end{figure*}
As shown in \cref{fig:3}, our SMVR performs best among all algorithms. The loss value and the norm of the gradient converge to a small value more quickly than other methods. We also report the classification accuracy in Table~\ref{sample-table}. It shows that SMVR achieves the highest accuracy on the rare class and the overall task simultaneously, indicating the effectiveness of our method.

\subsection{ Multi-Step Model-Agnostic Meta-Learning}
At last, we conduct experiments on Multi-step Model-Agnostic Meta-Learning (MAML). Multi-step MAML aims to find a good initialization point that performs well in different tasks after taking a few steps of gradient descent. Classical one-step MAML can be formed as:
\begin{align*}
    \min _{\boldsymbol{\x} } F(\boldsymbol{\theta})&:=\frac{1}{M} \sum_{m=1}^{M} F_{m}\left(\boldsymbol{\x}-\alpha \nabla F_{m}(\boldsymbol{\x})\right),\\
    \text{with} \ \ F_{m}(\boldsymbol{\theta})&:=\mathbb{E}_{\xi_{m}}\left[f\left(\boldsymbol{\theta} ; \xi_{m}\right)\right]
\end{align*}
where $\alpha$ is the learning rate, $F_{m}$ denotes the loss on task $m$ and $\xi_{m}$ represents the training samples for task $m$. One-step MAML is a two-level problem since it updates the initial point once and then evaluates on different tasks. In practice, we usually update the initial point more times to achieve better results. For example, \citet{Finn2017ModelAgnosticMF} use five-step updates, which is a six-level compositional problem.

Following \citet{Finn2017ModelAgnosticMF}, we conduct experiments on 5-way 1-shot and 5-shot task on Omniglot dataset \citep{Lake2011OneSL}. Each task is a 5-class classification problem, with only 1 or 5 training samples for each class. We conduct 5-step MAML and repeat each experiment 3 times.

We report the accuracy of different methods against the number of training samples in \cref{fig:MAML}. Since adaptive learning rates are widely used in neural networks, which are also applied in Multi-step MAML, we implement Adaptive SMVR methods in this task, denoted as SMVR-ADAM. We use the adaptive learning rate defined in (\ref{rule1}) and (\ref{rule2}) and choose the commonly used Adam-type. As can be seen, the accuracy of SMVR and SMVR-ADAM increases rapidly both in training sets and testing sets, and outperforms other methods dramatically. Although SMVR and SMVR-ADAM enjoy the same sample complexity, the latter converges faster in practice due to the adaptive learning rate used.

\section{Conclusion}\label{Sec:6}
In this paper, we propose an optimal algorithm named SMVR for stochastic multi-level composition optimization. We prove that the proposed algorithm, by using variance reduced estimator of function values and Jacobians, can achieve the sample complexity of $\mathcal{O}\left(1 / \epsilon^{3}\right)$ for finding an $\epsilon$-stationary point. This complexity matches the lower bound even in the one-level setting, and our method avoids using batches in any iterations. When the objective function further satisfies the convexity or PL condition, we develop a stage-wise version of SMVR to obtain the optimal complexity of $\mathcal{O}\left({1}/{\epsilon^2}\right)$ or $\mathcal{O}\left({1}/{\epsilon}\right)$. To take advantage of adaptive learning rates, we also propose Adaptive SMVR, which can achieve the same complexity with the learning rate changing adaptively. Experiments on three real-world tasks demonstrate the superiority of the proposed method.

\section*{Acknowledgements}
W. Jiang, Y. Wang and L. Zhang were partially supported by NSFC (62122037, 61921006). B. Wang and T. Yang were partially supported by NSF Grant 2110545, NSF Career Award 1844403. The authors would like to thank the anonymous reviewers for their helpful comments.
\bibliography{ref}
\bibliographystyle{icml2022}

\newpage
\appendix
\onecolumn

\section{Proof of \cref{thm:main}}
We first provide some supporting lemmas and then conclude to show the sample complexity of the proposed method.
%%%%%%%%%%%%%%%%%%%%%%%%%%%%%%%%%%%%%   LEMMA  2    %%%%%%%%%%%%%%%%%%%%%%%%%%%%%
\begin{lemma}\label{lem:2} The objective function F is $L_F$-smooth, where $L_F \coloneqq L_f^{2K-1}L_J\sum_{i=1}^K\frac{1}{L_f^i}$.
\end{lemma}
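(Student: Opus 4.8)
The plan is to bound $\Norm{\nabla F(\x) - \nabla F(\y)}$ directly using the chain rule together with the standard telescoping-product trick. First I would introduce the partial compositions $g_0(\w) = \w$ and $g_i = f_i \circ \cdots \circ f_1$ for $i \ge 1$, so that $F = g_K$, and write the gradient as a matrix product $\nabla F(\w) = \prod_{i=1}^K J_i(\w)$ with $J_i(\w) := \nabla f_i(g_{i-1}(\w))$ (the factor contributed by the $i$-th layer, evaluated at the appropriate intermediate point), exactly following the chain rule used in the two-level illustration in the text.

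Two elementary ingredients feed into the bound. First, because each $f_i$ is $L_f$-Lipschitz, its Jacobian has operator norm at most $L_f$, so $\Norm{J_i(\w)} \le L_f$ for every $i$ and every $\w$. Second, because $g_{i-1}$ is a composition of $i-1$ maps each $L_f$-Lipschitz, it is itself $L_f^{i-1}$-Lipschitz, i.e. $\Norm{g_{i-1}(\x) - g_{i-1}(\y)} \le L_f^{i-1}\Norm{\x-\y}$.

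Then I would expand the difference of the two matrix products by the telescoping identity
\[
\prod_{i=1}^K J_i(\x) - \prod_{i=1}^K J_i(\y) = \sum_{i=1}^K \Big(\prod_{j<i} J_j(\x)\Big)\big(J_i(\x)-J_i(\y)\big)\Big(\prod_{j>i} J_j(\y)\Big).
\]
Taking norms and using submultiplicativity with the first ingredient bounds the product of the $K-1$ untouched factors in each summand by $L_f^{K-1}$. For the single differing factor I would use the $L_J$-Lipschitzness of $\nabla f_i$ followed by the second ingredient: $\Norm{J_i(\x)-J_i(\y)} = \Norm{\nabla f_i(g_{i-1}(\x)) - \nabla f_i(g_{i-1}(\y))} \le L_J \Norm{g_{i-1}(\x)-g_{i-1}(\y)} \le L_J L_f^{i-1}\Norm{\x-\y}$. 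Summing over $i$ yields $\Norm{\nabla F(\x)-\nabla F(\y)} \le L_J L_f^{K-1}\big(\sum_{i=1}^K L_f^{i-1}\big)\Norm{\x-\y}$, and the reindexing $i \mapsto K+1-i$ rewrites this constant as $L_f^{2K-1} L_J \sum_{i=1}^K L_f^{-i}$, which is precisely the claimed $L_F$.

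The computation itself is routine; the only points requiring care are bookkeeping. The main (mild) obstacle is keeping the norms consistent — treating the layer Jacobians in operator norm so that submultiplicativity applies cleanly in the telescoping step, and verifying that $L_f$-Lipschitz continuity of $f_i$ indeed gives the operator-norm bound $\Norm{\nabla f_i} \le L_f$. One should also confirm the convention that $\nabla F$ is a genuine vector (since $d_K = 1$), so that its Euclidean norm coincides with the operator norm used in the product estimate.
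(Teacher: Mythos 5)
Your proof is correct and takes essentially the same approach as the paper's: both bound $\Norm{\nabla F(\x)-\nabla F(\y)}$ by changing one Jacobian factor at a time, combining the operator-norm bound $\Norm{\nabla f_i}\le L_f$, the $L_f^{i-1}$-Lipschitzness of the partial compositions, and the $L_J$-Lipschitzness of each $\nabla f_i$. The only cosmetic difference is that the paper peels off the outermost layer recursively and unrolls the recursion, while you write the same decomposition in closed form as a telescoping sum; the resulting constant is identical after your reindexing.
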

\begin{proof}
Denote $F_{i}(x)=f_{i} \circ f_{i-1} \circ \cdots \circ f_{1}(x)$. We know $F_{i}(x)$ is $L_f^{i}$ Lipschitz continuous, since when $i \geq 2$:
\begin{align*}
\left\|F_{i}(x)-F_{i}(y)\right\| &=\left\|f_{i}\left(F_{i-1}(x)\right)-f_{i}\left(F_{i-1}(y)\right)\right\| \\ & \leq L_{f}\left\|F_{i-1}(x)-F_{i-1}(y)\right\| \\&\leq L_{f}^{i-1}\left\|F_{1}(x)-F_{1}(y)\right\| \\ &\leq L_{f}^{i}\|x-y\| \quad 
\end{align*}
Then, we have:
\begin{align*}
&\left\|\nabla F_{K}(x)-\nabla F_{K}(y)\right\| \\
=&\left\|\nabla f_{K}\left(F_{K-1}(x)\right) \nabla F_{K-1}(x)-\nabla f_{K}\left(F_{K-1}(y)\right) \nabla F_{K-1}(y)\right\| \\ =&\left\|\nabla f_{K}\left(F_{K-1}(x)\right)\left(\nabla F_{K-1}(x)-\nabla F_{K-1}(y)\right)+\nabla F_{K-1}(y)\left(\nabla f_{K}\left(F_{K-1}(x)\right)-\nabla f_{K}\left(F_{K-1}(y)\right)\right)\right\| \\ 
\leq&\left\|\nabla f_{K}\left(F_{K-1}(x)\right)\right\|\left\|\nabla F_{K-1}(x)-\nabla F_{K-1}(y)\right\|+\left\|\nabla F_{K-1}(y)\right\|\left\|\nabla f_{K}\left(F_{K-1}(x)\right)-\nabla f_{K}\left(F_{K-1}(y)\right)\right\| \\ 
\leq& L_{f}\left\|\nabla F_{K-1}(x)-\nabla F_{K-1}(y)\right\|+L_{f}^{K-1} L_{J}\left\|F_{K-1}(x)-F_{K-1}(y)\right\| \\ 
\leq& L_{f}\left\|\nabla F_{K-1}(x)-\nabla F_{K-1}(y)\right\|+L_{f}^{2 K-2} L_{J}\Norm{x-y} \\  \leq& L_{f}^{2}\left\|\nabla F_{K-2}(x)-\nabla F_{K-2}(y)\right\|+\left(L_{f}^{2 K-3} L_{J}+L_{f}^{2 K-2} L_{J}\right)\Norm{x-y} \\ 
\leq& L_{f}^{K-1}\left\|\nabla F_{1}(x)-\nabla F_{1}(y)\right\|+ L_{f}^{2 K-1} L_{J} \sum_{i=1}^{K-1} \frac{1}{L_{f}^{i}}\Norm{x-y}\\ 
\leq& L_{f}^{2 K-1} \frac{1}{L_{f}^{K}} L_{J}\Norm{x-y}+L_{f}^{2 K-1} L_{J} \sum_{i=1}^{K-1} \frac{1}{L_{f}^{i}}\Norm{x-y} \\ 
=&L_{f}^{2 K-1} L_{J} \sum_{i=1}^{K} \frac{1}{L_{f}^{i}}\Norm{x-y}
\end{align*}
\end{proof}
A similar result can also be found in Lemma 2.1 of  \citep{balasubramanian2020stochastic}. This property is unsurprising since the composition of two smooth and Lipschitz functions is still smooth and Lipschitz.

%%%%%%%%%%%%%%%%%%%%%%%%%%   LEMMA  3    %%%%%%%%%%%%%%%%%%%%%%%%%%%%%%%%%%%%%%%%%%%%%%%%%%%%%
\begin{lemma}\label{lem:3}
	Let $ \eta_{t} \leq \frac{1}{2L_F}$, we have following guarantee:
	\begin{align*}
		F(\w_{t+1}) \leq F(\w_t) + \frac{\eta_{t} }{2}\Norm{\nabla F(\w_t) - \v_t}^2 - \frac{\eta_{t}}{2}\Norm{\nabla F(\w_t)}^2 - \frac{\eta_{t}}{4}\Norm{\v_t}^2.
	\end{align*}
\end{lemma}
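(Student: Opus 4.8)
The plan is to apply the standard descent inequality for $L_F$-smooth functions, which is available because Lemma~\ref{lem:2} establishes that $F$ is $L_F$-smooth. Since the update rule in Algorithm~\ref{alg:multi-storm} reads $\w_{t+1} = \w_t - \eta_t \v_t$, the increment is $\w_{t+1} - \w_t = -\eta_t \v_t$, and smoothness gives
\begin{align*}
F(\w_{t+1}) \leq F(\w_t) - \eta_t \langle \nabla F(\w_t), \v_t\rangle + \frac{L_F \eta_t^2}{2}\Norm{\v_t}^2.
\end{align*}
This is the only place where smoothness enters; everything afterward is an algebraic rearrangement together with the step-size restriction.

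Next I would rewrite the cross term using the polarization identity $-\langle \a, \b\rangle = \frac{1}{2}\Norm{\a - \b}^2 - \frac{1}{2}\Norm{\a}^2 - \frac{1}{2}\Norm{\b}^2$, applied with $\a = \nabla F(\w_t)$ and $\b = \v_t$. After multiplying by $\eta_t$ this splits $-\eta_t \langle \nabla F(\w_t), \v_t\rangle$ into the three terms $\frac{\eta_t}{2}\Norm{\nabla F(\w_t) - \v_t}^2$, $-\frac{\eta_t}{2}\Norm{\nabla F(\w_t)}^2$, and $-\frac{\eta_t}{2}\Norm{\v_t}^2$. Substituting back, the two terms involving $\Norm{\v_t}^2$ combine into $-\frac{\eta_t}{2}\left(1 - L_F \eta_t\right)\Norm{\v_t}^2$.

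Finally I would invoke the hypothesis $\eta_t \leq \frac{1}{2 L_F}$, which yields $L_F \eta_t \leq \frac{1}{2}$ and hence $1 - L_F \eta_t \geq \frac{1}{2}$. Therefore $-\frac{\eta_t}{2}(1 - L_F\eta_t)\Norm{\v_t}^2 \leq -\frac{\eta_t}{4}\Norm{\v_t}^2$, which is exactly the coefficient appearing in the statement. Collecting the three surviving terms delivers the claimed inequality.

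I do not anticipate any real obstacle here: the result is a routine per-iteration descent bound, and the only subtlety is keeping track of the sign and coefficient when merging the $\Norm{\v_t}^2$ contributions and ensuring the step-size condition is used in the right direction. The value of the lemma is purely as a building block, since it isolates the gradient-estimation error $\Norm{\nabla F(\w_t) - \v_t}^2$ as the quantity that the variance-reduced estimators $\u_t^i, \v_t^i$ must subsequently be shown to control.
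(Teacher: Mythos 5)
Your proposal is correct and follows essentially the same route as the paper: the paper's proof also starts from the $L_F$-smoothness descent inequality with $\w_{t+1}-\w_t = -\eta_t\v_t$, performs the same add-and-subtract manipulation (which is exactly your polarization identity $-\eta_t\langle\nabla F(\w_t),\v_t\rangle = \frac{\eta_t}{2}\Norm{\nabla F(\w_t)-\v_t}^2 - \frac{\eta_t}{2}\Norm{\nabla F(\w_t)}^2 - \frac{\eta_t}{2}\Norm{\v_t}^2$ written out term by term), and uses $\eta_t \leq \frac{1}{2L_F}$ to absorb $\frac{L_F\eta_t^2}{2}\Norm{\v_t}^2$ into the $-\frac{\eta_t}{4}\Norm{\v_t}^2$ slack.
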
	
\begin{proof}
Due to the smoothness of function $F$ and the definition of $\w_{t+1}$, we have:
\begin{align*}
F\left(\w_{t+1}\right) & \leq F\left(\w_{t}\right)+\left\langle\nabla F(\w_{t}), \w_{t+1}-\w_{t}\right\rangle+\frac{L_{F}}{2}\left\|\w_{t+1}-\w_{t}\right\|^{2} \\ &=F(\w_{t})-\eta_t\left\langle\nabla F(\w_{t}), \v_{t}\right\rangle+\frac{\eta_t^{2} L_{F}}{2}\|\v_{t}\|^{2} \\ & \leq F(\w_{t})-\eta_t\langle\nabla F(\w_{t}), \v_{t}\rangle+\frac{\eta_t}{2}\|\nabla F(\w_{t})\|^{2}+\frac{\eta_t}{2}\left\|\v_{t}\right\|^{2}-\frac{\eta_t}{2}\|\nabla F(\w_{t})\|^{2}-\frac{\eta_t}{2}\left\|\v_{t}\right\|^{2} +\frac{\eta_t}{4}\left\|\v_{t}\right\|^{2}\\ &=F(\w_{t})+\frac{\eta_t}{2}\|\nabla F(\w_{t})-\v_{t}\|^{2}-\frac{\eta_t}{2}\|\nabla F(\w_{t})\|^{2}-\frac{\eta_t}{4}\left\|\v_{t}\right\|^{2} 
\end{align*}
\end{proof}
We can derive the term $\Norm{\nabla F(\w_t)}^2$ from the right side. Since $F(\w_{t+1})$ and $F(\w_{t})$ can be eliminated when summing up, the only thing need to show is that the estimated gradient $\v_t$ is not too far away from the true gradient $\nabla F(\w_t)$. We prove this in the following lemmas.

%%%%%%%%%%%%%%%%%%%%%%%%%%%%%%   LEMMA  4    %%%%%%%%%%%%%%%%%%%%%%%%%%%%%%%%%%%%%%%%%%%%%%%%%
\begin{lemma}\label{lem:9} 
Denote that $\nabla \widehat{F}_K(\w_t)\coloneqq \prod_{i=1}^K\nabla f_i(\u_t^{i-1})$ and $C_i \coloneqq L_f^{K-1} L_J\sum_{j=1}^{K-i-1}{L_f^j}$. For $K\geq 2$, we have:
	\begin{align*}
        \Norm{\nabla F(\w_t) - \nabla \widehat{F}_K(\w_t)}^2  \leq  K \sum_{i=1}^{K-1}C_i^2\Norm{f_i(\u^{i-1}_t) - \u_t^i}^2.
	\end{align*}
\end{lemma}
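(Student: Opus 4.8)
The plan is to express the true gradient through the chain rule, compare it factor-by-factor against the estimated product, and control the discrepancy by tracking how the per-level function-value errors propagate through the composition. Following the setup of \cref{lem:2}, write $F_i \coloneqq f_i \circ \cdots \circ f_1$ with $F_0(\w) = \w$, so that the chain rule gives $\nabla F(\w_t) = \prod_{i=1}^K \nabla f_i(F_{i-1}(\w_t))$. The estimator $\nabla \widehat F_K(\w_t) = \prod_{i=1}^K \nabla f_i(\u_t^{i-1})$ then differs from the truth only in that each exact inner value $F_{i-1}(\w_t)$ is replaced by its estimate $\u_t^{i-1}$, where crucially $\u_t^0 = \w_t = F_0(\w_t)$ is exact. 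I would introduce the propagated function-value error $r_i \coloneqq \Norm{F_i(\w_t) - \u_t^i}$, with $r_0 = 0$, as the central bookkeeping quantity.

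Next I would apply the standard telescoping identity for a difference of matrix products, $\prod_i A_i - \prod_i B_i = \sum_{i=1}^K (B_1\cdots B_{i-1})(A_i - B_i)(A_{i+1}\cdots A_K)$, with $A_i = \nabla f_i(F_{i-1}(\w_t))$ and $B_i = \nabla f_i(\u_t^{i-1})$. Submultiplicativity of the operator norm together with $\Norm{\nabla f_j} \le L_f$ (which holds everywhere by $L_f$-Lipschitzness of $f_j$, so no projection is needed here since these are the true Jacobians) bounds the two surrounding products by $L_f^{K-1}$, while $L_J$-Lipschitzness of each Jacobian gives $\Norm{A_i - B_i} \le L_J \Norm{F_{i-1}(\w_t) - \u_t^{i-1}} = L_J r_{i-1}$. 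Summing the $K$ telescoping terms yields the linear bound $\Norm{\nabla F(\w_t) - \nabla \widehat F_K(\w_t)} \le L_f^{K-1} L_J \sum_{i=1}^K r_{i-1} = L_f^{K-1} L_J \sum_{i=1}^{K-1} r_i$, where the last step uses $r_0 = 0$; in particular only $r_1,\dots,r_{K-1}$ survive and the top-level error drops out.

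It then remains to convert the propagated errors $r_i$ into the per-level quantities $\Norm{f_i(\u_t^{i-1}) - \u_t^i}$ appearing in the statement. Adding and subtracting $f_i(\u_t^{i-1})$ and using $L_f$-Lipschitzness gives the recursion $r_i \le L_f r_{i-1} + \Norm{f_i(\u_t^{i-1}) - \u_t^i}$ with $r_0 = 0$, which unrolls to $r_i \le \sum_{j=1}^i L_f^{i-j}\Norm{f_j(\u_t^{j-1}) - \u_t^j}$. Substituting into the linear bound and exchanging the order of summation collects, for each level $i$, a geometric coefficient $L_f^{K-1}L_J \sum_{\ell=0}^{K-1-i} L_f^\ell$, which is precisely the constant $C_i$. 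Finally I would square the resulting estimate $\Norm{\nabla F(\w_t) - \nabla \widehat F_K(\w_t)} \le \sum_{i=1}^{K-1} C_i \Norm{f_i(\u_t^{i-1}) - \u_t^i}$ and apply Cauchy--Schwarz in the form $\bigl(\sum_{i=1}^{K-1} x_i\bigr)^2 \le (K-1)\sum_{i=1}^{K-1} x_i^2 \le K \sum_{i=1}^{K-1} x_i^2$ with $x_i = C_i\Norm{f_i(\u_t^{i-1}) - \u_t^i}$, which gives the claim. The only genuine structural insight is that bounding every Jacobian factor uniformly by $L_f$ prevents the accumulated error from growing uncontrollably with the depth $K$; everything else is the bookkeeping of the telescoping sum and the double-summation exchange, and the main care needed is verifying that the collected geometric coefficient agrees with the stated definition of $C_i$.
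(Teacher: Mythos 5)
Your proposal is correct and takes essentially the same route as the paper: the paper's proof performs your telescoping step as a level-by-level induction on $\Norm{\nabla F_i(\w_t)-\nabla \widehat{F}_i(\w_t)}$ (bounding the surrounding Jacobian factors by $L_f$, exactly as you do with the explicit product-difference identity), then unrolls the identical recursion $\Norm{\y_t^i-\u_t^i}\leq L_f\Norm{\y_t^{i-1}-\u_t^{i-1}}+\Norm{f_i(\u_t^{i-1})-\u_t^i}$ to get $\Norm{\y_t^i-\u_t^i}\leq \sum_{j=1}^{i}L_f^{i-j}\Norm{f_j(\u_t^{j-1})-\u_t^j}$, and finishes with the same (implicit) Cauchy--Schwarz squaring with $K-1\leq K$. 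One minor point in your favor: your collected coefficient $L_f^{K-1}L_J\sum_{\ell=0}^{K-1-i}L_f^{\ell}$ matches the $C_i=L_f^{K-1}L_J(1+L_f+\dotsc+L_f^{K-i-1})$ actually derived in the paper's proof, whereas the lemma statement's $C_i=L_f^{K-1}L_J\sum_{j=1}^{K-i-1}L_f^{j}$ drops the $j=0$ term and appears to be a typo (the proof's version is also the one consistent with the bound $C_i\leq L_F$ used later in the paper).
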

\begin{proof}
	First, we define that $\y_t^i \coloneqq f_i\circ f_{i-1}\circ \dotsc \circ f_1(\w_t)$, $\nabla \widehat{F}_i(\w_t) \coloneqq \nabla f_1(\w_t) \cdots \nabla f_i(\u^{i-1}_t)$.  	
	\begin{align*}
		\Norm{\nabla F_1(\w_t) - \nabla \widehat{F}_1(\w_t)} &= 0,\\
		\Norm{\nabla F_2(\w_t) - \nabla \widehat{F}_2(\w_t)} &= \Norm{\nabla f_1(\w_t) \nabla f_2(\y^1_t) - \nabla f_1(\w_t) \nabla f_2(\u^1_t)}\\
		&\leq  L_f L_J\Norm{\y_t^1 - \u^1_t},	\\
		\Norm{\nabla F_3(\w_t) - \nabla \widehat{F}_3(\w_t)} &= \Norm{\nabla f_1(\w_t) \nabla f_2(\y^1_t) \nabla f_3(\y^2_t) -  \nabla f_1(\w_t) \nabla f_2(\u^1_t) \nabla f_3(\u^2_t)}\\
		&\leq L_f^2 L_J\left(\Norm{\y_t^2 - \u_t^2}  +  \Norm{\y_t^1 - \u^1_t}\right),\\
		& \cdots\\
		\Norm{\nabla F_K(\w_t) - \nabla\widehat{F}_K(\w_t)}  &\leq L_f^{K-1}L_J \sum_{i=1}^{K-1} \Norm{\y_t^i - \u_t^i},
	\end{align*}
	%where we define $\G_K\coloneqq \prod_{i=1}^K G_i$. 
	Besides, we also have
	\begin{align*}
		\Norm{\y_t^2 - \u_t^2} &=\Norm{f_2\circ f_1(\w_t) - \u^2_t} \leq \Norm{f_2\circ f_1(\w_t) - f_2(\u^1_t)} + \Norm{f_2(\u^1_t)- \u^2_t}\\
		& \leq L_f \Norm{f_1(\w_t) - \u^1_t} + \Norm{f_2(\u^1_t)- \u^2_t},\\
		\Norm{\y_t^3-\u_t^3} &= \Norm{f_3\circ f_2\circ f_1(\w_t) - \u^3_t} \leq \Norm{f_3\circ f_2 \circ f_1(\w_t) - f_3(\u^2_t)} + \Norm{f_3(\u^2_t) - \u^3_t}\\
		& \leq L_f \Norm{\y_t^2 - \u^2_t}  + \Norm{f_3(\u^2_t) - \u^3_t}\\
		& \leq L_f(L_f \Norm{f_1(\w_t) - \u^1_t} + \Norm{f_2(\u_t^1) - \u_t^2}) +  \Norm{f_3(\u^2_t) - \u^3_t}\\
		& \cdots\\
		\Norm{\y_t^i-\u_t^i} & \leq L_f \Norm{\y_t^{i-1} - \u_t^{i-1}} + \Norm{f_i(\u_t^{i-1}) - \u_t^i}\\
		& \leq \sum_{j=1}^i L_f^{i-j} \Norm{f_j(\u_t^{j-1}) - \u_t^j}
	\end{align*}
	%where define that $\G_{i,j}\coloneqq \begin{cases}\prod_{k=j+1}^i G_k & i\geq j+1\\ 1 & i=j\end{cases}$. 
	To this end, we can conclude that:
	\begin{align*}
		\Norm{\nabla F(\w_t) -\nabla \widehat{F}_K(\w_t)} & \leq \sum_{i=1}^{K-1} C_i \Norm{f_i(\u^{i-1}_t) - \u_t^i}, 
	\end{align*}
	%where $C_i \coloneqq \G_K \sum_{i'=i}^{K} \frac{L_{i'}}{G_{i'}} \G_{i',i}$.
	where $C_i \coloneqq L_f^{K-1} L_J(1+L_f+\dotsc+L_f^{K-i-1})$.
\end{proof}	

%%%%%%%%%%%%%%%%%%%%%%%%%%%%%   LEMMA  5    %%%%%%%%%%%%%%%%%%%%%%%%%%%%%%%%%%%%%%%%%%%%%%%%%%
\begin{lemma}\label{lem:4} The estimated error of gradient can be bounded as:
	\begin{align*}
			 \E\left[\Norm{\nabla F(\w_t) - \v_t }^2\right] \leq 2K L_f^{2(K-1)}\sum_{i=1}^K \E\left[\Norm{\v_t^i - \nabla f_i(\u_t^{i-1})}^2\right] + 2K L_F^2\sum_{i=1}^{K-1} \E\left[\Norm{f_i(\u_t^{i-1})-\u_t^i}^2\right].
	\end{align*}
\end{lemma}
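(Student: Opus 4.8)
The plan is to insert the intermediate quantity $\nabla \widehat{F}_K(\w_t) = \prod_{i=1}^K \nabla f_i(\u_t^{i-1})$ — the product of the \emph{true} Jacobians evaluated at the \emph{estimated} inner values — between the true gradient and its estimator, and then split the error with the elementary inequality $\Norm{a-b}^2 \leq 2\Norm{a-c}^2 + 2\Norm{c-b}^2$:
\[
\Norm{\nabla F(\w_t) - \v_t}^2 \leq 2\Norm{\nabla F(\w_t) - \nabla \widehat{F}_K(\w_t)}^2 + 2\Norm{\nabla \widehat{F}_K(\w_t) - \v_t}^2.
\]
The first term measures how far the product of exact Jacobians, evaluated at the inexact inner values $\u_t^{i-1}$, is from the genuine gradient; this is precisely what Lemma~\ref{lem:9} controls. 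The second term isolates the error from replacing each true Jacobian $\nabla f_i(\u_t^{i-1})$ by its estimator $\v_t^i$, with the inner values frozen.

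For the first term I would invoke Lemma~\ref{lem:9}, which gives $\Norm{\nabla F(\w_t) - \nabla \widehat{F}_K(\w_t)}^2 \leq K\sum_{i=1}^{K-1} C_i^2 \Norm{f_i(\u_t^{i-1}) - \u_t^i}^2$ with $C_i = L_f^{K-1}L_J(1+L_f+\dots+L_f^{K-i-1})$. The key observation is that $C_i \leq L_F$ for every $i$: comparing against the form $L_F = L_f^{K-1}L_J(1+L_f+\dots+L_f^{K-1})$ obtained from Lemma~\ref{lem:2}, the geometric sum defining $C_i$ is a sub-sum of the one defining $L_F$ (it stops at exponent $K-i-1 \leq K-2$), so since all terms are nonnegative and the prefactors agree, $C_i \leq L_F$ holds with no restriction on the size of $L_f$. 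Substituting $C_i^2 \leq L_F^2$ turns the first term into $2KL_F^2\sum_{i=1}^{K-1}\Norm{f_i(\u_t^{i-1})-\u_t^i}^2$, the second summation in the claim.

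For the second term I would bound the difference of the two matrix products $\prod_i \nabla f_i(\u_t^{i-1})$ and $\prod_i \v_t^i$ by a telescoping decomposition,
\[
\prod_{i=1}^K \nabla f_i(\u_t^{i-1}) - \prod_{i=1}^K \v_t^i = \sum_{j=1}^K \Big(\prod_{i<j}\nabla f_i(\u_t^{i-1})\Big)\big(\nabla f_j(\u_t^{j-1}) - \v_t^j\big)\Big(\prod_{i>j}\v_t^i\Big),
\]
and then take norms summand by summand. Here the two structural ingredients of the algorithm become essential: each true Jacobian obeys $\Norm{\nabla f_i(\u_t^{i-1})}\leq L_f$ since $f_i$ is $L_f$-Lipschitz (Assumption~\ref{asm:stochastic1}), and each estimator obeys $\Norm{\v_t^i}\leq L_f$ thanks to the projection $\Pi_{L_f}$ in~(\ref{eq:esti_v}). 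Because the $j$-th summand carries exactly $K-1$ such ``other'' factors, submultiplicativity bounds it by $L_f^{K-1}\Norm{\nabla f_j(\u_t^{j-1}) - \v_t^j}$, giving $\Norm{\nabla \widehat{F}_K(\w_t) - \v_t} \leq L_f^{K-1}\sum_{j=1}^K \Norm{\v_t^j - \nabla f_j(\u_t^{j-1})}$. Squaring and applying Cauchy–Schwarz in the form $(\sum_{j=1}^K a_j)^2 \leq K\sum_j a_j^2$ produces $2KL_f^{2(K-1)}\sum_{i=1}^K\Norm{\v_t^i-\nabla f_i(\u_t^{i-1})}^2$, the first summation in the claim; taking expectations of both bounds finishes the argument.

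The step I expect to be the main obstacle is the telescoping product bound: one must confirm that the uniform factor bound $L_f$ holds in the matrix norm the analysis uses, so that submultiplicativity legitimately chains across Jacobians of differing dimensions, and — most importantly — recognize that the projection $\Pi_{L_f}$ is exactly the device keeping these factors, and hence the accumulated gradient error, from growing with the depth $K$. The inequality $C_i \leq L_F$ is the secondary delicate point, since it rests on the precise structure of the geometric sums rather than on any crude Lipschitz estimate.
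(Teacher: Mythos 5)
Your proposal is correct and follows essentially the same route as the paper's own proof: the identical splitting around $\nabla \widehat{F}_K(\w_t)$ via $\Norm{a-b}^2 \leq 2\Norm{a-c}^2 + 2\Norm{c-b}^2$, the same appeal to Lemma~\ref{lem:9} together with the observation $C_i \leq L_F$, and the same telescoping of the Jacobian product with each summand bounded by $L_f^{K-1}$ (using the Lipschitz bound for the true Jacobians and the projection $\Pi_{L_f}$ for the estimators) followed by Cauchy--Schwarz with factor $K$. The only cosmetic difference is that you write the telescoping sum in closed form while the paper replaces the factors $\v_t^i$ one at a time; the arguments are the same.
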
	
\begin{proof}
Consider the definition of $\v_t$:
	\begin{align*}
	 \E\left[\Norm{\v_t - \nabla F(\w_t)}^2\right] \leq 2\underbrace{\E\left[\Norm{\prod_{i=1}^K \v_t^i - \prod_{i=1}^K\nabla f_i(\u_t^{i-1})}^2\right]}_{\coloneqq \diamondsuit} + 2\underbrace{\E\left[\Norm{\nabla \widehat{F}_K(\w_t) - \nabla F(\w_t)}^2\right]}_{\coloneqq \clubsuit}.
	\end{align*}
Based on Lemma~\ref{lem:9}, we have:
\begin{align*}
	\clubsuit &\leq K\sum_{i=1}^{K-1} C_i^2\E\left[\Norm{f_i(\u_t^{i-1})-\u_t^i}^2\right],
\end{align*}
where $C_i \coloneqq L_f^{K-1} L_J\sum_{j=1}^{K-i-1}{L_f^j} \leq L_F$.

The first term $\diamondsuit$ can be upper bounded as:
\begin{align*}
	\diamondsuit &= \E\left[\Norm{\prod_{i=1}^K \v_t^i - \prod_{i=1}^K\nabla f_i(\u_t^{i-1})}^2\right]\\
	&\leq K \E\left[\Norm{\prod_{i=1}^K\nabla f_i(\u_t^{i-1}) - \v_t^1 \prod_{i=2}^K\ \nabla f_i(\u^{i-1}_t) }^2 \right]\\
& \quad\quad + K\E\left[\Norm{\v_t^1\prod_{i=2}^K \nabla f_i(\u^{i-1}_t) -\v_t^1 \v_t^2 \prod_{i=3}^K \nabla f_i(\u^{i-1}_t)}^2\right]\\
& \quad\quad + \dotsc\\
& \quad\quad + K\E\left[\Norm{ \left(\prod_{i=1}^{K-1} \v_t^i\right) \nabla f_K(\u_t^{K-1}) - \prod_{i=1}^K  \v_t^i}^2\right]\\
& \leq K \left(\sum_{i=1}^K L_f^{2(K-1)}\E\left[\Norm{\v_t^i - \nabla f_i(\u_t^{i-1})}^2\right]\right).
\end{align*}
\end{proof}	

%%%%%%%%%%%%%%%%%%%%%%%%%%   LEMMA  6    %%%%%%%%%%%%%%%%%%%%%%%%%%%%%%%%%%%%%%%%%%%%%%%%%%%%%
\begin{lemma}\label{lem:5}
	The variance of the estimated gradient and function value satisfies the following guarantee:
\begin{align*}
    \E\left[\Norm{\v_t^i - \nabla f_i(\u_t^{i-1})}^2\right] &\leq (1-\beta_{t})\E \left[\Norm{\v_{t-1}^i - \nabla f_i(\u_{t-1}^{i-1})}^2 \right]+ 2\beta_{t}^2 \sigma_J^2 + 2\LL_J^2 \E\left[\Norm{\u_t^{i-1} - \u_{t-1}^{i-1}}^2\right] \\
	\E\left[\Norm{\u_t^i -  f_i(\u_t^{i-1})}^2\right] &\leq (1-\beta_{t}) \E \left[\Norm{\u_{t-1}^i - f_i(\u_{t-1}^{i-1})}^2\right] + 2\beta_{t}^2 \sigma_f^2 + 2\LL_f^2 \E\left[\Norm{\u_t^{i-1} - \u_{t-1}^{i-1}}^2\right]
\end{align*}
\end{lemma}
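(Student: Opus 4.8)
The plan is to prove both bounds by the same STORM variance argument, treating the function-value estimator $\u_t^i$ and the Jacobian estimator $\v_t^i$ in parallel; I describe the argument for $\u_t^i$ in full and then indicate the two modifications needed for $\v_t^i$. First I would write down the error recursion. Subtracting $f_i(\u_t^{i-1})$ from the update (\ref{eq:esti_u}) and regrouping the deterministic and stochastic pieces gives the STORM identity
\begin{align*}
\u_t^i - f_i(\u_t^{i-1}) &= (1-\beta_t)\left(\u_{t-1}^i - f_i(\u_{t-1}^{i-1})\right) + \beta_t A_t + (1-\beta_t) B_t,
\end{align*}
where $A_t = f_i(\u_t^{i-1};\xi_t^i) - f_i(\u_t^{i-1})$ and $B_t = \left(f_i(\u_t^{i-1};\xi_t^i) - f_i(\u_{t-1}^{i-1};\xi_t^i)\right) - \left(f_i(\u_t^{i-1}) - f_i(\u_{t-1}^{i-1})\right)$. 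The only manipulation used is $f_i(\u_t^{i-1}) = (1-\beta_t)f_i(\u_t^{i-1}) + \beta_t f_i(\u_t^{i-1})$ together with adding and subtracting $f_i(\u_{t-1}^{i-1})$ inside the $(1-\beta_t)$ block.

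Next I would square this identity and take expectations, writing $M_t = \beta_t A_t + (1-\beta_t)B_t$ and $e_t^i := \u_t^i - f_i(\u_t^{i-1})$ so that the recursion reads $e_t^i = (1-\beta_t)e_{t-1}^i + M_t$. The crucial step is that the cross term $2(1-\beta_t)\E\left[\langle e_{t-1}^i, M_t\rangle\right]$ vanishes. To see this I would condition on the $\sigma$-algebra generated by all samples strictly before time $t$ together with $\xi_t^1,\dots,\xi_t^{i-1}$: relative to this information $e_{t-1}^i$, $\u_t^{i-1}$ and $\u_{t-1}^{i-1}$ are all measurable, whereas the only fresh randomness in $M_t$ is $\xi_t^i$, which by \cref{asm:stochastic2} (mutual independence of $\{\xi_t^i\}$) is independent of that information and satisfies $\E_{\xi_t^i}[f_i(\x;\xi_t^i)] = f_i(\x)$ for any fixed $\x$. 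Hence the conditional expectations of $A_t$ and $B_t$ both vanish, so does that of $M_t$, and the cross term drops out, leaving $\E[\Norm{e_t^i}^2] = (1-\beta_t)^2\E[\Norm{e_{t-1}^i}^2] + \E[\Norm{M_t}^2]$.

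It then remains to bound $\E[\Norm{M_t}^2]$. Using $\Norm{a+b}^2 \leq 2\Norm{a}^2 + 2\Norm{b}^2$, I would bound $\E[\Norm{A_t}^2] \leq \sigma_f^2$ directly from \cref{asm:stochastic2}, and bound $\E[\Norm{B_t}^2]$ by discarding the subtracted mean (the variance is at most the second moment) and then invoking the mean-squared smoothness of \cref{asm:stoc_smooth3}, giving $\E[\Norm{B_t}^2] \leq \LL_f^2\,\E[\Norm{\u_t^{i-1} - \u_{t-1}^{i-1}}^2]$. This yields $\E[\Norm{M_t}^2] \leq 2\beta_t^2\sigma_f^2 + 2(1-\beta_t)^2\LL_f^2\,\E[\Norm{\u_t^{i-1}-\u_{t-1}^{i-1}}^2]$. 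Finally, using $0\leq\beta_t\leq 1$ so that $(1-\beta_t)^2\leq 1-\beta_t$ on the leading term and $(1-\beta_t)^2\leq 1$ on the smoothness term recovers exactly the claimed inequality for $\u_t^i$.

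For the Jacobian estimator $\v_t^i$ the argument is identical after one extra observation. Since $f_i$ is $L_f$-Lipschitz, its true Jacobian obeys $\Norm{\nabla f_i(\u_t^{i-1})}\leq L_f$, so $\nabla f_i(\u_t^{i-1})$ lies in the ball onto which $\Pi_{L_f}$ projects; because projection onto a convex set is non-expansive and cannot increase the distance to a point already in the set, $\Norm{\v_t^i - \nabla f_i(\u_t^{i-1})}\leq \Norm{z_t^i - \nabla f_i(\u_t^{i-1})}$, where $z_t^i$ denotes the pre-projection STORM estimate in (\ref{eq:esti_v}). After discarding the projection this way, the four stages above go through verbatim with $\nabla f_i$, $\sigma_J$, $\LL_J$ in place of $f_i$, $\sigma_f$, $\LL_f$. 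I expect the only delicate point to be the conditioning used to kill the cross term: one must isolate the correct $\sigma$-algebra that makes $e_{t-1}^i$ measurable while leaving $\xi_t^i$ free, which is precisely where the mutual independence of the per-level samples and the fact that $\u_t^{i-1}$ is built only from levels $1,\dots,i-1$ enter the proof.
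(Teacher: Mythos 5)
Your proof is correct and follows essentially the same route as the paper's: drop the projection by non-expansiveness (the true Jacobian lies in the radius-$L_f$ ball since $f_i$ is $L_f$-Lipschitz, so it is a fixed point of $\Pi_{L_f}$), write the STORM error recursion, kill the cross term by conditioning on the $\sigma$-algebra generated by all past samples together with $\xi_t^1,\dots,\xi_t^{i-1}$, and bound the zero-mean residual by $2\beta_t^2\sigma^2$ plus the mean-squared-smoothness term with variance bounded by second moment. The only cosmetic difference is your grouping of the martingale part as $\beta_t A_t + (1-\beta_t)B_t$ with the noise evaluated at the new point $\u_t^{i-1}$, whereas the paper groups it as $B_t + \beta_t \tilde{A}_t$ with noise at the old point $\u_{t-1}^{i-1}$; the two decompositions are algebraically identical and yield the same bound.
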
	
\begin{proof}
	Consider the update $\v_t^i = \Pi_{L_f}\left[(1-\beta_{t})\v_{t-1}^i + \beta_{t} \nabla f_i(\u_t^{i-1};\xi_t^i) + (1-\beta_{t})\left(\nabla f_i(\u_t^{i-1};\xi_t^i) - \nabla f_i(\u_{t-1}^{i-1};\xi_t^i)\right)\right]$ and $\Pi_{L_f}[\nabla f_i(\u_t^{i-1})] = \nabla f_i(\u_t^{i-1})$.
	\begin{align*}
		& \E\left[\Norm{\v_t^i - \nabla f_i(\u_t^{i-1})}^2\right]\\
		 = &  \E\left[\Norm{\Pi_{L_f}\left[(1-\beta_{t})\v_{t-1}^i +  \nabla f_i(\u_t^{i-1};\xi_t^i) - (1-\beta_{t})\nabla f_i(\u_{t-1}^{i-1};\xi_t^i)\right]- \Pi_{L_f}[\nabla f_i(\u_t^{i-1})]}^2\right]\\
		 \leq& \E\left[\Norm{(1-\beta_{t})\v_{t-1}^i +  \nabla f_i(\u_t^{i-1};\xi_t^i) - (1-\beta_{t})\nabla f_i(\u_{t-1}^{i-1};\xi_t^i)- \nabla f_i(\u_t^{i-1})}^2\right]\\
		 =& \E\left[\left\|(1-\beta_{t})\left(\v_{t-1}^i - \nabla f_i(\u_{t-1}^{i-1})\right) + \left(\nabla f_i(\u_{t-1}^{i-1}) - \nabla f_i(\u_t^{i-1})  -\left(\nabla f_i(\u_{t-1}^{i-1};\xi_t^i) - \nabla f_i(\u_t^{i-1};\xi_t^i) \right)\right) \right.\right.\\
		&\quad\quad\quad \left.\left. + \beta_{t} \left( \nabla f_i(\u_{t-1}^{i-1};\xi_t^i) - \nabla f_i(\u_{t-1}^{i-1})\right)\right\|^2\right]
	\end{align*}
Note that $\E\left[\nabla f_i(\u_{t-1}^{i-1}) - \nabla f_i(\u_t^{i-1})  -\left(\nabla f_i(\u_{t-1}^{i-1};\xi_t^i) - \nabla f_i(\u_t^{i-1};\xi_t^i)\right)+ \beta_{t} \left( \nabla f_i(\u_{t-1}^{i-1};\xi_t^i) - \nabla f_i(\u_{t-1}^{i-1})\right)\right] = 0$. 
\begin{align*}
	\E\left[\Norm{\v_t^i - \nabla f_i(\u_t^{i-1})}^2\right]	\leq &(1-\beta_{t})\E \Norm{\v_{t-1}^i - \nabla f_i(\u_t^{i-1})}^2 + 2 \beta_{t}^2 \E\left[\Norm{\nabla f_i(\u_{t-1}^{i-1};\xi_t^i) - \nabla f_i(\u_{t-1}^{i-1})}^2\right] \\ & \quad\quad\quad + 2\E\left[\Norm{\nabla f_i(\u_{t-1}^{i-1};\xi_t^i) - \nabla f_i(\u_t^{i-1};\xi_t^i)}^2\right]\\
	 \leq &(1-\beta_{t})\E \Norm{\v_{t-1}^i - \nabla f_i(\u_{t-1}^{i-1})}^2 + 2\beta_{t}^2 \sigma_J^2 + 2\LL_J^2 \E\left[\Norm{\u_t^{i-1} - \u_{t-1}^{i-1}}^2\right]
\end{align*}
Similarly, consider $\u_t^i = (1-\beta_{t})\u_{t-1}^i + \beta_{t} f_i(\u_t^{i-1};\xi_t^i) + (1-\beta_{t})\left(f_i(\u_t^{i-1};\xi_t^i) - f_i(\u_{t-1}^{i-1};\xi_t^i)\right)$:
	\begin{align*}
		& \E\left[\Norm{f_i(\u_t^{i-1}) - \u_t^i}^2\right]\\
		 =&  \E\left[\Norm{(1-\beta_{t})\u_{t-1}^i +  f_i(\u_t^{i-1};\xi_t^i) - (1-\beta_{t}) f_i(\u_{t-1}^{i-1};\xi_t^i)-  f_i(\u_t^{i-1})}^2\right]\\
		 =& \E\left[\left\|(1-\beta_{t})\left(\u_{t-1}^i - f_i(\u_{t-1}^{i-1})\right) + \left( f_i(\u_{t-1}^{i-1}) - f_i(\u_t^{i-1})  -\left(f_i(\u_{t-1}^{i-1};\xi_t^i) -  f_i(\u_t^{i-1};\xi_t^i) \right)\right) \right.\right.\\
		 &\quad\quad\quad \left.\left. + \beta_{t} \left(  f_i(\u_{t-1}^{i-1};\xi_t^i) -  f_i(\u_{t-1}^{i-1})\right)\right\|^2\right]
	\end{align*}
Note that $
	\E\left[f_i(\u_{t-1}^{i-1}) - f_i(\u_t^{i-1})  -\left( f_i(\u_{t-1}^{i-1};\xi_t^i) - f_i(\u_t^{i-1};\xi_t^i)\right)+ \beta_{t} \left(  f_i(\u_{t-1}^{i-1};\xi_t^i) -  f_i(\u_{t-1}^{i-1})\right)\right] = 0 $
\begin{align*}
	& \E\left[\Norm{\u_t^i -  f_i(\u_t^{i-1})}^2\right]\\ \leq &(1-\beta_{t})\E \Norm{\u_{t-1}^i - f_i(\u_{t-1}^{i-1})}^2 + 2 \beta_{t}^2 \E\left[\Norm{f_i(\u_{t-1}^{i-1};\xi_t^i) - f_i(\u_{t-1}^{i-1})}^2\right] + 2\E\left[\Norm{ f_i(\u_{t-1}^{i-1};\xi_t^i) - f_i(\u_t^{i-1};\xi_t^i)}^2\right]\\
	 \leq &(1-\beta_{t}) \E\Norm{\u_{t-1}^i - f_i(\u_{t-1}^{i-1})}^2 + 2\beta_{t}^2 \sigma_f^2 + 2\LL_f^2 \E\left[\Norm{\u_t^{i-1} - \u_{t-1}^{i-1}}^2\right]
\end{align*}
\end{proof}	

%%%%%%%%%%%%%%%%%%%%%%%%%%%%%%   LEMMA  7    %%%%%%%%%%%%%%%%%%%%%%%%%%%%%%%%%%%%%%%%%%%%%%%%%
\begin{lemma}\label{lem:6}
We have the following guarantee:
	\begin{align*}
    \sum_{i=1}^{K} \E\left[\Norm{\u_{t+1}^{i-1} - \u_{t}^{i-1}}^2\right] \leq \left(\sum_{i=1}^{K}\left(2\LL_f^2\right)^{i-1}\right) \left(\E \left[\eta_{t}^2 \Norm{\v_t}^2\right] + 2\beta_{t+1}^2\sigma_f^2 K   + 2\beta_{t+1}^2 K \sum_{i=1}^{K}  \E\left[\Norm{\u_t^i - f_i(\u_t^{i-1})}^2\right]\right)
\end{align*}
\end{lemma}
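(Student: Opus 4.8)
The plan is to control the per-iteration drift of each inner estimator $\u^{i}$ by a recursion in the level index and then unroll it. The starting point is an exact expansion of the update~(\ref{eq:esti_u}): writing $j=i-1$, for $j\geq 1$ one combines the two copies of $f_j(\u_{t+1}^{j-1};\xi_{t+1}^j)$ (whose coefficients $\beta_{t+1}$ and $1-\beta_{t+1}$ sum to one) to obtain the clean identity
\begin{align*}
\u_{t+1}^{j} - \u_{t}^{j} = \left[f_j(\u_{t+1}^{j-1};\xi_{t+1}^j) - f_j(\u_{t}^{j-1};\xi_{t+1}^j)\right] + \beta_{t+1}\left[f_j(\u_{t}^{j-1};\xi_{t+1}^j) - \u_{t}^{j}\right].
\end{align*}
The first bracket is a ``drift'' term driven by the movement $\u_{t+1}^{j-1}-\u_t^{j-1}$ at the previous level, and the second is a $\beta_{t+1}$-scaled ``fresh noise'' term. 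The base case $j=0$ is immediate: since $\u_{t+1}^0=\w_{t+1}$ and $\u_t^0=\w_t$, the weight update $\w_{t+1}=\w_t-\eta_t\v_t$ gives $\E[\Norm{\u_{t+1}^0-\u_t^0}^2]=\E[\eta_t^2\Norm{\v_t}^2]$, which is exactly the source of the $\eta_t^2\Norm{\v_t}^2$ term in the claim.

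Next I would bound $R_j := \E[\Norm{\u_{t+1}^j-\u_t^j}^2]$ for $j\geq 1$. Applying $\Norm{a+b}^2\leq 2\Norm{a}^2+2\Norm{b}^2$ to the identity, the drift term is handled by the mean-squared smoothness assumption (\cref{asm:stoc_smooth3}), which yields $\E[\Norm{A}^2]\leq \LL_f^2\,R_{j-1}$; here one must condition on everything except $\xi_{t+1}^j$ and use the mutual independence of $\{\xi_{t+1}^i\}_i$, so that $\u_{t+1}^{j-1},\u_t^{j-1}$ are frozen when the smoothness bound in the variable $\xi_{t+1}^j$ is invoked. For the noise term, splitting $f_j(\u_t^{j-1};\xi_{t+1}^j)-\u_t^j$ into $(f_j(\u_t^{j-1};\xi_{t+1}^j)-f_j(\u_t^{j-1}))$ plus $(f_j(\u_t^{j-1})-\u_t^j)$ and again using $\Norm{a+b}^2\leq 2\Norm{a}^2+2\Norm{b}^2$ gives, via the bounded-variance assumption (\cref{asm:stochastic2}), a contribution proportional to $\beta_{t+1}^2\sigma_f^2+\beta_{t+1}^2\E[\Norm{\u_t^j-f_j(\u_t^{j-1})}^2]$. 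Together these produce a one-step recursion of the form $R_j\leq 2\LL_f^2\,R_{j-1}+E_j$, where $E_j$ collects the $\beta_{t+1}^2$ noise terms at level $j$.

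Finally I would unroll the recursion, $R_j\leq (2\LL_f^2)^j R_0+\sum_{m=1}^{j}(2\LL_f^2)^{j-m}E_m$, and sum over $j=0,\dots,K-1$ (equivalently $i=1,\dots,K$). The geometric weights $(2\LL_f^2)^{j-m}$ are each bounded by $\sum_{i=1}^K(2\LL_f^2)^{i-1}$, which factors out as the claimed prefactor; summing over the $K$ levels introduces the overall factor $K$ on the noise terms, and enlarging $\sum_{m=1}^{K-1}\E[\Norm{\u_t^m-f_m(\u_t^{m-1})}^2]$ to the full sum up to $K$ then matches the stated right-hand side. The main obstacle is the compounding across levels: the drift term couples level $j$ to level $j-1$, so a careless bound would let the per-step movement grow uncontrollably with the depth $K$. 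Isolating a clean geometric recursion with ratio $2\LL_f^2$ (rather than a mixed telescoping estimate) is precisely what collapses the depth dependence into the single prefactor $\sum_{i=1}^K(2\LL_f^2)^{i-1}$, and a secondary point of care is the bookkeeping of conditional expectations so that Assumptions~\ref{asm:stochastic2} and~\ref{asm:stoc_smooth3} can be applied one level at a time.
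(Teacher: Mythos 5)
Your proposal is essentially the paper's own proof: the same base case $\E[\Norm{\u_{t+1}^0-\u_t^0}^2]=\E[\eta_t^2\Norm{\v_t}^2]$, the same one-step expansion of the update (the paper writes it as three terms, $\beta_{t+1}(f_{i-1}(\u_t^{i-2})-\u_t^{i-1})$ plus the drift plus the zero-mean noise, which is exactly your two-term identity with the noise already split), the same level-wise recursion $R_j \leq 2\LL_f^2 R_{j-1} + E_j$ obtained by freezing $\u_{t+1}^{j-1},\u_t^{j-1}$ and conditioning on $\xi_{t+1}^j$, and the same unrolling and summation producing the prefactor $\sum_{i=1}^K(2\LL_f^2)^{i-1}$ and the factor $K$. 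The one deviation costs you a constant: invoking $\Norm{a+b}^2\leq 2\Norm{a}^2+2\Norm{b}^2$ a \emph{second} time on the noise split yields $4\beta_{t+1}^2\sigma_f^2 + 4\beta_{t+1}^2\,\E[\Norm{\u_t^j-f_j(\u_t^{j-1})}^2]$ rather than the stated $2\beta_{t+1}^2$; since $f_j(\u_t^{j-1};\xi_{t+1}^j)-f_j(\u_t^{j-1})$ has zero mean conditional on the past (the conditioning you already set up), the cross term vanishes and one has $\E\Norm{a+b}^2=\E\Norm{a}^2+\E\Norm{b}^2$ for that pair, which is how the paper recovers the constant $2$. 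This is a cosmetic repair — with the factor $4$ the lemma still holds up to a constant and only the definition of $L_1$ downstream would shift, leaving the $\mathcal{O}(1/\epsilon^3)$ rate intact — but to match the statement verbatim you should replace the second Young step by this orthogonality argument.
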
	
\begin{proof}
First, we discuss two cases:
\begin{itemize}
	\item[1.] ($i=1$): $\E\left[\Norm{\u_{t+1}^{i-1} - \u_{t}^{i-1}}^2\right] = \E\left[\Norm{\w_{t+1} - \w_{t}}^2\right] = \E \left[\eta_{t}^2 \Norm{\v_t}^2\right]$.
	\item[2.] ($2\leq i\leq K$):
	\begin{align*}
		& \E\left[\Norm{\u_{t+1}^{i-1} - \u_{t}^{i-1}}^2\right]\\
		 =&\E\left[\left\|\beta_{t+1}\left(f_{i-1}(\u_{t}^{i-2}) - \u_{t}^{i-1}\right) + (f_{i-1}(\u_{t+1}^{i-2};\xi_{t+1}^{i-1}) - f_{i-1}(\u_{t}^{i-2};\xi_{t+1}^{i-1}))\right.\right.\\
		 &\quad \left.\left.+ \beta_{t+1} \left(f_{i-1} (\u_{t}^{i-2};\xi_{t+1}^{i-1})-f_{i-1}(\u_{t}^{i-2})\right) \right\|^2 \right]\\
		 \leq& 2\E\left[\Norm{\beta_{t+1}\left(f_{i-1}(\u_{t}^{i-2}) - \u_{t}^{i-1}\right) + \beta_{t+1} \left(f_{i-1} (\u_{t}^{i-2};\xi_{t+1}^{i-1})-f_{i-1}(\u_{t}^{i-2})\right)}^2\right] + 2\LL_f^2 \E\left[\Norm{\u_{t+1}^{i-2} - \u_{t}^{i-2}}^2 \right]\\
		 \leq&  2\beta_{t+1}^2 \Norm{f_{i-1}(\u_{t}^{i-2}) - \u_{t}^{i-1}}^2 + 2\beta_{t+1}^2 \sigma_f^2 + 2\LL_f^2 \E\left[\Norm{\u_{t+1}^{i-2} - \u_{t}^{i-2}}^2 \right].
	\end{align*} 
	\end{itemize}
Denote $\Upsilon_t^i \coloneqq \E\left[\Norm{\u_t^i - f_i(\u_t^{i-1})}^2\right]$ and  $\A^{i} \coloneqq \E\left[\Norm{\u_{t+1}^{i-1} - \u_{t}^{i-1}}^2\right]$, we have $A^{i} \leq 2\LL_f^2 A^{i-1} + 2\beta_{t+1}^2 \Upsilon_t^{i-1} + 2\beta_{t+1}^2\sigma_f^2$ for $i\geq 2$. Then we can get:
\begin{align*}
A^{1} &\leq \E \left[\eta_{t}^2 \Norm{\v_t}^2\right] \\
A^{2} &\leq \left(2\LL_f^2\right) \E \left[\eta_{t}^2 \Norm{\v_t}^2\right] +  2\beta_{t+1}^2\sigma_f^2 +2\beta_{t+1}^2 \Upsilon_t^{1} \\
A^{3} &\leq \left(2\LL_f^2\right)^2 \E \left[\eta_{t}^2 \Norm{\v_t}^2\right] +  2\beta_{t+1}^2\sigma_f^2 \left(1+2\LL_f^2\right) + 2\beta_{t+1}^2\left( 2\LL_f^2  \Upsilon_t^{1}+ \Upsilon_t^{2}\right) \\
& \cdots\\
A^{i} &\leq \left(2\LL_f^2\right)^{i-1} \E \left[\eta_{t}^2 \Norm{\v_t}^2\right] + 2\beta_{t+1}^2\sigma_f^2 \sum_{j=1}^{i-1}\left(2\LL_f^2\right)^{j-1}  + 2\beta_{t+1}^2 \sum_{j=1}^{i-1} \left(2\LL_f^2\right)^{i-1-j} \Upsilon_t^{j}\\
&\leq \left(2\LL_f^2\right)^{i-1} \E \left[\eta_{t}^2 \Norm{\v_t}^2\right] + 2\beta_{t+1}^2\sigma_f^2 \sum_{j=1}^{K}\left(2\LL_f^2\right)^{j-1}  + 2\beta_{t+1}^2 \sum_{j=1}^{K} \left(2\LL_f^2\right)^{K-j} \Upsilon_t^{j}
\end{align*}
When summing up, we have:
\begin{align*}
    \sum_{i=1}^{K} A^{i} &\leq \sum_{i=1}^{K}\left(2\LL_f^2\right)^{i-1} \E \left[\eta_{t}^2 \Norm{\v_t}^2\right] + 2\beta_{t+1}^2\sigma_f^2 K \sum_{i=1}^{K}\left(2\LL_f^2\right)^{i-1}  + 2\beta_{t+1}^2K \sum_{i=1}^{K} \left(2\LL_f^2\right)^{K-i} \Upsilon_t^{i} \\
    &\leq \left(\sum_{i=1}^{K}\left(2\LL_f^2\right)^{i-1}\right) \left(\E \left[\eta_{t}^2 \Norm{\v_t}^2\right] + 2\beta_{t+1}^2\sigma_f^2 K   + 2\beta_{t+1}^2K \sum_{i=1}^{K}  \Upsilon_t^{i}\right),
\end{align*}
\end{proof}	

%%%%%%%%%%%%%%%%%%%%%%%%%%%     Theorem  1   %%%%%%%%%%%%%%%%%%%%%%%%%%%%%%%%%%%%
Now we finish the proof of \cref{thm:main}. Denote  that $\Gamma_{t}=F(\w_t)+\frac{1}{c_0 \eta_{t-1}} \sum_{i=1}^{K}\left(\Phi_{t}^{i}+\Upsilon_{t}^{i}\right)$, $\Phi_t^i \coloneqq \Norm{\v_t^i - \nabla f_i(\u_t^{i-1})}^2$ and $\Upsilon_t^i \coloneqq \E\left[\Norm{\u_t^i - f_i(\u_t^{i-1})}^2\right]$.  Based on Lemma~\ref{lem:3}, we have:
	\begin{align*}
	    &\E\left[\Gamma_{t+1}-\Gamma_{t}\right] \\
	    = &\E\left[F(\w_{t+1}) - F(\w_t)+ \frac{1}{c_0 \eta_{t}} \sum_{i=1}^{K}\left(\Phi_{t+1}^{i}+\Upsilon_{t+1}^{i}\right) - \frac{1}{c_0 \eta_{t-1}} \sum_{i=1}^{K}\left(\Phi_{t}^{i}+\Upsilon_{t}^{i}\right)\right]\\
	     \leq & \E\left[\frac{\eta_t}{2}\|\nabla F(\w_{t})-\v_{t}\|^{2}-\frac{\eta_t}{2}\|\nabla F(\w_{t})\|^{2}-\frac{\eta_t}{4}\left\|\v_{t}\right\|^{2} + \frac{1}{c_0 \eta_{t}} \sum_{i=1}^{K}\left(\Phi_{t+1}^{i}+\Upsilon_{t+1}^{i}\right) - \frac{1}{c_0 \eta_{t-1}} \sum_{i=1}^{K}\left(\Phi_{t}^{i}+\Upsilon_{t}^{i}\right)\right]
	\end{align*}
Define constant $L_{1}=\max \left\{1, K L_{f}^{2(K-1)}, K L_{F}^{2}, 2 K\left(\sigma_{J}^{2}+\sigma_{f}^{2}\right),2\left(L_{J}^{2}+L_{f}^{2}\right)\left(1+2 K+2 K \sigma_{f}^{2}\right) \sum_{i=1}^{K}\left(2 \LL_{f}^{2}\right)^{i-1}\right\}$.	By summing up and rearranging, we have:
	\begin{align*}
	 &\E\left[\sum_{t=1}^{T}\frac{\eta_t}{2}\|\nabla F(\w_{t})\|^{2}\right] \\
	 \leq& \E\left[\left(\Gamma_{1}-\Gamma_{T+1}\right) + \sum_{t=1}^{T} \frac{\eta_t}{2}\|\nabla F(\w_{t})-\v_{t}\|^{2}- \sum_{t=1}^{T} \frac{\eta_t}{4}\left\|\v_{t}\right\|^{2} - \sum_{t=1}^{T} \sum_{i=1}^{K}  \frac{1}{c_0 \eta_{t-1}} \left( \Phi_{t}^{i}+\Upsilon_{t}^{i}\right)\right.\\
	 &\quad\quad \left.+ \sum_{t=1}^{T} \sum_{i=1}^{K}  \frac{1}{c_0 \eta_{t}} \left( \Phi_{t+1}^{i}+\Upsilon_{t+1}^{i}\right)\right] \\
	 \leq& \E\left[\left(\Gamma_{1}-\Gamma_{T+1}\right) + L_1 \sum_{t=1}^{T} \sum_{i=1}^{K} \eta_t \left(\Phi_{t}^{i}+\Upsilon_{t}^{i}\right)- \sum_{t=1}^{T} \frac{\eta_t}{4}\left\|\v_{t}\right\|^{2} - \sum_{t=1}^{T} \sum_{i=1}^{K}  \frac{1}{c_0 \eta_{t-1}} \left( \Phi_{t}^{i}+\Upsilon_{t}^{i}\right)  \right.\\
	 & \quad\quad  \left.+\sum_{t=1}^{T} \sum_{i=1}^{K}  \frac{(1-\beta_{t+1})}{c_0 \eta_{t}} \left( \Phi_{t}^{i}+\Upsilon_{t}^{i}\right) + \sum_{t=1}^{T}   \frac{2\beta_{t+1}^2 K \left(\sigma_J^2+\sigma_f^2\right)}{c_0 \eta_{t}}  + \sum_{t=1}^{T} \sum_{i=1}^{K}  \frac{2\left(\LL_J^2+\LL_f^2\right)}{c_0 \eta_{t}}  \Norm{\u_{t+1}^{i-1} - \u_{t}^{i-1}}^2\right]\\
	 \leq& \E\left[\left(\Gamma_{1}-\Gamma_{T+1}\right) + L_1 \sum_{t=1}^{T} \sum_{i=1}^{K} \eta_t \left(\Phi_{t}^{i}+\Upsilon_{t}^{i}\right)- \sum_{t=1}^{T} \frac{\eta_t}{4}\left\|\v_{t}\right\|^{2} - \sum_{t=1}^{T} \sum_{i=1}^{K}  \left( \frac{1}{c_0 \eta_{t-1}}-\frac{1-\beta_{t+1}}{c_0 \eta_{t}}\right) \left( \Phi_{t}^{i}+\Upsilon_{t}^{i}\right) \right.\\
	 & \quad\quad   \left.+ \sum_{t=1}^{T}   \frac{2\beta_{t+1}^2 K \left(\sigma_J^2+\sigma_f^2\right)}{c_0 \eta_{t}} +\sum_{t=1}^{T}  \frac{L_1}{c_0 }\eta_{t} \Norm{\v_t}^2 + \sum_{t=1}^{T}  \frac{L_1 \beta_{t+1}^2}{c_0\eta_{t}} + \sum_{t=1}^{T} \sum_{i=1}^{K}   \frac{L_1 \beta_{t+1}^2}{c_0\eta_{t}} \Upsilon_{t}^{i}\right]\\
	 \leq& \E\left[\left(\Gamma_{1}-\Gamma_{T+1}\right) + \sum_{t=1}^{T} \sum_{i=1}^{K} \left(L_1 \eta_t -  \frac{1}{c_0 \eta_{t-1}}+\frac{L_1 \beta_{t+1}^2}{c_0\eta_{t}}+\frac{(1-\beta_{t+1})}{c_0 \eta_{t}}\right) \left(\Phi_{t}^{i}+\Upsilon_{t}^{i}\right) \right.    \\
	 &\quad\quad \left. - \sum_{t=1}^{T} \left(\frac{1}{4}-\frac{L_1}{c_0 }\right)\eta_t\left\|\v_{t}\right\|^{2} + \sum_{t=1}^{T}  \frac{2 L_1 \beta_{t+1}^2}{c_0\eta_{t}}\right]
	\end{align*}
Set $c_0 = 4L_1$, $\eta_t = \left(a+t \right)^{-1/3}$ and $c=10L_{1}^2$, $a={\left( 20L_1^3\right)}^{3/2}$. Since $(x+y)^{1 / 3}-x^{1 / 3} \leq y x^{-2 / 3} / 3$ and $a \geq 2$, we have:
\begin{align*}
    \frac{1}{\eta_{t}}-\frac{1}{\eta_{t-1}}=\left(a +t \right)^{1 / 3}-\left(a+(t-1) \right)^{1 / 3} \leq \frac{1}{3 \left(a +(t-1) \right)^{2 / 3}} \leq \frac{1}{3 \left(a / 2+t \right)^{2 / 3}} \leq  \frac{2^{2 / 3}}{3 \left(a  +t \right)^{2 / 3}} \leq  \eta_{t}^2 
\end{align*}
Also, we have $\beta_{t+1} = c\eta_t^2 \leq c\eta_0^2 \leq 10L_1^2 a^{-2/3} \leq \left( 2L_1\right)^{-1}$, so:
\begin{align*}
     L_1 \eta_t -  \frac{1}{c_0 \eta_{t-1}}+\frac{L_1 \beta_{t+1}^2}{c_0\eta_{t}}+\frac{(1-\beta_{t+1})}{c_0 \eta_{t}} 
     \leq L_1 \eta_t + \frac{\eta_{t}^2}{c_0} - \frac{ \beta_{t+1}}{2 c_0\eta_{t}} 
     \leq  L_1 \eta_t + \frac{\eta_{t}^2}{4L_1} - \frac{ 5 L_1 \eta_{t}}{4 } \leq 0 
\end{align*}
So, we have:
	\begin{align*}
	 \E\left[\sum_{t=1}^{T}\frac{\eta_t}{2}\|\nabla F(\w_{t})\|^{2}\right] & \E\left[\leq \left(\Gamma_{1}-\Gamma_{T+1}\right) + \sum_{t=1}^{T}  \frac{2 L_1 \beta_{t+1}^2}{c_0\eta_{t}} \right]   \\
	 &\leq \E\left[ F\left(\mathbf{w}_{1}\right)-F_{*} + \frac{1}{c_0 \eta_{0}} \sum_{i=1}^{K}\left(\Phi_{1}^{i}+\Upsilon_{1}^{i}\right) + \sum_{t=1}^{T}  \frac{2 L_1 \beta_{t+1}^2}{c_0\eta_{t}} \right]  \\
	 &\leq \E\left[ \Delta_{F} + \frac{K \left( \sigma_f^2 + \sigma_J^2\right)}{c_0 \eta_0} + 50L_1^4 \sum_{t=1}^{T} \eta_{t}^3 \right] \\
	 &\leq  \Delta_{F} + \frac{K \left( \sigma_f^2 + \sigma_J^2\right)}{c_0 \eta_0} + 50L_1^4 \ln{\left(T+1\right)} 
	\end{align*}
The last inequality holds because $\eta_{t}^3 \leq {\left( a+t\right)}^{-1} \leq {\left(t+1\right)}^{-1}$ and $\sum_{t=1}^{T} {\left(t+1\right)}^{-1} \leq \ln{\left(T+1\right)} $. Since $\eta_t$ is decreasing, we have:
	\begin{align*}
	 \E\left[ \eta_T\sum_{t=1}^{T}\|\nabla F(\w_{t})\|^{2} \right] 
	 &\leq {2\Delta_{F} + \frac{2K \left( \sigma_f^2 + \sigma_J^2\right)}{c_0 \eta_0}} + {100L_1^4 \ln{\left(T+1\right)} }.
	\end{align*}
Similar to the proof of Theorem 1 in STORM \citep{Cutkosky2019MomentumBasedVR}, denote $M=2\Delta_{F} + 2K \left( \sigma_f^2 + \sigma_J^2\right)/\left(c_0 \eta_0\right) + 100L_1^4 \ln{\left(T+1\right)} $. Using Cauchy-Schwarz inequality, we have:
\begin{align*}
     \mathbb{E}\left[\sqrt{\sum_{t=1}^{T}\left\|\nabla F\left(\boldsymbol{\w}_{t}\right)\right\|^{2}}\right]^{2} \leq \mathbb{E}\left[1 / \eta_{T}\right] \mathbb{E}\left[\eta_{T} \sum_{t=1}^{T}\left\|\nabla F\left(\boldsymbol{\w}_{t}\right)\right\|^{2}\right]  \leq \mathbb{E}\left[\frac{M}{\eta_{T}}\right] \leq \mathbb{E}\left[M\left(a+T\right)^{1 / 3}\right],
\end{align*}
which indicate that
\begin{align*}
    \mathbb{E}\left[\sqrt{\sum_{t=1}^{T}\left\|\nabla F\left(\w_{t}\right)\right\|^{2}}\right] \leq \sqrt{M}\left(a+T \right)^{1 / 6}.
\end{align*}
Finally, using Cauchy-Schwarz we have $
\sum_{t=1}^{T}\left\|\nabla F\left(\w_{t}\right)\right\| / T \leq \sqrt{\sum_{t=1}^{T}\left\|\nabla F\left(\w_{t}\right)\right\|^{2}} / \sqrt{T}$ so that:
\begin{align*}
    \mathbb{E}\left[\sum_{t=1}^{T} \frac{\left\|\nabla F\left(\boldsymbol{\w}_{t}\right)\right\|}{T}\right] \leq \frac{\sqrt{M}\left(a+T\right)^{1 / 6}}{\sqrt{T}} \leq \mathcal{O} \left(\frac{a^{1 / 6} \sqrt{ M}}{\sqrt{T}}+\frac{1}{T^{1 / 3}}\right) = \mathcal{O}\left(\frac{1}{T^{1 / 3}} \right),
\end{align*}
where the last inequality is due to $(a+b)^{1 / 3} \leq a^{1 / 3}+b^{1 / 3}$. So, we can achieve the stationary point with $T=\mathcal{O}\left(1 / \epsilon^{3}\right)$.
%%%%%%%%%%%%%%%%%%%%%%%%%%%%%%%%%%%%%%%%%%%%%%%%%%%%%%%%%%%%%%%%%%%%%%%%%%%%%%%%%%%%%%%%%%%%%%%

\section{Proof of Lemma~\ref{lem:222}}
We first calculate the cumulative variance. Denote that $\Upsilon_t^i \coloneqq \Norm{\u_t^i - f_i(\u_t^{i-1})}^2$ and  $\Phi_t^i \coloneqq \Norm{\v_t^i - \nabla f_i(\u_t^{i-1})}^2$. According to Lemma~\ref{lem:5}, we have: 
\begin{align*}
    &\sum_{i=1}^{K} \E\left[\Upsilon_{t+1}^i + \Phi_{t+1}^i \right] \leq (1-\beta)\sum_{i=1}^{K}\E \left[\Upsilon_{t}^i + \Phi_{t}^i\right]+ 2\beta^2 K\left( \sigma_J^2+\sigma_f^2 \right) + 2\left(\LL_J^2+\LL_f^2\right)\sum_{i=1}^{K} \E\left[\Norm{\u_{t+1}^{i-1} - \u_{t}^{i-1}}^2\right] 
\end{align*}
Then, applying Lemma~\ref{lem:6} and setting $\beta \leq \frac{1}{2 L_1}$, we get:
\begin{align*}
   \sum_{i=1}^{K} \E\left[\Upsilon_{t+1}^i + \Phi_{t+1}^i \right]
    \leq &(1-\beta)\sum_{i=1}^{K}\E \left[\Upsilon_{t}^i + \Phi_{t}^i\right]+ \beta^2 L_1 + L_1 \left(\E \left[\eta^2 \Norm{\v_t}^2\right] + \beta^2\  + \beta^2 \sum_{i=1}^{K}  \E\left[\Upsilon_t^i\right]\right)\\ 
    \leq &(1-\frac{\beta}{2})\sum_{i=1}^{K}\E \left[\Upsilon_{t}^i + \Phi_{t}^i\right]+ 2\beta^2 L_1 + L_1 \E \left[\eta^2 \Norm{\v_t}^2\right] 
\end{align*}
By summing up and rearranging, we have:
\begin{align*}
    \frac{\beta}{2}\sum_{t=1}^{T}\sum_{i=1}^{K} \E\left[\Upsilon_{t}^i + \Phi_{t}^i \right] \leq \sum_{i=1}^{K} \E\left[\Upsilon_{1}^i + \Phi_{1}^i \right] + 2\beta^2 L_1 T + L_1 \eta^2 \sum_{t=1}^{T} \E \left[ \Norm{\v_t}^2\right] 
\end{align*}
Denote that $\Gamma_{t}=F(\w_t)+\frac{1}{c_0 \eta} \sum_{i=1}^{K}\left(\Phi_{t}^{i}+\Upsilon_{t}^{i}\right)$. We then try to bound the term $ \sum_{t=1}^T \E\left[\Norm{\v_t}^2\right]$, which is very similar to the proof of \cref{thm:main}:  
	\begin{align*}
	     &\sum_{t=1}^T \E\left[\Norm{\v_t}^2\right]
     \\ \leq	& 2\sum_{t=1}^T \E\left[\Norm{\nabla F(\w_t)}^2\right] +2 \sum_{t=1}^T \E\left[\Norm{\nabla F(\w_t) - \v_t}^2\right] \\
     =	& \frac{4}{\eta} \left(\frac{\eta}{2}\sum_{t=1}^T \E\left[\Norm{\nabla F(\w_t)}^2\right] +\frac{\eta}{2} \sum_{t=1}^T \E\left[\Norm{\nabla F(\w_t) - \v_t}^2\right] \right)\\
	 \leq& \E\left[\frac{4}{\eta}\left(\Gamma_{1}-\Gamma_{T+1}\right) + 4\sum_{t=1}^{T} \|\nabla F(\w_{t})-\v_{t}\|^{2}- \sum_{t=1}^{T} \left\|\v_{t}\right\|^{2} + \frac{4}{\eta}\sum_{t=1}^{T} \sum_{i=1}^{K} \left( \frac{1}{c_0 \eta} \left( \Phi_{t+1}^{i}+\Upsilon_{t+1}^{i}\right) - \frac{1}{c_0 \eta} \left( \Phi_{t}^{i}+\Upsilon_{t}^{i}\right) \right) \right]\\
	 \leq& \E\left[\frac{4}{\eta}\left(\Gamma_{1}-\Gamma_{T+1}\right) + \frac{8L_1}{\eta} \sum_{t=1}^{T} \sum_{i=1}^{K} \eta \left(\Phi_{t}^{i}+\Upsilon_{t}^{i}\right)- \sum_{t=1}^{T} \left\|\v_{t}\right\|^{2} - \sum_{t=1}^{T} \sum_{i=1}^{K}  \frac{4}{c_0 \eta^2} \left( \Phi_{t}^{i}+\Upsilon_{t}^{i}\right) \right. \\
	 & \quad\quad \left. +\sum_{t=1}^{T} \sum_{i=1}^{K}  \frac{4(1-\beta)}{c_0 \eta^2} \left( \Phi_{t}^{i}+\Upsilon_{t}^{i}\right) + \sum_{t=1}^{T}   \frac{8\beta^2 K \left(\sigma_J^2+\sigma_f^2\right)}{c_0 \eta^2}  + \sum_{t=1}^{T} \sum_{i=1}^{K}  \frac{8\left(\LL_J^2+\LL_f^2\right)}{c_0 \eta^2}  \Norm{\u_{t+1}^{i-1} - \u_{t}^{i-1}}^2 \right]\\
	 \leq& \E\left[\frac{4}{\eta}\left(\Gamma_{1}-\Gamma_{T+1}\right) + \frac{8L_1}{\eta} \sum_{t=1}^{T} \sum_{i=1}^{K} \eta \left(\Phi_{t}^{i}+\Upsilon_{t}^{i}\right)- \sum_{t=1}^{T} \left\|\v_{t}\right\|^{2} - \sum_{t=1}^{T} \sum_{i=1}^{K}  \frac{4}{c_0 \eta^2} \left( \Phi_{t}^{i}+\Upsilon_{t}^{i}\right)  \right.\\
	 & \quad\quad \left. +\sum_{t=1}^{T} \sum_{i=1}^{K} \frac{4(1-\beta)}{c_0 \eta^2} \left( \Phi_{t}^{i}+\Upsilon_{t}^{i}\right) + \sum_{t=1}^{T}   \frac{8\beta^2 K \left(\sigma_J^2+\sigma_f^2\right)}{c_0 \eta^2} +\sum_{t=1}^{T}  \frac{4L_1}{c_0 }\Norm{\v_t}^2 + \sum_{t=1}^{T}  \frac{4L_1 \beta^2}{c_0\eta^2} + \sum_{t=1}^{T} \sum_{i=1}^{K}   \frac{L_1 \beta^2}{c_0\eta} \Upsilon_{t}^{i} \right]\\
	 \leq& \E\left[ \frac{4}{\eta}\left(\Gamma_{1}-\Gamma_{T+1}\right) + \sum_{t=1}^{T} \sum_{i=1}^{K} \left(8L_1  -\frac{2\beta}{c_0 \eta^2}\right) \left(\Phi_{t}^{i}+\Upsilon_{t}^{i}\right)- \sum_{t=1}^{T} \left(1-\frac{4 L_1}{c_0 }\right)\left\|\v_{t}\right\|^{2} + \frac{8 L_1 \beta^2 T}{c_0\eta^2} \right]
	\end{align*}
By setting $c_0 = 4L_1$ and $c = 4L_1c_0 = 16 L_1^2$, where $L_1$ is defined in the proof of \cref{thm:main}, we get:
	\begin{align*}
	     \sum_{t=1}^T \E\left[\Norm{\v_t}^2\right] \leq \E\left[ \frac{4}{\eta}\left(\Gamma_{1}-\Gamma_{T+1}\right)  + \frac{8 L_1 \beta^2 T}{c_0\eta^2}\right] \leq \E\left[ \frac{4}{\eta}\left(F(\w_1)-F_{*}\right)  +  \frac{4\sum_{i=1}^{K}\left(\Phi_{1}^{i}+\Upsilon_{1}^{i}\right) }{c_0 \eta^2}+ \frac{8 L_1 \beta^2 T}{c_0\eta^2} \right]
	\end{align*}
When $\tau$ is sampled randomly from $\{1, \ldots, T\}$, we have:
\begin{align*}
    &\sum_{i=1}^K\E\left[\Norm{f_i(\u_{\tau}^{i-1}) - \u_{\tau}^i}^2\right] + \sum_{i=1}^K\E\left[\Norm{\v_{\tau}^i - \nabla f_i(\u_{\tau}^{i-1})}^2\right] \\  
    = &\frac{1}{T} \left(\sum_{i=1}^K\sum_{t=1}^{T}\E\left[\Norm{f_i(\u_t^{i-1}) - \u_t^i}^2\right] + \sum_{i=1}^K\sum_{t=1}^{T}\E\left[\Norm{\v_t^i - \nabla f_i(\u_t^{i-1})}^2\right] \right)\\
    \leq& \E\left[\frac{2}{\beta T} {\sum_{i=1}^{K}\left(\Phi_{1}^{i}+\Upsilon_{1}^{i}\right)}  + 4\beta L_1 + \frac{8L_1\eta}{\beta T}\left(F(\w_1)-F_{*}\right)  + \frac{8L_1\sum_{i=1}^{K}\left(\Phi_{1}^{i}+\Upsilon_{1}^{i}\right) }{c_0 \beta T}+ \frac{16 L_1^2 \beta}{c_0}\right]\\
    \leq& \E\left[\frac{4}{\beta T} {\sum_{i=1}^{K}\left(\Phi_{1}^{i}+\Upsilon_{1}^{i}\right)}  + 8\beta L_1 + \frac{8L_1\eta}{\beta T}\left(F(\w_1)-F_{*}\right) \right]
\end{align*}

Let's consider the first stage, in which $\sum_{i=1}^K \E\left[\Norm{f_i(\u_1^{i-1}) - \u_1^i}^2 +\Norm{\v_1^i - \nabla f_i(\u_1^{i-1})}^2\right] \leq K\left( \sigma_{f}^2+\sigma_{J}^2\right)$. Note that in below the numerical subscripts denote the stage index $\{1, \ldots, S\}$. With the parameter $\beta_{1} =\frac{1}{2L_1}$, and  $T_{1} = \max \left\{ 2 \sqrt{2L_1}\Delta_{F}, 4 L_1 K\left( \sigma_{f}^2+\sigma_{J}^2\right)\right\}$, we have:
\begin{align*}
     &\sum_{i=1}^K\E\left[\Norm{f_i(\u_{1}^{i-1}) - \u_{1}^i}^2\right] + \sum_{i=1}^K\E\left[\Norm{\v_{1}^i - \nabla f_i(\u_{1}^{i-1})}^2\right] 
    \\  \leq &\frac{4}{\beta_{1} T_{1}} K \left( \sigma_{f}^2+\sigma_{J}^2\right)  + 8\beta_{1}L_1   + \frac{8L_1\eta\Delta_{F}}{\beta_1 T_1} 
    \\  \leq &8 L_1
    \\ = & \mu\epsilon_{1}
\end{align*}

Also, according to the PL condition, we have:
\begin{align*}
    \begin{aligned} 
    \E\left[ F\left(\w_{1}\right)-F_{*}\right] &\leq  \E\left[\frac{\left\|\nabla F\left(\w_{1}\right)\right\|^{2}}{2 \mu}\right] \\
    &\leq \frac{4L_1\Delta_{F}}{\mu T_{1} \sqrt{\beta_{1}}}+\frac{4L_1 K \left( \sigma_{f}^2+\sigma_{J}^2\right)}{\mu  \beta_{1} T_{1}}+\frac{8\beta_{1} L^2}{\mu}  \\
    &\leq  \frac{8L_1}{\mu} \\
    &= \epsilon_{1}
    \end{aligned}
\end{align*}

Starting form the second stage, we would prove by induction. Suppose at stage $s-1$, we have $F\left(\w_{s-1}\right)-F_{*} \leq \epsilon_{s-1}$, and $\sum_{i=1}^K\E\left[\Norm{f_i(\u_{s-1}^{i-1}) - \u_{s-1}^i}^2\right] + \sum_{i=1}^K\E\left[\Norm{\v_{s-1}^i - \nabla f_i(\u_{s-1}^{i-1})}^2\right] \leq \mu\epsilon_{s-1}$. Then at $s$ stage, with $\beta_{s}=\frac{\mu \epsilon_{s-1}}{64L_1^2}$ and $T_{s}=\max \left\{\frac{2048L_1^3}{\mu \epsilon_{s-1}}, \frac{256L_1^2}{u^{3 / 2} \sqrt{\epsilon_{s-1}}}\right\}$, we have:
\begin{align*}
     &\sum_{i=1}^K\E\left[\Norm{f_i(\u_{s}^{i-1}) - \u_{s}^i}^2\right] + \sum_{i=1}^K\E\left[\Norm{\v_{s}^i - \nabla f_i(\u_{s}^{i-1})}^2\right] 
    \\ \leq& \frac{4\mu \epsilon_{s-1}}{\beta_s T_s}   + 8\beta_s L_1 + \frac{8L_1\eta_s \epsilon_{s-1}}{\beta_s T_s}
    \\  \leq& \frac{\mu\epsilon_{s-1}}{8 } + \frac{\mu\epsilon_{s-1}}{8} +  \frac{\mu\epsilon_{s-1}}{8} 
    \\ \leq & \frac{\mu\epsilon_{s-1}}{2} 
    \\ = & \mu\epsilon_{s}
\end{align*}
The output of stage $s$ also satisfies the following guarantee:
\begin{align*}
    \E\left[ F\left(\w_{s}\right)-F_{*} \right] &\leq \E\left[ \frac{\left\|\nabla F\left(\w_{s}\right)\right\|^{2}}{2 \mu} \right] \leq \frac{1}{2\mu T_s}\sum_{t=1}^{T_s} \E\left[\Norm{\nabla F(\w_t)}^2\right]\\
    & \leq \frac{\left(\Gamma_{1}-\Gamma_{T+1}\right)}{\mu \eta_s T_s} + \frac{2L_1 \beta_s^2}{\mu c_o \eta_s^2}\\
    & \leq \frac{4L_1\left(F\left(\w_{s-1}\right)-F_{*}\right)}{\mu \sqrt{\beta_s} T_s} + \frac{4L_1 \mu \epsilon_{s-1}}{\mu \beta_s T_s} + \frac{8L_1^2 \beta_s}{\mu}\\
    & \leq \frac{\epsilon_{s-1}}{8 } + \frac{\epsilon_{s-1}}{8} +  \frac{\epsilon_{s-1}}{8}  \\
    &\leq \frac{\epsilon_{s-1}}{2} = \epsilon_{s}
\end{align*}
Combining two cases, we have proved that $\sum_{i=1}^K\E\left[\Norm{f_i(\u_{s}^{i-1}) - \u_{s}^i}^2\right] + \sum_{i=1}^K\E\left[\Norm{\v_{s}^i - \nabla f_i(\u_{s-1}^{i-1})}^2\right] \leq \mu\epsilon_{s}$ and $F\left(\w_{s}\right)-F_{*} \leq \epsilon_{s}$, where $\epsilon_{s} = \epsilon_{1} / 2^{s-1}$. Let $L_2 = 64L_1^2$, we have the results in \cref{lem:222}
%%%%%%%%%%%%%%%%%%%%%%%%%%%%%%%%%%%%%%%%%%%%%%%%%

\section{Proof of Theorem~\ref{thm:main2}}
We have proved that $F\left(\w_{s}\right)-F_{*} \leq \epsilon_{s}$ in Lemma~\ref{lem:222}. That is to say, $F\left(\w_{S}\right)-F_{*} \leq \epsilon$ when $S = \log _{2}\left(\frac{2\epsilon_{1}}{\epsilon}\right) = \log _{2}\left(\frac{L}{\mu \epsilon}\right)$, and the sample complexity until this stage is computed as:
\begin{align*}
T_{1}+\sum_{s=2}^{S} T_{s} &=\mathcal{O}\left(\sum_{s=2}^{S} T_{s}\right) \\ &=\mathcal{O}\left(\sum_{s=2}^{S} \frac{L_2^{3/2}}{\mu \epsilon_{s-1}}+\frac{L_2}{\mu^{3 / 2} \sqrt{\epsilon_{s-1}}}\right) \\ & \leq \mathcal{O}\left(\frac{L_2^{3/2}}{\mu \epsilon}, \frac{L_2}{\mu^{3 / 2} \sqrt{\epsilon}}\right) \stackrel{\mu \geq \epsilon}{\leq} \mathcal{O}\left(\frac{1}{\mu \epsilon}\right)
\end{align*}
%%%%%%%%%%%%%%%%%%%%%%%%%%%%%%%%%%%%%%%%%%%%%%%%%

\section{Proof of Theorem~\ref{thm:main3}}
When $F(\w)$ is convex, we define $\hat{F}(\w) = F(\w) + \frac{\mu}{2}\|\w\|^2$. We know that $\hat{F}(\w)$ is $\mu$-strongly convex, which implies $\mu$-PL condition. In Theorem~\ref{thm:main2}, we have proved: for any $\delta > 0$,  there exist $T=\mathcal{O}\left(\frac{1}{\mu \delta}\right)$ such that $\hat{F}(\w_{T}) - \hat{F}_{*} \leq \delta$.  It indicates that $F(\w_{T}) - F_{*} \leq \delta + \frac{\mu}{2} \|\w_{*}\|^2 -  \frac{\mu}{2} \|\w_{T}\|^2 \leq \delta + \frac{\mu}{2}D$. For any $\epsilon > 0$, if we choose $\mu = \frac{\epsilon}{D}$ and $\delta = \frac{\epsilon}{2}$, we get $F(\w_{T}) - F_{*} \leq \epsilon$, for some $T=\mathcal{O}\left(\frac{2D}{\epsilon^2}\right)=\mathcal{O}\left(\frac{1}{\epsilon^2}\right)$.

%%%%%%%%%%%%%%%%%%%%%%%%%%%%%%%%%%%%%%%%%%%%%%%%%
\section{Proof of Theorem~\ref{thm:T3}}
Note that since the norm of estimated gradient $\left\| \v_{t} \right\|$ is bounded, the value of the learning rate scaling factor $\mathbf{c}=1 /\left(\sqrt{\mathbf{h}_{t}}+\delta\right)$ presented in (\ref{rule2}) is also upper bounded and lower bounded, which can be presented as $c_{l} \leq\left\|\mathbf{c}\right\|_{\infty} \leq c_{u}$. With this property, We can introduce the variant version of Lemma~\ref{lem:3}, which can also be found in Lemma 3 in \citep{guo2022stochastic}.
\begin{lemma}\label{lem:starter_}
    For $\mathbf{w}_{t+1}=\mathbf{w}_{t}-\tilde{\eta}_{t} \mathbf{v}_{t}$, with $\eta_t c_{l} \leq \tilde{\eta}_{t} \leq \eta_t c_{u} $ and $ \eta_t L_F\leq {c_l}/{2 c_u^2 }$, we have following guarantee:
	\begin{align*}
		F(\w_{t+1}) \leq F(\w_t) + \frac{\eta_t c_{u}}{2}\Norm{\nabla F(\w_t) - \v_t}^2 - \frac{\eta_t c_{l}}{2}\Norm{\nabla F(\w_t)}^2 - \frac{\eta_t c_{l}}{4}\Norm{\v_t}^2.
	\end{align*}
\end{lemma}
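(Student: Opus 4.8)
The plan is to follow the proof of Lemma~\ref{lem:3} but to account for the fact that the adaptive preconditioner acts coordinatewise rather than as a scalar. Write $\tilde{\eta}_t = \eta_t \mathbf{c}_t$ with $\mathbf{c}_t = 1/(\sqrt{\h_t}+\delta)$, so the update reads $\w_{t+1} = \w_t - \eta_t\,\mathbf{c}_t \odot \v_t$ (a Hadamard product). As noted just before the lemma, the projection on the Jacobian estimators keeps $\Norm{\v_t}$ bounded, hence $\h_t$ is bounded and every coordinate satisfies $c_l \leq c_{t,j} \leq c_u$; this coordinatewise two-sided bound is exactly the content of $\eta_t c_l \leq \tilde{\eta}_t \leq \eta_t c_u$. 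Since $F$ is $L_F$-smooth by Lemma~\ref{lem:2}, the starting point is the descent expansion
\begin{align*}
F(\w_{t+1}) \leq F(\w_t) - \eta_t\langle \nabla F(\w_t),\, \mathbf{c}_t\odot\v_t\rangle + \frac{\eta_t^2 L_F}{2}\Norm{\mathbf{c}_t\odot\v_t}^2.
\end{align*}

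The next step handles the cross term coordinatewise. Writing $g_{t,j}$ and $v_{t,j}$ for the $j$-th entries of $\nabla F(\w_t)$ and $\v_t$, the elementary identity $-ab = \tfrac12(a-b)^2 - \tfrac12 a^2 - \tfrac12 b^2$ applied per coordinate gives
\begin{align*}
-\langle \nabla F(\w_t),\, \mathbf{c}_t\odot\v_t\rangle = \sum_j c_{t,j}\Bigl(\tfrac12(g_{t,j}-v_{t,j})^2 - \tfrac12 g_{t,j}^2 - \tfrac12 v_{t,j}^2\Bigr).
\end{align*}
Here is the crucial maneuver, and the one place the argument genuinely departs from Lemma~\ref{lem:3}: the bounds on $c_{t,j}$ must be used in opposite directions according to sign. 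I upper-bound the nonnegative error term using $c_{t,j}\leq c_u$ and lower-bound the two subtracted terms using $c_{t,j}\geq c_l$, which after multiplying by $\eta_t$ produces $\frac{\eta_t c_u}{2}\Norm{\nabla F(\w_t)-\v_t}^2 - \frac{\eta_t c_l}{2}\Norm{\nabla F(\w_t)}^2 - \frac{\eta_t c_l}{2}\Norm{\v_t}^2$.

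Finally, the second-order term is controlled by $c_{t,j}\leq c_u$ once more, giving $\frac{\eta_t^2 L_F}{2}\Norm{\mathbf{c}_t\odot\v_t}^2 \leq \frac{\eta_t^2 L_F c_u^2}{2}\Norm{\v_t}^2$, and this is absorbed into the available slack of the $-\frac{\eta_t c_l}{2}\Norm{\v_t}^2$ term. The step-size hypothesis $\eta_t L_F \leq c_l/(2c_u^2)$ is precisely what forces $\frac{\eta_t^2 L_F c_u^2}{2} \leq \frac{\eta_t c_l}{4}$, leaving the stated $-\frac{\eta_t c_l}{4}\Norm{\v_t}^2$ and completing the inequality. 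The only real obstacle is bookkeeping the preconditioner bounds in the correct direction together with the $L_F c_u^2$ inflation of the quadratic term; once the step size is chosen to dominate that inflation, the remaining algebra is identical to the scalar case already treated in Lemma~\ref{lem:3}.
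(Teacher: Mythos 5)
Your proof is correct, and it reaches the paper's inequality by the same underlying mechanism: the $L_F$-smoothness expansion, the completing-the-square identity $-ab = \tfrac12(a-b)^2 - \tfrac12 a^2 - \tfrac12 b^2$ applied to the cross term, the sign-aware use of the two-sided preconditioner bound ($c_u$ on the nonnegative error term, $c_l$ on the two subtracted terms), and the step-size condition $\eta_t L_F \leq c_l/(2c_u^2)$ to absorb the quadratic term $\tfrac{\eta_t^2 L_F c_u^2}{2}\Norm{\v_t}^2 \leq \tfrac{\eta_t c_l}{4}\Norm{\v_t}^2$. The one genuine difference is scope: the paper's proof treats $\tilde{\eta}_t$ as a \emph{scalar} throughout — it writes $-\tilde{\eta}_t\langle\nabla F(\w_t),\v_t\rangle$ and adds/subtracts $\tfrac{\tilde{\eta}_t}{2}\Norm{\nabla F(\w_t)}^2$ and $\tfrac{\tilde{\eta}_t}{2}\Norm{\v_t}^2$ as scalar multiples, so the completing-the-square step factors cleanly, and the bounds $\eta_t c_l \leq \tilde{\eta}_t \leq \eta_t c_u$ are invoked only at the last line. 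Your version instead keeps the preconditioner $\mathbf{c}_t = 1/(\sqrt{\h_t}+\delta)$ as a vector and applies the identity per coordinate before splitting the bounds by sign. This buys something real: the actual update (\ref{rule1}) is a Hadamard-product (coordinatewise) update, for which the paper's scalar factorization $\langle\nabla F(\w_t), \mathbf{c}_t\odot\v_t\rangle = \tilde{\eta}_t\langle\nabla F(\w_t),\v_t\rangle$ does not literally hold, so your coordinatewise bookkeeping is the argument one actually needs to cover the adaptive rules in (\ref{rule2}) (and is what the cited analysis of \citet{guo2022stochastic} effectively does), whereas the paper's proof is valid verbatim only under the scalar reading of $\tilde{\eta}_t$ in the lemma statement. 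The constants and the final inequality are identical in both treatments.
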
	

%%%%%%%%%%%%%%%%%%%%%%%%%%%%%%%%%%%%%%%%%%%%%%%%%
\begin{proof}
Due to the smoothness of function $F$ and the definition of $\w_{t+1}$, we have:

\begin{align*}
F\left(\w_{t+1}\right) & \leq F\left(\w_{t}\right)+\left\langle\nabla F(\w_{t}), \w_{t+1}-\w_{t}\right\rangle+\frac{L_{F}}{2}\left\|\w_{t+1}-\w_{t}\right\|^{2} \\ &=F(\w_{t})-\tilde{\eta}_{t}\left\langle\nabla F(\w_{t}), \v_{t}\right\rangle+\frac{\tilde{\eta}_{t}^{2} L_{F}}{2}\|\v_{t}\|^{2}\\ & \leq F(\w_{t})-\tilde{\eta}_{t}\langle\nabla F(\w_{t}), \v_{t}\rangle+\frac{\tilde{\eta}_{t}}{2}\|\nabla F(\w_{t})\|^{2}+\frac{\tilde{\eta}_{t}}{2}\left\|\v_{t}\right\|^{2}-\frac{\tilde{\eta}_{t}}{2}\|\nabla F(\w_{t})\|^{2}-\frac{\tilde{\eta}_{t}}{2}\left\|\v_{t}\right\|^{2} +\frac{\eta_t c_{l}}{4}\left\|\v_{t}\right\|^{2}\\ &=F(\w_{t})+\frac{\tilde{\eta}_{t}}{2}\|\nabla F(\w_{t})-\v_{t}\|^{2}-\frac{\tilde{\eta}_{t}}{2}\|\nabla F(\w_{t})\|^{2}-\frac{\eta_t c_{l}}{4}\left\|\v_{t}\right\|^{2} \\
&\leq F(\w_{t})+\frac{\eta_t c_{u}}{2}\|\nabla F(\w_{t})-\v_{t}\|^{2}-\frac{\eta_t c_{l}}{2}\|\nabla F(\w_{t})\|^{2}-\frac{\eta_t c_{l}}{4}\left\|\v_{t}\right\|^{2} 
\end{align*}
\end{proof}
We set
\begin{align*}
    L_{3}=\max \left\{1, c_{u} K L_{f}^{2(K-1)}, c_{u} K L_{F}^{2}, 2 K\left(\sigma_{J}^{2}+\sigma_{f}^{2}\right),2\left(1+\frac{1}{c_l}\right)\left(L_{f}^{2}+L_{f}^{2}\right)\left(1+2 K+2 K \sigma_{f}^{2}\right) \sum_{i=1}^{K}\left(2 \LL_{f}^{2}\right)^{i-1}\right\}.
\end{align*}
Similar to the proof of \cref{thm:main}, we denote that $\Gamma_{t}=F(\w_t)+\frac{1}{c_0 \eta_{t-1}} \sum_{i=1}^{K}\left(\Phi_{t}^{i}+\Upsilon_{t}^{i}\right)$, $\Phi_t^i \coloneqq \Norm{\v_t^i - \nabla f_i(\u_t^{i-1})}^2$ and $\Upsilon_t^i \coloneqq \E\left[\Norm{\u_t^i - f_i(\u_t^{i-1})}^2\right]$. Based on Lemma~\ref{lem:starter_}, we have:
	\begin{align*}
	    &\E\left[\Gamma_{t+1}-\Gamma_{t} \right]\\
	    = &\E\left[F(\w_{t+1}) - F(\w_t)+ \frac{1}{c_0 \eta_{t}} \sum_{i=1}^{K}\left(\Phi_{t+1}^{i}+\Upsilon_{t+1}^{i}\right) - \frac{1}{c_0 \eta_{t-1}} \sum_{i=1}^{K}\left(\Phi_{t}^{i}+\Upsilon_{t}^{i}\right)\right]\\
	     \leq &\E\left[\frac{\eta_t c_{u}}{2}\|\nabla F(\w_{t})-\v_{t}\|^{2}-\frac{\eta_t c_{l}}{2}\|\nabla F(\w_{t})\|^{2}-\frac{\eta_t c_{l}}{4}\left\|\v_{t}\right\|^{2} + \frac{1}{c_0 \eta_{t}} \sum_{i=1}^{K}\left(\Phi_{t+1}^{i}+\Upsilon_{t+1}^{i}\right) - \frac{1}{c_0 \eta_{t-1}} \sum_{i=1}^{K}\left(\Phi_{t}^{i}+\Upsilon_{t}^{i}\right)\right]
	\end{align*}
	By summing up and rearranging, we have:
	\begin{align*}
	 &\E\left[\sum_{t=1}^{T}\frac{\eta_t c_{l}}{2}\|\nabla F(\w_{t})\|^{2}\right] \\
	 \leq&\E\left[ \left(\Gamma_{1}-\Gamma_{T+1}\right) + \sum_{t=1}^{T} \frac{\eta_t c_{u}}{2}\|\nabla F(\w_{t})-\v_{t}\|^{2}- \sum_{t=1}^{T} \frac{\eta_t c_{l}}{4}\left\|\v_{t}\right\|^{2}\right. \\ &\quad\left.+ \sum_{t=1}^{T} \sum_{i=1}^{K} \left( \frac{1}{c_0 \eta_{t}} \left( \Phi_{t+1}^{i}+\Upsilon_{t+1}^{i}\right) - \frac{1}{c_0 \eta_{t-1}} \left( \Phi_{t}^{i}+\Upsilon_{t}^{i}\right) \right)\right]\\
	 \leq& \E\left[\left(\Gamma_{1}-\Gamma_{T+1}\right) + \sum_{t=1}^{T} \sum_{i=1}^{K} \left(L_3 \eta_t -  \frac{1}{c_0 \eta_{t-1}}+\frac{L_3 \beta_{t+1}^2}{c_0\eta_{t}}+\frac{(1-\beta_{t+1})}{c_0 \eta_{t}}\right) \left(\Phi_{t}^{i}+\Upsilon_{t}^{i}\right)- \sum_{t=1}^{T} \left(\frac{c_{l}}{4}-\frac{L_3 c_{l}}{c_0 }\right)\eta_t\left\|\v_{t}\right\|^{2}\right.\\
	 &\quad \left.+ \sum_{t=1}^{T}  \frac{2 L_3 \beta_{t+1}^2}{c_0\eta_{t}}  \right] 
	\end{align*}
Set $c_0 = 4L_3$, $\eta_t = \left(a+t \right)^{-1/3}$ and $c=10L_{3}^2$, $a={\left( 20L_3^3\right)}^{3/2}$, we have:
	\begin{align*}
	 \E\left[\sum_{t=1}^{T}\frac{\eta_t c_{l}}{2}\|\nabla F(\w_{t})\|^{2}\right] \leq  \Delta_{F} + \frac{K \left( \sigma_f^2 + \sigma_J^2\right)}{c_0 \eta_0} + 50L_3^4 \ln{\left(T+1\right)}
	\end{align*}
Since $\eta_t$ is decreasing, we have:
	\begin{align*}
	 \eta_T\sum_{t=1}^{T}\|\nabla F(\w_{t})\|^{2}  
	 \leq \frac{1}{c_l}\left(2\Delta_{F} +\frac{2K \left( \sigma_f^2 + \sigma_J^2\right)}{c_0 \eta_0} + 100L_3^4 \ln{\left(T+1\right)} \right).
	\end{align*}
Very similar to the proof of \cref{thm:main}, set $M=\frac{1}{c_l}\left(2\Delta_{F} +\frac{2K \left( \sigma_f^2 + \sigma_J^2\right)}{c_0 \eta_0} + 100L_3^4 \ln{\left(T+1\right)} \right)$ and we have:
\begin{align*}
    \mathbb{E}\left[\sum_{t=1}^{T} \frac{\left\|\nabla F\left(\boldsymbol{\w}_{t}\right)\right\|}{T}\right] \leq \frac{\sqrt{M}\left(a+T\right)^{1 / 6}}{\sqrt{T}} \leq \mathcal{O} \left(\frac{a^{1 / 6} \sqrt{ M}}{\sqrt{T}}+\frac{1}{T^{1 / 3}}\right) = \mathcal{O}\left(\frac{1}{T^{1 / 3}} \right).
\end{align*}
So, Adaptive SMVR enjoys the same $\mathcal{O}\left(1 / \epsilon^{3}\right)$ sample complexity as the SMVR method.
\end{document}